\documentclass{article}

\usepackage{adjustbox}
\usepackage{longtable}

\usepackage{arxiv}

\usepackage{bm}
\usepackage{amssymb,amsthm,mathtools}

\DeclareMathOperator{\Arg}{Arg}
\DeclareMathOperator{\RePart}{Re}
\DeclareMathOperator{\ImPart}{Im}
\DeclareMathOperator{\argu}{arg}
\DeclareMathOperator{\cotan}{cot}

\newtheorem{theorem}{Theorem}
\newtheorem*{theorem*}{Theorem}
\newtheorem{lemma}{Lemma}
\newtheorem{claim}{Claim}
\newtheorem{conjecture}[theorem]{Conjecture}

\usepackage[utf8]{inputenc} 
\usepackage[T1]{fontenc}    
\usepackage{url}            
\usepackage{booktabs}       
\usepackage{amsfonts}       
\usepackage{nicefrac}       
\usepackage{microtype}      
\usepackage{lipsum}
\usepackage{multirow}
\usepackage{xcolor}
\usepackage{enumitem}
\usepackage{graphicx}
\usepackage{orcidlink}
\usepackage{comment}
\usepackage[most]{tcolorbox}
\usepackage{array}
\newcolumntype{R}[1]{>{\raggedleft\arraybackslash}p{#1}}
\newcolumntype{L}[1]{>{\raggedright\arraybackslash}p{#1}}

\usepackage[numbers]{natbib}
\usepackage{caption}
\usepackage{amsmath}
\usepackage{rotating}
\usepackage{cleveref}

\makeatletter
\renewcommand{\footnotesize}{\@setfontsize\footnotesize{8pt}{10pt}}
\makeatother

\makeatletter
\renewcommand{\scriptsize}{\@setfontsize\scriptsize{7pt}{9pt}}
\makeatother

\definecolor{VUB_blauw}{rgb}{0.1529, 0.2667, 0.5529}
\usepackage{hyperref}       
\hypersetup{
    colorlinks,%
    citecolor=VUB_blauw,%
    filecolor=VUB_blauw,%
    linkcolor=VUB_blauw,%
    urlcolor=VUB_blauw
}

\newcommand{\customCor}[1]{%
  \includegraphics[height=1em]{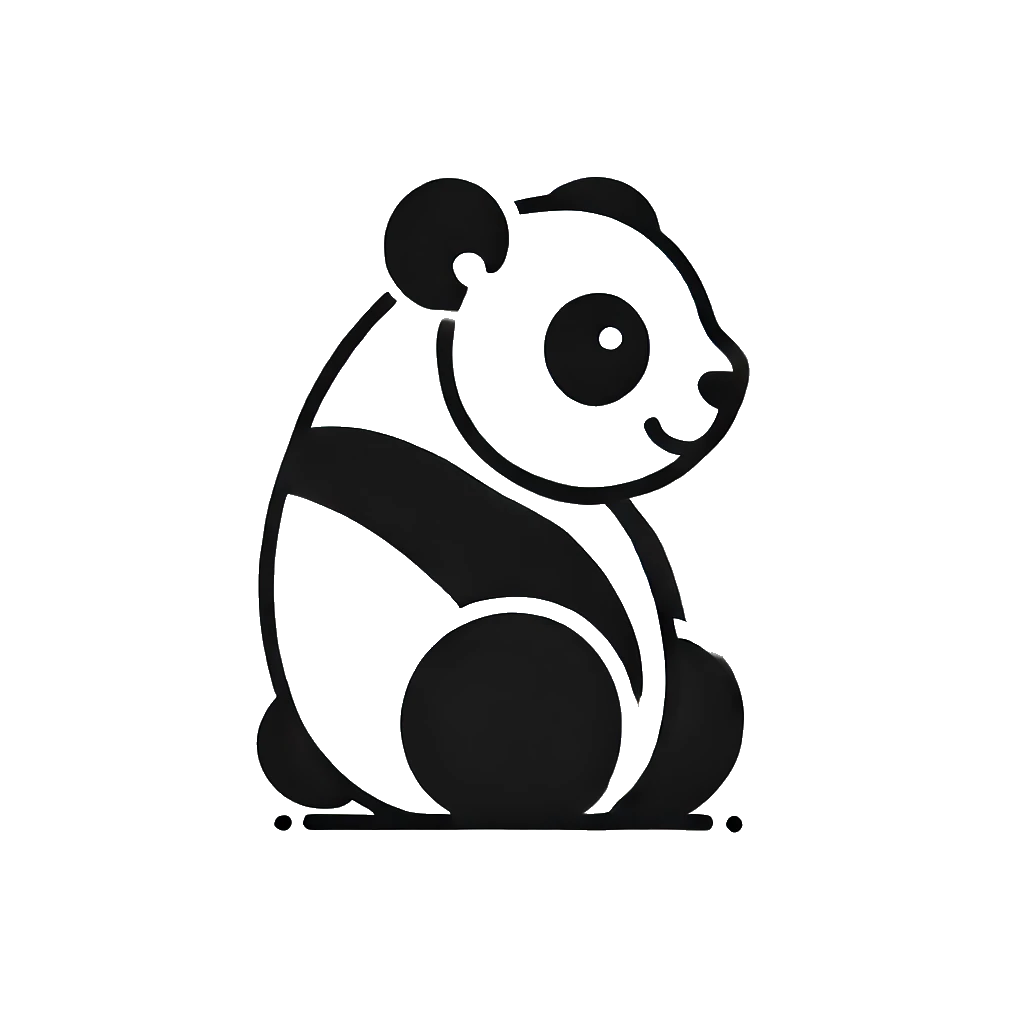} #1%
}

\AddToHook{shipout/background}{%
  \ifnum\value{page}=1 
    \pagenumbering{Roman} 
    \setcounter{page}{1} 
  \fi
  \ifnum\value{page}=2 
  \pagenumbering{arabic} 
  \setcounter{page}{2} 
  \fi
}

\newtcolorbox{versionzerobox}{
  enhanced,
  breakable,
  colback=red!2,
  colframe=red!40!black,
  boxrule=0.6pt,
  arc=2mm,
  left=2mm,right=2mm,top=1.5mm,bottom=1.5mm,
  title=\textbf{First version},
  fonttitle=\bfseries,
}

\newtcolorbox{versiononebox}{
  enhanced,
  breakable,
  colback=gray!3,
  colframe=black,
  boxrule=0.6pt,
  arc=2mm,
  left=2mm,right=2mm,top=1.5mm,bottom=1.5mm,
  title=\textbf{Second version},
  fonttitle=\bfseries,
}

\newtcolorbox{versiontwobox}{
  enhanced,
  breakable,
  colback=blue!2,
  colframe=blue!50!black,
  boxrule=0.6pt,
  arc=2mm,
  left=2mm,right=2mm,top=1.5mm,bottom=1.5mm,
  title=\textbf{Third version},
  fonttitle=\bfseries,
}

\newtcolorbox{versionthreebox}{
  enhanced,
  breakable,
  colback=green!2,
  colframe=green!40!black,
  boxrule=0.6pt,
  arc=2mm,
  left=2mm,right=2mm,top=1.5mm,bottom=1.5mm,
  title=\textbf{Final version},
  fonttitle=\bfseries,
}

\title{Early Evidence of Vibe-Proving with Consumer LLMs:\\
A Case Study on Spectral Region Characterization\\
with ChatGPT-5.2 (Thinking)}
\runningtitle{}

\author{
  Brecht Verbeken\textsuperscript{1,2,\customCor{ }}\\
  \orcidlinkc{0000-0002-7506-3298}\\
  \And
  Brando Vagenende\textsuperscript{1} \\ 
  \orcidlinkc{0000-0002-0573-8093} \\
  \And
  Marie-Anne Guerry\textsuperscript{1}\\
  \orcidlinkc{0000-0001-5842-8905}\\
  \AND
  Andres Algaba\textsuperscript{1,2} \\ 
  \orcidlinkc{0000-0002-0532-3066} \\
  \And
  Vincent Ginis\textsuperscript{1,2,3} \\ 
  \orcidlinkc{0000-0003-0063-9608} \\
  \and
  \textsuperscript{1}Data Analytics Lab, Vrije Universiteit Brussel, Pleinlaan 5, 1050 Brussel, Belgium \\ 
  \textsuperscript{2}imec-SMIT, Vrije Universiteit Brussel, Pleinlaan 9, 1050 Brussels, Belgium \\ 
  \textsuperscript{3}School of Engineering and Applied Sciences, Harvard University, Cambridge, Massachusetts 02138, USA
}

\begin{document}

\maketitle
\renewcommand{\thefootnote}{}
\footnotetext{\includegraphics[height=1em]{panda2.png} Corresponding author: \href{mailto:brecht.verbeken@vub.be}{brecht.verbeken@vub.be} \\}
\renewcommand{\thefootnote}{\arabic{footnote}}
\thispagestyle{plain}

\begin{abstract}
Large Language Models (LLMs) are increasingly used as scientific copilots, but evidence on their role in research-level mathematics remains limited, especially for workflows accessible to individual researchers. We present early evidence for vibe-proving with a consumer subscription LLM through an auditable case study that resolves Conjecture~20 of Ran and Teng (2024) on the exact nonreal spectral region of a $4$-cycle row-stochastic nonnegative matrix family. We analyze seven shareable ChatGPT-5.2 (Thinking) threads and four versioned proof drafts, documenting an iterative pipeline of generate, referee, and repair. The model is most useful for high-level proof search, while human experts remain essential for correctness-critical closure. The final theorem provides necessary and sufficient region conditions and explicit boundary attainment constructions. Beyond the mathematical result, we contribute a process-level characterization of where LLM assistance materially helps and where verification bottlenecks persist, with implications for evaluation of AI-assisted research workflows and for designing human-in-the-loop theorem proving systems.
\end{abstract}

\keywords{AI-assisted mathematics \and Karpelevich region \and large language models \and stochastic matrices}

\section{Introduction}

Vibe-coding, where programmers steer Large Language Models (LLMs) to generate software through high-level natural-language intent rather than line-by-line specification, has rapidly transformed code generation~\cite{sarkar2025vibecoding,peng2023impactcopilot,cui2024productivitycopilot}. Underpinning this shift is the sustained capability growth of frontier LLMs across successive systems and evaluations~\cite{bubeck2023sparks,dai2025pre,liu2025deepseek,singh2025openai}. As these models grow more capable, they are increasingly deployed as scientific collaborators: generating candidate research ideas in controlled studies~\cite{guevara2026single,si2024can}, acting as agentic research assistants that plan, search, and iterate over literature and experiments~\cite{baek2025researchagent,ghafarollahi2025sciagents,gottweis2025towards,lu2024ai,novikov2025alphaevolve,schmidgall2025agent,yamada2025ai}, and accelerating evidence synthesis and literature screening~\cite{delgado2025transforming,lehr2024chatgpt,skarlinski2024language}. At field scale, LLM tooling is associated with measurable shifts in scientific production and communication~\cite{kusumegi2025scientific}, but it remains unclear how reliably these systems support auditable, research-level mathematical workflows under consumer-access conditions~\cite{bubeck2025early,feng2026semiautonomousmathematicsdiscoverygemini}.

Beyond collaboration, LLMs increasingly execute end-to-end scientific workflows across domains. In the social sciences, models serve as simulated subjects and predictive surrogates~\cite{hewitt2024predicting,manning2024automated,ziems2024can}. In the natural sciences, agentic systems couple LLM reasoning to domain tools, autonomously running chemistry campaigns, designing biomolecules, and accelerating experimental protocols~\cite{boiko2023autonomous,m2024augmenting,zheng2025large,swanson2025virtual,rizvi2025scaling,smith2026labs,woodruff2026accelerating}, alongside work on test-time discovery and automated search~\cite{ttt-discover2026,vitvitskyi2026mining}. Mathematics provides an unusually crisp stress test for whether vibe-coding's paradigm can extend beyond software: correctness is, in principle, checkable, and performance on benchmark suites has improved rapidly~\cite{hendrycks2021measuring,ballon2026benchmarks,cobbe2021training,mirzadeh2024gsm,gao2024omni,glazer2024frontiermath,phan2025humanity}. Recent systems reach gold-medalist-level performance on Olympiad geometry~\cite{trinh2024solving,chervonyi2025gold}, and a gold-medal score on International Mathematical Olympiad (IMO) problems~\cite{castelvecchi2025ai}. At the research frontier, program search and large-scale exploration pipelines generate new mathematical artefacts~\cite{romera2024mathematical,swirszcz2025advancing,georgiev2025mathematical}, case studies document substantive human--LLM co-working on active problems~\cite{bubeck2025early,bryan2026motivic}, and community infrastructures curate open problem sets and evaluation tasks~\cite{optimization-constants-repo,unsolvedmath2026,erdos-problems}, including initiatives that explicitly invite LLM attempts~\cite{abouzaid2026proof}. Semi-autonomous pipelines have even been applied at scale to open mathematics problems~\cite{feng2026semiautonomousmathematicsdiscoverygemini,barreto2026irrationality}. Together, these results indicate that end-to-end AI-assisted mathematical discovery is becoming practically feasible.

These Erd\H{o}s-scale and benchmark-scale results show what ``specialized'' LLM systems can achieve, but they leave a practical question unresolved: can individual researchers produce mathematically substantive progress with consumer-access models and explicit human verification? This question is central for vibe-proving, i.e., sustained, iterative theorem development through conversational interaction rather than one-shot outputs. Unlike vibe-coding, where software has a runtime oracle, vibe-proving faces an intrinsic verification bottleneck because every logical step must be checked and a single hidden gap can invalidate the argument.

Our case study began when we encountered Conjecture~20 of~\cite{ran2024nonnegative} on nonreal eigenvalue regions of a structured $4\times4$ row-stochastic family of matrices. The project did not start from a pre-registered protocol, but from ``vibing'' with ChatGPT-5.1 (Thinking) on an active research problem. During that initial exploratory phase, we did not systematically retain our earliest interactions. Only after recognizing that the reasoning approach suggested by the LLM was not only interesting but also potentially viable, did we transition from informal experimentation to systematic documentation and structured iteration. Consequently, we reconstructed these initial prompts using ChatGPT-5.2 (Thinking). All subsequent documented interactions were conducted directly with ChatGPT-5.2 (Thinking). We therefore ground the paper in auditable artefacts: seven share links (\hyperref[transcript:1]{Transcript 1}--\hyperref[transcript:7]{Transcript 7}) and four proof drafts (Appendix Versions \ref{sec:version_0}--\ref{sec:version_3}), and we explicitly mark where early prompt content is reconstructed rather than preserved verbatim.

Within this artefact set, we show that a stable interaction pattern of iterating between \textit{generate, referee, and repair} resolves the conjecture into a complete characterization theorem with explicit boundary attainment. The model is strongest at proposing global structure (reparameterization and extremal strategy), whereas human effort is concentrated on correctness-critical verification (quadrant control, branch tracking, endpoint admissibility, and dense algebraic expansions). This framing complements recent mathematics results obtained with specialized or (semi-)autonomous stacks, including dedicated theorem-discovery pipelines~\cite{feng2026towards,feng2026semiautonomousmathematicsdiscoverygemini}, as well as domain-specific outputs in arithmetic geometry, combinatorics, and robust control~\cite{feng2026eigenweights,feng2026arithmetic,lee2026lower,asadi2026strongly}. Our contribution is a subscription-level, conversation-auditable account where the revision trajectory is inspectable end-to-end.

The remainder of the paper is organized as follows. Section~\ref{sec:background} introduces the mathematical setting and Conjecture~20. Section~\ref{sec:process} analyzes the interaction workflow with ChatGPT and documents the iterative proof-development process. We then discuss the division of labor between model and human verification, limitations, and implications for AI-assisted research workflows, and finally provide transcript-linked appendices and versioned proof artefacts for full auditability.

\section{Mathematical Background}
\label{sec:background}

\subsection{Spectral Regions of Stochastic Matrices}

A fundamental problem in matrix theory is to characterize which complex numbers can appear as eigenvalues of matrices with specific structural constraints. For row-stochastic matrices (nonnegative matrices with row sums equal to 1), the definitive result is the Karpelevich theorem (1951)~\cite{karpelevych1951}, which characterizes the region $K_n$ of possible eigenvalues of $n \times n$ row-stochastic matrices. For subclasses, such characterizations remain largely open.

Karpelevich's original proof is notoriously difficult~\cite{johnson2017matricial}. This difficulty has motivated significant recent work revisiting and demystifying the Karpelevich theorem~\cite{munger2024}, as well as the search for alternative approaches for subclasses of stochastic matrices where simpler characterizations might exist~\cite{guerry2022monotone,kim2022conjectures,vagenende2025eigenvalue}.

\subsection{Dimitriev and Dynkin's Trigonometric Method}

Before Karpelevich's general theorem, Dmitriev and Dynkin (1946)~\cite{dmitriev1946} developed a geometric method for characterizing the eigenvalue regions $K_n$ for small $n$. Their approach, which we call the ``trigonometric method,'' is particularly elegant:

\begin{enumerate}
    \item For an eigenvalue $\lambda$ with eigenvector $v$, derive a multiplicative constraint from the eigenvalue equations.
    \item Reparametrize using arguments ($u = \operatorname{Arg}(z + t)$) to convert the constraint into a trigonometric optimization problem.
    \item Use convexity and majorization arguments to characterize the boundary.
\end{enumerate}

Dmitriev and Dynkin successfully characterized $K_2, K_3, K_4$, and $K_5$ using this approach. Their method has the advantage of being more elementary than Karpelevich's and more amenable to adaptation for matrix subclasses with specific zero patterns.

\subsection{Conjecture 20 of Ran and Teng}

Recently, Ran and Teng (2024)~\cite{ran2024nonnegative} studied eigenvalue regions for matrices with prescribed zero patterns, obtaining complete characterizations for the $3 \times 3$ case. They also posed two conjectures for dimension four, of which Conjecture~20 concerns what we will call the ``4-cycle'' pattern.

Consider the family of $4 \times 4$ row-stochastic matrices:
\begin{equation}
\label{eq:matrix-family}
A(\alpha, \beta, \gamma, \delta) = \begin{pmatrix}
\alpha & 1-\alpha & 0 & 0 \\
0 & \beta & 1-\beta & 0 \\
0 & 0 & \gamma & 1-\gamma \\
1-\delta & 0 & 0 & \delta
\end{pmatrix}, \quad \alpha, \beta, \gamma, \delta \in (0,1).
\end{equation}

This matrix has a so called cyclic zero pattern: state 1 transitions to states 1 or 2; state 2 to 2 or 3; state 3 to 3 or 4; state 4 to 4 or 1. Such patterns arise naturally in the study of Markov chains with restricted transitions.

Ran and Teng (2024)~\cite{ran2024nonnegative} defined the region $R$ as:
\begin{equation}
R = \{z = a + bi \in K_4 : a > 0, G(a, b) > 0\}
\end{equation}
where $K_4$ is the Karpelevich region for $4 \times 4$ matrices and
\begin{equation}
G(a, b) = (b^2 + a^2 + a)^2 + 2a^2 - b^2.
\end{equation}

\setcounter{theorem}{19}
\begin{conjecture}[Ran and Teng, 2024~\cite{ran2024nonnegative}, Conjecture~20]
\label{conj:ran-teng}
For any irreducible matrix $A$ of the form (1), the non-real eigenvalues of $A$ lie in the region $R$.
\end{conjecture}

Ran and Teng (2024)~\cite{ran2024nonnegative} provided substantial numerical evidence for this conjecture and proved that the curve $G(a, b) = 0$ is indeed attained by the one-parameter subfamily $A_L(\alpha)$ where $\beta = \gamma = \delta = 0$:
\begin{equation}
A_L(\alpha) = \begin{pmatrix}
\alpha & 1-\alpha & 0 & 0 \\
0 & 0 & 1 & 0 \\
0 & 0 & 0 & 1 \\
1 & 0 & 0 & 0
\end{pmatrix}.
\end{equation}

They conjectured that this curve forms the left boundary of the non-real part of the spectral region.

\subsection{Contribution}

We observed that Ran and Teng's (2024)~\cite{ran2024nonnegative} Conjecture~20 could be used to provide a complete characterization. Using the Karpelevich theorem and results on realizing boundary arcs (specifically, that the right boundary of $K_4$ is already attained within our restricted class), we provided a target theorem statement and asked ChatGPT for a proof strategy.

The collaboration produced a proof of a theorem characterizing all non-real eigenvalues of the matrix family (1): necessary and sufficient conditions, plus explicit boundary attainment. The full proof, developed through extended dialogue with ChatGPT, will soon be available on the ArXiv. For this paper, the goal is to describe the process, not the final product.

\section{The Conversations with ChatGPT}
\label{sec:process}

\subsection{Casual testing of GPT's capabilities}

The interaction with ChatGPT did not begin with a research plan. We were ``vibing'' by casually testing the ``thinking extended'' mode on problems from our research area. During these informal explorations, we observed that the reasoning approach suggested by the model was not only interesting but also potentially viable. Because these initial tests were exploratory, we did not systematically document our earliest interactions with ChatGPT-5.1 (Thinking). To compensate for the missing conversations with ChatGPT-5.1 (Thinking), we reran the initial prompts with ChatGPT-5.2 (Thinking) in \hyperref[transcript:1]{Transcript 1} and \hyperref[transcript:2]{Transcript 2}. From this point onward, all formal testing and documented interactions were performed exclusively using ChatGPT-5.2 (Thinking), starting from \hyperref[transcript:3]{Transcript 3}. What follows is a reconstruction with the versioned drafts (Appendix Versions \ref{sec:version_0}--\ref{sec:version_3}).

\subsection{Setting Up the Theorem}

We provided the Ran and Teng paper~\cite{ran2024nonnegative} and explained our observation about the right boundary being already characterized by Karpelevich~\cite{karpelevych1951}. This resulted in the characterization Theorem below. Afterwards we provided ChatGPT with this theorem statement and prompted the model towards the direction of thinking in line with the trigonometric method of Dimitriev and Dynkin~\cite{dmitriev1946characteristic}, and asked for a proof strategy (see \hyperref[transcript:1]{Transcript 1} for the exact statement we used in the prompt):

\begin{theorem*}[Spectral region for a 4-cycle row-stochastic matrix]
Let
\[
A(\alpha,\beta,\gamma,\delta)=
\begin{pmatrix}
\alpha & 1-\alpha & 0 & 0\\
0 & \beta & 1-\beta & 0\\
0 & 0 & \gamma & 1-\gamma\\
1-\delta & 0 & 0 & \delta
\end{pmatrix},
\qquad \alpha,\beta,\gamma,\delta\in[0,1].
\]
Let $\lambda=a+ib\in\sigma(A(\alpha,\beta,\gamma,\delta))$ with $b\neq 0$, and write $b_+=|b|$.
Then:
\begin{enumerate}
\item $0\le a\le 1$;
\item $a+b_+\le 1$;
\item with
\[
G(a,b_+):=(b_+^{2}+a^{2}+a)^{2}+2a^{2}-b_+^{2},
\]
we have $G(a,b_+)\ge 0$.
\end{enumerate}

Conversely, if $b>0$ and
\[
0\le a\le 1,\qquad a+b<1,\qquad G(a,b)\ge 0,
\]
then there exist $\alpha,\beta,\gamma,\delta\in[0,1]$ such that $\lambda\in\sigma(A(\alpha,\beta,\gamma,\delta))$.
(The lower half-plane follows by conjugation.)

Moreover:
\begin{itemize}
\item the segment $C_R:\ \lambda=1-x+ix$ ($x\in[0,1]$) is attained by $\alpha=\beta=\gamma=\delta=1-x$;
\item the curve portion $C_L$ of $G(a,b)=0$ in the upper half-plane joining $i$ to $0$ is attained by
\[
A_L(\alpha)=
\begin{pmatrix}
\alpha & 1-\alpha & 0 & 0\\
0&0&1&0\\
0&0&0&1\\
1&0&0&0
\end{pmatrix},\qquad \alpha\in[0,1];
\]
\item every point strictly inside $\{b>0,\ 0\le a\le 1,\ a+b<1,\ G(a,b)\ge 0\}$ is attained.
\end{itemize}
\end{theorem*}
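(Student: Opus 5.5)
The plan is to work directly with the characteristic equation. Since $A$ is a cyclic bidiagonal pattern, $\det(\lambda I-A)=\prod_{i}(\lambda-x_i)-\prod_i(1-x_i)$ with $(x_1,x_2,x_3,x_4)=(\alpha,\beta,\gamma,\delta)$. So $\lambda$ is an eigenvalue iff
\[
\prod_{i=1}^4(\lambda-x_i)=\prod_{i=1}^4(1-x_i).
\]
For $\lambda=a+ib$ with $b>0$, write $u_i=\Arg(\lambda-x_i)\in(0,\pi)$. The right-hand side is a nonnegative real, and it is nonzero unless some $x_i=1$; the case $x_i=1$ forces $\lambda=x_i$ real and is excluded. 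Hence the equation splits into two conditions:
\[
\sum_i u_i\equiv 0 \pmod{2\pi},\qquad \prod_i \frac{|\lambda-x_i|}{1-x_i}=1.
\]
Because each $u_i\in(0,\pi)$, the argument condition forces $\sum_i u_i=2\pi$ exactly. This is the Dmitriev--Dynkin reparametrization. Inverting, $x_i=a-b\cot u_i$, and the constraint $x_i\in[0,1)$ becomes an interval constraint $u_i\in[u_{\min}(\lambda),u_{\max}(\lambda))$, where $u_{\min}=\Arg(\lambda)$ and $u_{\max}=\Arg(\lambda-1)$.

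\textbf{Necessity of 1 and 2.} For item 1, if $a<0$ then every $u_i\ge\Arg\lambda>\pi/2$, so $\sum u_i>2\pi$, a contradiction. Symmetrically, if $a>1$ then every $u_i<\pi/2$, so $\sum u_i<2\pi$. For item 2, I will simply invoke the Karpelevich theorem: in the first quadrant $K_4$ is bounded by the segment from $1$ to $i$, so $a+b_+\le 1$ holds for every $4\times 4$ stochastic matrix.

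\textbf{Necessity of 3.} Write the modulus condition as
\[
\Phi(u):=\sum_i \phi(u_i)=0,\qquad \phi(u)=\log\frac{|\lambda-x(u)|}{1-x(u)},\quad x(u)=a-b\cot u .
\]
Using $|\lambda-x(u)|=b/\sin u$ gives the explicit form $\phi(u)=\log b-\log\bigl((1-a)\sin u+b\cos u\bigr)$. I will study the extremal problem
\[
\min\ \Phi(u)\quad\text{subject to}\quad \sum_i u_i=2\pi,\ \ u_i\in[\Arg\lambda,\ \Arg(\lambda-1)].
\]
The function $\psi(u)=(1-a)\sin u+b\cos u$ is a sinusoid $R\sin(u+\varphi)$, so $-\log\psi$ is convex on the interval where $\psi>0$. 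A convex separable sum under a fixed-sum constraint is Schur-convex, so by a majorization argument its maximum over the box is attained at an extreme point of the box. That extreme point has as many coordinates as possible pinned at the endpoint $u_{\min}=\Arg\lambda$, i.e. three of the $x_i$ equal to $0$.

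This yields the matrix $A_L(\alpha)$. With $x_2=x_3=x_4=0$ and $x_1=\alpha$, eliminating $\alpha$ from $(\lambda-\alpha)\lambda^3=1-\alpha$ gives exactly the curve $G(a,b)=0$. The goal is to show that when $G(a,b)<0$, i.e. $\lambda$ lies to the left of $C_L$, even the extremal value of $\Phi$ has a strict sign, so $\Phi=0$ is impossible.

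This is where I expect the main obstacle. One must check the following carefully:
\begin{itemize}
\item the correct direction of the inequality, namely whether to minimize or maximize;
\item that $\psi>0$ on the whole admissible interval, so that the convexity claim actually holds there (quadrant control);
\item the endpoint admissibility when $u_{\max}$ is approached, which corresponds to $x\to 1$;
\item that the sign of $G$ matches the sign of the extremal value of $\Phi$, which reduces to one explicit algebraic expansion.
\end{itemize}
My first attempt at the last point would be to compute $\Phi$ at the extreme configuration directly. Writing $\lambda^3(\lambda-\alpha)=1-\alpha$ and taking real and imaginary parts should let me solve for $\alpha$ from the argument equation, substitute into the modulus equation, and factor out $G$.

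\textbf{Sufficiency and boundary attainment.} These are verifications.
\begin{itemize}
\item With all $x_i=t:=1-x$, the equation becomes $(\lambda-t)^4=(1-t)^4$, which has the root $\lambda=t+i(1-t)=1-x+ix$. This traces the segment $C_R$.
\item $A_L(\alpha)$ traces $C_L$ by the computation above, and $\alpha=0,1$ give the endpoints $i$ and $0$.
\item For interior points I will use a two-parameter family, e.g. $x_1=s$ and $x_2=x_3=x_4=t$, which interpolates between $A_L$ (at $t=0$) and the diagonal family (at $s=t$).
\end{itemize}
For fixed $\lambda$ in the region, consider the curve in the $(s,t)$-square where $\sum u_i=2\pi$. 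This curve connects a point realizing $A_L$-type behaviour to a point on the equal-parameter family. Along it, $\Phi$ takes one sign at one end, because $\lambda$ is right of $C_L$, and the opposite sign at the other end, because $\lambda$ is left of $C_R$. The intermediate value theorem then yields parameters with $\Phi=0$, i.e. $\lambda\in\sigma(A)$. The lower half-plane follows by complex conjugation, since $A$ is real.
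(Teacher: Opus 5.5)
Your architecture is the paper's: the angles $u_i=\Arg(\lambda-x_i)\in[m,M)$ with $m=\Arg\lambda$ and $M=\Arg(\lambda-1)$, the convex function $\phi$ (the paper's $F$), majorization pushing to $(2\pi-3m,m,m,m)$, i.e.\ to $A_L(\alpha)$, and an IVT for sufficiency. Your sinusoid proof of convexity and your use of Karpelevich's triangle for item~2 are fine; the paper instead reads item~2 off the Jensen value $4\log\frac{b}{1-a}$.

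There is a genuine gap in item~3, exactly at what you list as ``endpoint admissibility.'' To exclude $\Phi=0$ when $G<0$ you must show $\sup\Phi<0$ over $\{u\in[m,M)^4:\sum u_i=2\pi\}$. The minimum is the Jensen value, which only yields item~2, so the direction is maximization. The vector $(2\pi-3m,m,m,m)$ is admissible only if $2\pi-3m<M$, i.e.\ $3m+M>2\pi$. When $3m+M\le 2\pi$, take $u_4\uparrow M$ (so $x_4\uparrow 1$) and $u_1=u_2=u_3=(2\pi-u_4)/3\ge m$. Then $\phi(u_4)\to+\infty$, so $\sup\Phi=+\infty$, and every such $\lambda$ with $a+b<1$ really is an eigenvalue. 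This regime is large; for example $\lambda=\tfrac12+\tfrac{i}{10}$ lies in it. So item~3 is equivalent to the inclusion of $\{G\le 0\}$ in the tight regime $3m+M>2\pi$. Your claim that the extremal point has three coordinates pinned at $u_{\min}$ silently assumes this.

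That inclusion is the paper's Lemma~4, which it reports as its main bottleneck, and it needs a real computation with a very thin margin. The steps are:
\begin{enumerate}
\item As a monic quadratic in $s=b^2$, $G$ has discriminant $-(2a+1)(6a-1)$, so $G\le 0$ forces $0\le a\le\tfrac16$.
\item The case $a=0$ is immediate, since $m=\pi/2$.
\item For $a>0$ one shows $s_-(a)>3a^2$. Hence $b^2>3a^2$, $m>\pi/3$, and $3m-\pi\in(0,\pi/2)$.
\item Then $3m+M>2\pi$ becomes $\tan 3m>\frac{b}{1-a}$. After sign-checked cross-multiplication this is $N(a,b)=(1-4a)b^2-a^2(3-4a)>0$.
\item Finally $s_-(a)>s_0(a)=\frac{a^2(3-4a)}{1-4a}$, which after squaring reduces to $256a^6>0$.
\end{enumerate}
Only with this in hand does your $A_L$-based evaluation give $\max\Phi=\Phi(2\pi-3m,m,m,m)<0$ precisely when $G<0$. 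That evaluation also needs $\ImPart\lambda^3=b(3a^2-b^2)<0$, which $m>\pi/3$ supplies in the tight regime.

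The same case split is missing from your sufficiency argument. In the non-tight regime, the curve $u_1+3u_2=2\pi$ in your $(s,t)$ family never reaches $t=0$; it exits through $s\to 1$. There the positive sign comes from the blow-up of $\phi$, not from $G>0$. That part is easy to repair; the necessity gap is not a routine check.
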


\subsection{The Proof Strategy}

ChatGPT then proposed a trigonometric roadmap (angle parametrization plus Jensen/Karamata) inspired by~\cite{dmitriev1946characteristic,swift1972location} for the remaining constraints (see \hyperref[transcript:1]{Transcript 1}). The next prompt drew on our knowledge of the classical literature (see \hyperref[transcript:1]{Transcript 1}). The response outlined the trigonometric approach: the multiplicative constraint from the eigenvalue equations, reparametrization by arguments $u = \operatorname{Arg}(z+t)$, the convex function $F$, and the use of Jensen's inequality and Karamata's inequality for the optimization. We then requested a hypercritical review, which surfaced a fatal gap in the constrained Karamata/majorization step (see \hyperref[transcript:2]{Transcript 2}).

\subsection{The Iterations}

Appendix Version~\ref{sec:version_0} contained the right structure but had gaps (see \hyperref[transcript:3]{Transcript 3}): incorrect quadrant handling for $\arctan$, a flawed inequality in what became Lemma~2, and incomplete algebra connecting the optimization to the curve $G(a,b) = 0$.

The first correctness-critical bug was a feasibility claim in what became Lemma~2: the model's original construction fails in the near-endpoint regime $M\downarrow \pi/2$. The repair was to localize the argument near the endpoint configuration $u_4\uparrow M$, and to require only the strict interior condition needed for that regime (namely $u_1<M$) rather than uniform slack up to $M$. This ``local feasibility by continuity'' patch is documented in Appendix Version~\ref{sec:version_1} and corroborated by an independent referee pass (see \hyperref[transcript:6]{Transcript 6}).

ChatGPT helped simplify and confirm the factorization $|\lambda|^6 \geq N(a,b)$ (see \hyperref[transcript:5]{Transcript 5}). After some manipulation, it produced:
\begin{equation}
|\lambda|^6 - N(a,b) = |\lambda - 1|^2 G(a,b).
\end{equation}

A disproportionate share of the iteration time went into Lemma~4 (see \hyperref[transcript:5]{Transcript 5}), which forces the ``tight regime'' $3m+M>2\pi$ from the algebraic condition $G(a,b)\le 0$. To close this step, we wrote a fully expanded Lamport-style draft of Lemma~4 and ran a dedicated ``referee'' thread focused on finding silent branch/sign errors rather than proposing new structure. This pass largely validated the backbone of the argument (quadratic-in-$s=b^2$ analysis, the deduction $b^2>3a^2$ locating $m$ in the correct range, reduction of $3m+M>2\pi$ to a tangent inequality, and the load-bearing algebraic identity $|\lambda|^6-N(a,b)=|\lambda-1|^2G(a,b)$), but it also surfaced several obligations that earlier drafts had left implicit: (i) the discriminant condition for $G$ as a quadratic in $s$ is $\Delta\ge 0 \iff -\tfrac12\le a\le \tfrac16$, so the conclusion $a\le\tfrac16$ must be explicitly combined with the already-established constraint $a\ge 0$; (ii) the $\Arg/\arctan/\tan$ branch conventions and the monotonicity interval for $\tan$ must be stated once to justify the conversion of $3m+M>2\pi$ into a tangent inequality; and (iii) every squaring/cross-multiplication step must be preceded by a one-line positivity check (notably $1-a>0$ and denominator signs). We incorporated these items as explicit guard lines and reorganized Lemma~4 into a ``sign-check $\rightarrow$ transform $\rightarrow$ square/cross-multiply'' micro-step schedule, which made the lemma locally checkable and prevented regressions in later rewrites.

Appendix Version~\ref{sec:version_2} is the first draft where the correctness-critical obligations are made explicit and discharged end-to-end (see \hyperref[transcript:4]{Transcript 4}). Appendix Version~\ref{sec:version_3} then primarily improves auditability: it expands the load-bearing algebraic derivations (notably the factorization $|\lambda|^{6}-N(a,b)=|\lambda-1|^{2}G(a,b)$; see \hyperref[transcript:5]{Transcript 5}) and rewrites the argument in a Lamport-style dependency structure (see \hyperref[transcript:7]{Transcript 7}).

\subsection{From chronology to workflow-level lessons}
\label{sec:process_bridge}
The remainder of the paper abstracts the above chronology into workflow-level observations about verification, patch search, and division of labor. To keep Section~\ref{sec:process} chronological, we consolidate those methodological takeaways in the Discussion (Section~\ref{sec:reflections}), with explicit pointers to the corresponding transcripts and appendix versions.

\section{Discussion}
\label{sec:reflections}

\subsection{What This Case Suggests}

In this case study, a conversational workflow with ChatGPT-5.2 (Thinking) produced a complete and checkable proof of Ran and Teng (2024)~\cite{ran2024nonnegative} Conjecture~20. The LLM supplied a viable global architecture early (characterization theorem + trigonometric strategy), while the main human work consisted of identifying and discharging correctness obligations (quadrant/branch handling, endpoint cases, and long algebra).

Success in this case used the following features of the problem and workflow:

\begin{enumerate}
\item \textbf{Prior scaffolding:} The target was an explicit region characterization with known boundary information (Karpelevich right boundary and the conjectured left boundary $G(a,b)=0$), and a classical proof template existed via a Dmitriev--Dynkin-style trigonometric reduction.
\item \textbf{Two anchoring prompts:} The interaction was organized around (i) providing the target characterization theorem statement and boundary context and (ii) deriving a trigonometric/majorization-based proof strategy specialized to the 4-cycle matrix family.
\item \textbf{Untrusted outputs:} Early drafts contained correctness-critical errors (e.g., $\arctan$ quadrant handling and a false inequality regime in Lemma~2), so every model-generated step was treated as a candidate and checked before inclusion.
\item \textbf{Independent patch search:} When a gap was identified, we queried multiple independent sessions and compared proposed fixes, adopting only patches that could be verified and rejecting incompatible derivations.
\end{enumerate}

We unpack the concrete verification workflow and division of labor in Sections \ref{sec:workflow}--\ref{sec:division}.

\subsection{Workflow and Verification Strategy}
\label{sec:workflow}
Across the seven threads, we converged on a stable loop of generate, referee, repair: generate candidate steps and proof skeletons; run referee-style passes to surface hidden obligations (branch conventions, positivity checks before squaring/cross-multiplying, endpoint admissibility); and then repair targeted subclaims without rewriting unrelated parts.

Three concrete practices materially improved reliability:
\begin{itemize}
    \item \textbf{Parallel patch search.} When a gap was identified, we opened multiple independent sessions with the same obligation and treated their outputs as competing patches. We adopted only patches that were independently checkable (or where independent sessions converged), and rejected fixes that relied on silent case splits or unverifiable algebraic compression.
    \item \textbf{Fresh-session referee passes (bounded).} At checkpoints, we pasted the then-current draft into a fresh thread and asked for a gap list (missing assumptions, unjustified monotonicity/branch steps, unsafe squaring). Once major gaps were fixed, repeated ``hypercritical'' prompting showed diminishing returns (stylistic noise and occasional false positives), so we used it as a bounded diagnostic rather than an endless loop.
    \item \textbf{Regression control via versioning and dependency visibility.} Proof rewrites sometimes reintroduced errors elsewhere (notation drift, dropped sign constraints). We therefore kept versioned drafts aligned with transcript excerpts and re-checked downstream dependencies after each patch. We also frequently requested Lamport-style claim decomposition~\cite{lamport1993} to make dependencies explicit and to localize verification (see \hyperref[transcript:7]{Transcript 7}).
\end{itemize}

\subsection{Division of labor and the verification bottleneck}
\label{sec:division}
In this project, the model's highest leverage was proposing global structure (reparameterizations, extremal strategies, candidate factorizations) and producing fast symbolic manipulations; human time was dominated by correctness-critical closure (branch/quadrant tracking, sign checks before squaring, endpoint admissibility, and long expansions).

Two observations were consistent across iterations:
\begin{enumerate}
    \item \textbf{The bottleneck is narrow but expensive.} Most iteration time concentrated on a small number of obligations (notably the tight-regime step around Lemma~4 and a few load-bearing factorizations), where a single silent sign/branch error can invalidate the argument.
    \item \textbf{Mechanization would directly target the slowest work.} The dominant verification load was algebraic expansion/simplification and inequality-domain checking. These are well matched to CAS and certified inequality/interval checkers, suggesting a practical hybrid pipeline: LLMs propose patches for explicit obligations; mechanized tooling validates the algebra/inequality substeps.
\end{enumerate}

For completeness, we summarize the division of contributions (with transcript anchors):
\medskip

\noindent\textbf{Human contributions (verification and orchestration).}
\begin{itemize}
    \item Problem selection and target theorem specification (see \hyperref[transcript:1]{Transcript 1}).
    \item Error detection and obligation listing (quadrant/branch handling, Lemma~2 feasibility, endpoint issues).
    \item Orchestration of independent patch search and regression control via versioned drafts.
    \item Final correctness closure on sign/branch conditions and expanded algebra (see \hyperref[transcript:5]{Transcript 5} and \hyperref[transcript:6]{Transcript 6}).
\end{itemize}

\noindent\textbf{ChatGPT contributions (structure and candidate derivations).}
\begin{itemize}
    \item Early global roadmap (trigonometric reduction + extremal strategy; \hyperref[transcript:1]{Transcript 1}).
    \item Algebraic manipulation support, including confirming the key factorization (\hyperref[transcript:5]{Transcript 5}).
    \item Structured Lamport-style rewrite that made dependencies explicit (\hyperref[transcript:7]{Transcript 7}).
    \item Targeted referee passes that surfaced missing guard conditions when prompted appropriately (\hyperref[transcript:2]{Transcript 2}).
\end{itemize}

\subsection{Formal Proof Assistants: The Verification Alternative}

A natural question is how vibe proving with LLMs compares to formal proof assistants such as Lean, Coq, or Isabelle~\cite{coqdevteam2024coq,demoura2021lean4}. These systems offer something LLMs cannot: mechanically verified correctness relative to a small logical kernel. When a Lean proof checks, the result is certain (modulo trust in the kernel, which is small but not zero, as bugs have been found in proof assistant kernels, and complete formal verification of the kernel itself remains an open problem)~\cite{ospanov2025apollo}.

We did not attempt a formalization of the present proof. A mechanically verified proof would require encoding the trigonometric optimization, convexity/majorization steps, and the inequality-heavy algebra within a proof assistant's libraries. This would yield stronger guarantees, but it would also change the artefact (proof script rather than narrative proof) and require additional engineering beyond the scope of this case study.

Our output is a conventional, human-checked proof accompanied by an auditable chat trail.

\subsection{Limitations}

\begin{enumerate}
\item \textbf{Novelty:} The proof strategy stayed within a classical Dmitriev--Dynkin/Jensen--Karamata template. The model's main contribution was assembling and adapting this template to the 4-cycle matrix family rather than introducing a fundamentally new method.
\item \textbf{Error modes:} We observed recurrent correctness failures in early drafts, including inverse-trigonometric branch/quadrant mistakes, missing sign conditions before squaring, and ``compressed'' algebra that omitted required intermediate steps.
\item \textbf{Verification bottleneck:} The slowest component was discharging a small set of technical obligations (notably Lemma~4's inequality/expansion chain and endpoint admissibility), rather than generating candidate derivations.
\item \textbf{Scope:} Evidence here is limited to one structured spectral-region characterization problem with strong prior scaffolding (known conjectured boundary and known trigonometric reduction). We do not test problems lacking such structure.
\end{enumerate}

\subsection{Recommendations for Practitioners}

The workflow-level observations in Sections \ref{sec:workflow}--\ref{sec:division} suggest the following checklist for vibe proving:
\begin{enumerate}
\item \textbf{Start from scaffolding.} Prefer problems where you can state a concrete target theorem and where a recognizable reduction/template exists.
\item \textbf{Turn critique into obligations.} Convert ``this seems wrong'' into an explicit obligation list (domains, branch conventions, positivity checks before squaring, endpoints).
\item \textbf{Use parallel patch search.} Treat independent sessions as competing patch generators; adopt only patches that you can verify locally.
\item \textbf{Control regressions.} Keep versioned drafts and re-check downstream dependencies after each patch; prefer Lamport-style decomposition to expose dependencies.
\item \textbf{Mechanize the bottleneck.} Offload expansions and inequality-domain checks to CAS / certified checkers; reserve human time for conceptual choices and boundary cases.
\end{enumerate}

\clearpage

\section*{Acknowledgements}
This research was supported by funding from the Flemish Government under the ``Onderzoeksprogramma Artifici\"ele Intelligentie (AI) Vlaanderen'' program.
Andres Algaba acknowledges support from the Francqui Foundation (Belgium) through a Francqui Start-Up Grant and a fellowship from the Research Foundation Flanders (FWO) under Grant No.1286924N. 
Vincent Ginis acknowledges support from Research Foundation Flanders under Grant No.G032822N and G0K9322N. 

\section*{Author contributions}
All authors collaboratively conceived the main idea of the study. Brecht Verbeken co-constructed the proof with the assistance of ChatGPT-5.1 (Thinking) and ChatGPT-5.2 (Thinking). Brecht Verbeken, Brando Vagenende, and Marie-Anne Guerry reviewed and validated the final proof. Andres Algaba and Brecht Verbeken drafted the manuscript. All authors collaboratively revised the manuscript and provided critical feedback.

\section*{Data and code availability}
\subsection*{Conversations}

\begin{itemize}[leftmargin=*,itemsep=2pt,topsep=2pt]
\item \hyperref[transcript:1]{Transcript 1} (rerun): setup prompt + roadmap; includes follow-up algebra sanity checks (via ChatGPT's Python tool).
\item \hyperref[transcript:2]{Transcript 2} (rerun): hypercritical review; identifies the majorization/Karamata gap.
\item \hyperref[transcript:3]{Transcript 3}:referee pass on the first full proof attempt (Appendix Version \ref{sec:version_0}) and early gap list (branch/quadrant handling; Lemma~2 feasibility near \(M\); missing algebra).
\item \hyperref[transcript:4]{Transcript 4} corresponds to the first essentially complete draft (Appendix Version \ref{sec:version_2}).
\item \hyperref[transcript:5]{Transcript 5}: referee pass on Lemma~4 (tight-regime step); checks branch/sign conditions and validates the factorization $|\lambda|^{6}-N(a,b)=|\lambda-1|^{2}G(a,b)$.
\item \hyperref[transcript:6]{Transcript 6}: independent fresh-session critique corroborating the branch/quadrant and Lemma~2 issues/fix.
\item \hyperref[transcript:7]{Transcript 7}: Lamport-style rewrite used for the final exposition (Appendix Version \ref{sec:version_3}).
\end{itemize}

Note that the following link template can be used to inspect meta-data from each conversation: \url{https://chatgpt.com/backend-api/share/<share-id>}.

\begin{itemize}[leftmargin=7.5pt]
\item \textbf{Transcript 1 (rerun):}\label{transcript:1} \url{https://chatgpt.com/share/699464b2-c81c-8002-9ba2-bad952e6414a}
\item \textbf{Transcript 2 (rerun):}\label{transcript:2} \url{https://chatgpt.com/share/69946540-e684-8002-a8ca-feb45e4da7be}
\item \textbf{Transcript 3:}\label{transcript:3} 
\url{https://chatgpt.com/share/697bb2d1-0418-8007-8f2a-474e3e6430fa}
\item \textbf{Transcript 4:}\label{transcript:4} \url{https://chatgpt.com/share/697bb2bd-5450-8007-9115-92589d95e1ba}
\item \textbf{Transcript 5:}\label{transcript:5} \url{https://chatgpt.com/share/697bb29a-c190-8007-b37e-87b390e9d9ff}
\item \textbf{Transcript 6:}\label{transcript:6} \url{https://chatgpt.com/share/697bb288-010c-8007-972b-ec676e9a16f7}
\item \textbf{Transcript 7:}\label{transcript:7} \url{https://chatgpt.com/share/697bb2ab-f83c-8007-a8fc-9439dd3d7488}
\end{itemize}

\clearpage
\bibliographystyle{unsrt} 
\bibliography{references}

\clearpage
Here, we show the subsequent versions of the proof.

\appendix
\makeatletter
\@addtoreset{equation}{section}
\makeatother
\renewcommand{\theequation}{\arabic{equation}}
\renewcommand{\theHequation}{app.\thesection.\arabic{equation}}
\section{Appendix}
\label{sec:version_0}

\begin{versionzerobox}
\begin{theorem}[Precise spectral region for a 4-cycle row-stochastic matrix]
Let
\[
A(\alpha,\beta,\gamma,\delta)=
\begin{pmatrix}
\alpha & 1-\alpha & 0 & 0\\
0 & \beta & 1-\beta & 0\\
0 & 0 & \gamma & 1-\gamma\\
1-\delta & 0 & 0 & \delta
\end{pmatrix},
\qquad \alpha,\beta,\gamma,\delta\in[0,1).
\]
Let $\lambda=a+ib\in\sigma(A(\alpha,\beta,\gamma,\delta))$ with $b\neq 0$, and write $b_+=|b|$.
Then:

\begin{enumerate}
\item $0\le a\le 1$.
\item $a+b_+\le 1$.
\item with
\[
G(a,b_+):=(b_+^{2}+a^{2}+a)^{2}+2a^{2}-b_+^{2},
\]
we have $G(a,b_+)\ge 0$.
\end{enumerate}

Conversely, if $b>0$ and
\[
0\le a\le 1,\qquad a+b<1,\qquad G(a,b)>0,
\]
then there exist $\alpha,\beta,\gamma,\delta\in[0,1)$ such that
$\lambda\in\sigma(A(\alpha,\beta,\gamma,\delta))$.
(The lower half-plane follows by conjugation.)

Moreover:
\begin{itemize}
\item the segment $C_R:\ \lambda=1-x+ix$ ($x\in[0,1]$) is attained by $\alpha=\beta=\gamma=\delta=1-x$;
\item the curve portion $C_L$ of $G(a,b)=0$ in the upper half-plane joining $i$ to $0$ is attained by
\[
A_L(\alpha)=
\begin{pmatrix}
\alpha & 1-\alpha & 0 & 0\\
0&0&1&0\\
0&0&0&1\\
1&0&0&0
\end{pmatrix},\qquad \alpha\in[0,1);
\]
\item every point strictly inside $\{b>0,\ 0\le a\le 1,\ a+b<1,\ G(a,b)>0\}$ is attained.
\end{itemize}
\end{theorem}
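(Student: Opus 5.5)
The characteristic polynomial of $A$ is $\prod_{j}(\alpha_j-\lambda)-\prod_j(1-\alpha_j)=0$, where $(\alpha_1,\dots,\alpha_4)=(\alpha,\beta,\gamma,\delta)$. Hence every eigenvalue satisfies
\[
\prod_{j=1}^4 (\lambda-\alpha_j)=\prod_{j=1}^4(1-\alpha_j).
\]
This is the multiplicative constraint of the trigonometric method. The right-hand side is real and nonnegative, so the arguments $u_j=\Arg(\lambda-\alpha_j)$ sum to a multiple of $2\pi$. Fix $\lambda=a+ib$ with $b>0$. Each $u_j$ then lies in $(0,\pi)$ and increases with $\alpha_j\in[0,1)$, ranging over $[m,M)$ with $m=\Arg\lambda$ and $M=\Arg(\lambda-1)$. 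The sum lies in $(0,4\pi)$, so the phase condition forces $\sum u_j=2\pi$.

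I would prove the necessary conditions in the following order.
\begin{enumerate}
\item \textbf{$a\ge 0$.} If $a<0$, then $m>\pi/2$ and $\sum u_j\ge 4m>2\pi$, which is impossible.
\item \textbf{$a+b\le 1$.} This is inherited from $K_4$, whose right boundary arc is the segment $C_R$ (the Karpelevich theorem). Alternatively, it follows directly from $\sum u_j=2\pi$ together with $u_j<M$, which gives $4M>2\pi$, combined with the modulus equation below.
\item \textbf{$G\ge0$.} Express the modulus condition through the angles. By the sine law in the triangle $0,1,\lambda$, one has $|\lambda-\alpha_j|/(1-\alpha_j)=\sin m/\sin(u_j-m)$ and similar relations. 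Take logarithms: the constraint becomes $\sum_j F(u_j)=0$ for an explicit function $F$ of $u\in[m,M)$, built from $\log\sin(u-m)$ and $\log\sin u$. Establish convexity of $F$ on the relevant interval.
\item \textbf{Majorization.} Feasibility requires $\min \sum F(u_j)\le 0\le \sup\sum F(u_j)$ subject to $\sum u_j=2\pi$ and $u_j\in[m,M]$. By Karamata, the extremes of the convex sum occur at the equal configuration $u_j=\pi/2$ and at the most spread configuration: three angles at $m$ and one at $2\pi-3m$, when $3m+M>2\pi$. The equal configuration gives the Karpelevich segment. The spread configuration is exactly the family $A_L(\alpha)$ with $\beta=\gamma=\delta=0$.
\item \textbf{Identifying the extremal condition with $G$.} Compute the spread extremum explicitly: it is $|\lambda|^3|\lambda-\alpha|=1-\alpha$ with $\alpha$ determined by the angle. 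Eliminating $\alpha$ yields $|\lambda|^6\ge N(a,b)$, and the factorization $|\lambda|^6-N(a,b)=|\lambda-1|^2G(a,b)$ converts this into $G\ge0$.
\item \textbf{The tight regime.} Show that $G(a,b)\le 0$ forces $3m+M>2\pi$, so the spread configuration is admissible exactly where it matters. View $G$ as a quadratic in $s=b^2$; its discriminant is nonnegative iff $-\frac12\le a\le\frac16$. Deduce $b^2>3a^2$, so $m>\pi/3$. Then convert $3m+M>2\pi$ into a tangent inequality, fixing the $\arctan$ branches once and checking signs before every squaring.
\end{enumerate}

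For the converse, I would use continuity and connectedness. The map $(\alpha,\beta,\gamma,\delta)\mapsto$ spectrum is continuous on the connected parameter set. The boundary curves are attained explicitly: $C_R$ by the constant diagonal $\alpha=\beta=\gamma=\delta=1-x$, via $(\lambda-(1-x))^4=x^4$ and $\lambda=1-x+ix$; and $C_L$ by $A_L(\alpha)$, whose characteristic equation $\lambda^3(\lambda-\alpha)=1-\alpha$ traces $G=0$ from $i$ (at $\alpha=0$) to $0$ (as $\alpha\to1$). Interior points are attained by a homotopy argument. For fixed interior $\lambda$, the angle-sum constraint defines a connected set of admissible angle vectors, and $\sum F(u_j)$ takes a negative value at one extreme and a positive value at the other, so the intermediate value theorem gives a zero. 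This requires the local feasibility of pushing $u_4\uparrow M$ while keeping $u_1<M$.

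I expect the main obstacle to be step 6 together with the constrained Karamata step. Box constraints can make the spread configuration infeasible near $M\downarrow\pi/2$. I would handle this by localizing near the endpoint configuration and proving the tangent inequality by careful sign-checked algebra.
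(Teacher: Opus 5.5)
Your architecture matches the paper's proof essentially step for step:
\begin{itemize}
\item the product constraint $\prod_j(\lambda-\alpha_j)=\prod_j(1-\alpha_j)$;
\item angles $u_j=\Arg(\lambda-\alpha_j)\in[m,M)$ summing to $2\pi$;
\item Jensen at $u_j\equiv\pi/2$;
\item the Karamata maximiser $(U,m,m,m)$ with $U=2\pi-3m$ when $3m+M>2\pi$, and $\sup\Psi=+\infty$ otherwise;
\item the lemma $G\le 0\Rightarrow 3m+M>2\pi$;
\item the factorization $|\lambda|^6-N=|\lambda-1|^2G$;
\item the intermediate value theorem on the convex set $P$.
\end{itemize}
However, the formula that defines $F$ in step~3 is wrong, and everything downstream rests on it. The relation $|\lambda-\alpha_j|/(1-\alpha_j)=\sin m/\sin(u_j-m)$ already fails at $\lambda=i$, $\alpha_j=0$: the left side is $1$ and the right side is $\sin(\pi/2)/\sin 0$. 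What you wrote is the sine law in the triangle $0,\alpha_j,\lambda$, which computes $|\lambda-\alpha_j|/\alpha_j$.

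An $F$ built from $\log\sin(u-m)$ would make $\sum_j F(u_j)=0$ encode $\prod_j|\lambda-\alpha_j|=\prod_j\alpha_j$ instead of the modulus equation. It would also put the singularity at $u=m$ instead of $u=M$. The spread configuration $(U,m,m,m)$ would then have value $+\infty$, and steps~4--5 would say nothing. Your later steps, namely the value $|\lambda|^3|\lambda-\alpha|=1-\alpha$ and pushing $u_4\uparrow M$, silently use the correct function. So step~3 as written is inconsistent with the rest of your plan.

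The correct triangle is $\alpha_j,1,\lambda$, with angles $u_j$, $\pi-M$, $M-u_j$. It gives $|\lambda-\alpha_j|/(1-\alpha_j)=\sin M/\sin(M-u_j)$, hence
\[
F(u)=\log\sin M-\log\sin(M-u),\qquad u\in[m,M).
\]
This coincides with the paper's $F(u)=\log(y\csc u)-\log(y\cot u-x)$, because $y\csc u=|\lambda-1|\sin M/\sin u$ and $y\cot u-x=|\lambda-1|\sin(M-u)/\sin u$. It is in fact a more economical form than the paper's:
\begin{itemize}
\item $F''(u)=\csc^2(M-u)>0$ replaces the paper's quotient-rule computation.
\item $F(m)=\log|\lambda|$ is just the sine law in the triangle $0,1,\lambda$.
\item $F(\pi/2)=\log(-\tan M)=\log\bigl(b/(1-a)\bigr)$.
\item In the tight regime, $\max_P\Psi\ge 0$ reads $|\lambda|^3\sin M\ge\sin(3m+M)$. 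After multiplying by $|\lambda|^3|\lambda-1|$, this is directly $b|\lambda|^6\ge\ImPart(\lambda^4-\lambda^3)=b\,N(a,b)$ with $N=4a^3-3a^2-4ab^2+b^2$, with no $\csc U$, $\cot U$ substitution needed.
\end{itemize}

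Two smaller fixes remain. In step~4 the box must be $[m,M)$, not $[m,M]$, since $\alpha_j=1$ is excluded and $F$ blows up at $M$. The alternative you offer in step~2 ($4M>2\pi$) only yields $a<1$. The bound $a+b\le 1$ comes from the Jensen value $4F(\pi/2)\le 0$, which you do invoke in step~4.
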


\begin{proof}
\textbf{1) Eigenvalue equation.}
Let $v\neq 0$ satisfy $Av=\lambda v$. Row-by-row,
\[
(\lambda-\alpha)v_1=(1-\alpha)v_2,\quad
(\lambda-\beta)v_2=(1-\beta)v_3,\quad
(\lambda-\gamma)v_3=(1-\gamma)v_4,\quad
(\lambda-\delta)v_4=(1-\delta)v_1.
\]
Since $1-\alpha,\dots,1-\delta>0$, if any $v_k=0$ then all $v_j=0$, contradiction. Hence
$v_1v_2v_3v_4\neq 0$, and multiplying gives
\begin{equation}
(\lambda-\alpha)(\lambda-\beta)(\lambda-\gamma)(\lambda-\delta)
=(1-\alpha)(1-\beta)(1-\gamma)(1-\delta).
\label{eq:1_v1}
\end{equation}
Put $t_1=1-\alpha,\dots,t_4=1-\delta\in(0,1]$ and $z=\lambda-1$. Then
$\lambda-(1-t_k)=z+t_k$, so Equation \ref{eq:1_v1} becomes
\begin{equation}
(z+t_1)(z+t_2)(z+t_3)(z+t_4)=t_1t_2t_3t_4.
\label{eq:2_v1}
\end{equation}

\medskip
\textbf{2) Angle parametrization and the function $F$.}
Assume $b>0$ (the case $b<0$ follows by conjugation since $A$ is real). Write
\[
z=x+iy,\qquad x=a-1,\ y=b.
\]
For $t>0$, $z+t$ lies in the upper half-plane, so define $u(t)=\argu(z+t)\in(0,\pi)$.
Elementary trigonometry gives the inverse relation
\begin{equation}
t=t(u)=y\cot u-x,\qquad |z+t(u)|=y\csc u.
\label{eq:3_v1}
\end{equation}
Define
\begin{equation}
F(u):=\log|z+t(u)|-\log t(u)=\log(y\csc u)-\log(y\cot u-x).
\label{eq:4_v1}
\end{equation}
Let
\begin{equation}
m:=\argu(z+1)=\argu(\lambda),\qquad M:=\argu(z)=\argu(\lambda-1).
\label{eq:5_v1}
\end{equation}
As $t\downarrow 0$, $u(t)\uparrow M$; as $t\uparrow 1$, $u(t)\downarrow m$. Thus
\begin{equation}
t\in(0,1]\quad\Longleftrightarrow\quad u\in[m,M).
\label{eq:6_v1}
\end{equation}

\begin{lemma}[Log-modulus/argument reformulation]
A nonreal $\lambda$ satisfies Equation \ref{eq:2_v1} for some $t_k\in(0,1]$ iff there exist $u_k\in[m,M)$ such that
\begin{equation}
u_1+u_2+u_3+u_4=2\pi,
\label{eq:7_v1}
\end{equation}
\begin{equation}
F(u_1)+F(u_2)+F(u_3)+F(u_4)=0.
\label{eq:8_v1}
\end{equation}
\end{lemma}

\begin{proof}
Write $z+t_k=|z+t_k|e^{iu_k}$ with $u_k\in(0,\pi)$. Taking arguments in Equation \ref{eq:2_v1} gives
$\sum u_k\equiv 0\pmod{2\pi}$; since $\sum u_k\in(0,4\pi)$ this forces $\sum u_k=2\pi$, i.e. Equation \ref{eq:7_v1}.
Taking moduli and logs gives Equation \ref{eq:8_v1}. Condition Equation \ref{eq:6_v1} is exactly $t_k\in(0,1]$.
Conversely, Equation \ref{eq:7_v1} and Equation \ref{eq:8_v1} give equality of arguments and moduli, hence Equation \ref{eq:2_v1}.
\end{proof}

Define
\[
\mathcal P:=\Big\{(u_1,\dots,u_4)\in[m,M)^4:\ u_1+\cdots+u_4=2\pi\Big\},
\qquad
\Psi(u_1,\dots,u_4):=\sum_{k=1}^4F(u_k).
\]
Then $\lambda$ is a nonreal eigenvalue of some $A(\alpha,\beta,\gamma,\delta)$ iff
$\mathcal P\neq\emptyset$ and $0\in\Psi(\mathcal P)$.

\medskip
\textbf{3) Necessarily $0\le a\le 1$ (indeed $a<1$ if $b\ne 0$).}
Every $(u_1,\dots,u_4)\in\mathcal P$ has average $\pi/2$, so $\pi/2\in[m,M)$, i.e.
\begin{equation}
m\le \frac{\pi}{2} < M.
\label{eq:9_v1}
\end{equation}
For $b>0$, $m\le\pi/2\iff a\ge 0$. Also $M>\pi/2\iff a<1$ because $\Re(\lambda-1)=a-1<0$
iff $a<1$. Thus $\mathcal P\neq\emptyset$ implies
\begin{equation}
0\le a<1,
\label{eq:10_v1}
\end{equation}
hence in particular $0\le a\le 1$.

\medskip
\textbf{4) Strict convexity of $F$.}
Differentiate Equation \ref{eq:4_v1} using Equation \ref{eq:3_v1}. A direct computation yields
\begin{equation}
F''(u)=\frac{x^2+y^2}{(x-y\cot u)^2\sin^2 u}>0\qquad(u\in(0,\pi)),
\label{eq:11_v1}
\end{equation}
so $F$ is strictly convex on $(0,\pi)$, hence on $[m,M)$.

\medskip
\textbf{5) Right boundary: $a+b\le 1$, and attainment of $C_R$.}
$\mathcal P$ is convex and connected; $\Psi$ is continuous, hence $\Psi(\mathcal P)$ is an interval.
By Jensen and strict convexity,
\begin{equation}
\Psi(u_1,\dots,u_4)\ge 4F\!\Big(\frac{u_1+\cdots+u_4}{4}\Big)=4F(\pi/2).
\label{eq:12_v1}
\end{equation}
Thus $0\in\Psi(\mathcal P)$ forces $F(\pi/2)\le 0$. Now
\[
F(\pi/2)=\log(y)-\log(-x)=\log\!\Big(\frac{b}{1-a}\Big),
\]
so $F(\pi/2)\le0\iff b\le 1-a\iff a+b\le 1$. This proves item (2) (for $b>0$; for $b<0$ replace $b$ by $|b|$).

Attainment of $C_R$: if $t_1=t_2=t_3=t_4=x\in(0,1]$, then Equation \ref{eq:2_v1} becomes $(z+x)^4=x^4$, i.e.
$\lambda=1-x\pm ix$, attained by $\alpha=\beta=\gamma=\delta=1-x$.

\medskip
\textbf{6) A max principle on $\mathcal P$.}
There are two regimes.

\begin{lemma}[Unbounded supremum]
If
\begin{equation}
3m+M\le 2\pi,
\label{eq:13_v1}
\end{equation}
then $\sup_{\mathcal P}\Psi=+\infty$.
\end{lemma}

\begin{proof}
Take $u_4\uparrow M$ and $u_1=u_2=u_3=(2\pi-u_4)/3$. Condition Equation \ref{eq:13_v1} ensures $u_1\ge m$.
Also $u_4<M$ and $u_1\le (2\pi-M)/3\le M$ since $M\ge\pi/2\Rightarrow 2\pi\le 4M$. Hence
$(u_1,\dots,u_4)\in\mathcal P$.
As $u_4\uparrow M$, $t(u_4)=y\cot u_4-x\downarrow 0$, so $-\log t(u_4)\to+\infty$ while
$\log(y\csc u_4)$ stays bounded; thus $F(u_4)\to+\infty$ and $\Psi\to+\infty$.
\end{proof}

\begin{lemma}[Finite maximum in the tight regime]
If
\begin{equation}
3m+M>2\pi,
\qquad U:=2\pi-3m<M,
\label{eq:14_v1}
\end{equation}
then
\begin{equation}
\max_{\mathcal P}\Psi = 3F(m)+F(U),
\label{eq:15_v1}
\end{equation}
and equality holds iff $(u_1,\dots,u_4)$ is a permutation of $(U,m,m,m)$.
\end{lemma}

\begin{proof}
Order $u_k$ decreasing: $v_1\ge v_2\ge v_3\ge v_4$. Since each $v_j\ge m$ and $\sum v_j=2\pi$,
\[
v_1=2\pi-(v_2+v_3+v_4)\le 2\pi-3m=U.
\]
Thus $(v_1,\dots,v_4)\in[m,U]^4$ with sum $2\pi$. The vector $(U,m,m,m)$ majorizes $(v_1,\dots,v_4)$.
By Karamata (convex $F$),
\[
\sum_{j=1}^4F(v_j)\le F(U)+3F(m),
\]
with equality only at permutations of $(U,m,m,m)$ since $F$ is strictly convex.
\end{proof}

\medskip
\textbf{7) The inequality $G(a,b)\ge 0$.}
We prove necessity for every nonreal eigenvalue.

\subsubsection*{7.1 Geometry lemma: $G(a,b)\le 0\Rightarrow 3m+M>2\pi$}

\begin{lemma}
Assume $b>0$ and $0\le a\le 1$. If $G(a,b)\le 0$, then $3m+M>2\pi$.
\end{lemma}

\begin{proof}
Set $s=b^2$. Then
\begin{equation}
G(a,b)=s^2+s(2a^2+2a-1)+(a^2+a)^2+2a^2.
\label{eq:16_v1}
\end{equation}
This is quadratic in $s$ with discriminant
\[
\Delta=(2a^2+2a-1)^2-4\big((a^2+a)^2+2a^2\big)=-(2a+1)(6a-1).
\]
Thus $G(a,b)\le 0$ forces $\Delta>0$, hence $a<1/6$, and $s$ lies between the two real roots; in particular
\begin{equation}
s\ge s_-(a):=\frac12-a-a^2-\frac12\sqrt{(1-6a)(1+2a)}.
\label{eq:17_v1}
\end{equation}

\emph{Case 1: $a=0$.} Then $m=\argu(ib)=\pi/2$, and $M=\argu(-1+ib)=\pi-\arctan(b)$. Hence
\[
3m+M=\frac{3\pi}{2}+\pi-\arctan(b)=\frac{5\pi}{2}-\arctan(b)>2\pi.
\]

\emph{Case 2: $a>0$.} From Equation \ref{eq:17_v1} we get $s>s_-(a)\ge 3a^2$, with strictness because
\[
s_-(a)-3a^2=\frac12-a-4a^2-\frac12\sqrt{(1-6a)(1+2a)}
\]
satisfies $(\text{LHS})^2-(\text{RHS})^2=32a^3(2a+1)>0$ for $a>0$. Hence $b^2>3a^2$, i.e.
$\tan m=b/a>\sqrt3$, so $m>\pi/3$ and $3m-\pi\in(0,\pi/2)$.

Write $\phi:=\arctan\!\big(\frac{b}{1-a}\big)\in(0,\pi/2)$, so $M=\pi-\phi$. Then
\begin{equation}
3m+M>2\pi
\ \Longleftrightarrow\
3m-\pi>\phi
\ \Longleftrightarrow\
\tan(3m)>\frac{b}{1-a},
\label{eq:18_v1}
\end{equation}
since $\tan$ is increasing on $(0,\pi/2)$ and $\tan(3m-\pi)=\tan(3m)$.

Let $t=\tan m=b/a$ (allowed since $a>0$). The triple-angle identity gives
$\tan(3m)=\frac{3t-t^3}{1-3t^2}$, and a direct simplification yields
\begin{equation}
\tan(3m)-\frac{b}{1-a}
=\frac{b\,(4a^3-3a^2-4ab^2+b^2)}{a(a-1)(a^2-3b^2)}.
\label{eq:19_v1}
\end{equation}
Here $a>0$, $a-1<0$, and $a^2-3b^2<0$ (since $b^2>3a^2$), so the denominator in Equation \ref{eq:19_v1} is positive.
Hence the sign equals the sign of
\begin{equation}
N(a,b):=4a^3-3a^2-4ab^2+b^2 = (1-4a)b^2 + a^2(4a-3).
\label{eq:20_v1}
\end{equation}
For $a<1/6$, $1-4a>0$ and $4a-3<0$, so $N(a,b)$ is strictly increasing in $b^2$ and has a unique zero at
\begin{equation}
s_0(a)=\frac{a^2(3-4a)}{1-4a}.
\label{eq:21_v1}
\end{equation}
One checks that $s_-(a)>s_0(a)$ for every $a\in(0,1/6)$: after multiplying by $1-4a>0$ and squaring,
the difference becomes $256a^6>0$. Therefore $b^2\ge s_-(a)>s_0(a)$ implies $N(a,b)>0$.
By Equation \ref{eq:19_v1}--\ref{eq:18_v1}, this gives $3m+M>2\pi$.
\end{proof}

\subsubsection*{7.2 Tight regime: $\max\Psi\ge 0\iff G\ge 0$}

Assume $3m+M>2\pi$, so Lemma~6.2 applies with $U=2\pi-3m\in(0,\pi)$ and maximizer $(U,m,m,m)$.
Since $u(1)=\argu(z+1)=m$, Equation \ref{eq:6_v1} gives $t(m)=1$, hence
\begin{equation}
F(m)=\log|z+1|-\log 1=\log|\lambda|.
\label{eq:22_v1}
\end{equation}
Also $t(U)=b\cot U-(a-1)=b\cot U+1-a$ and $|z+t(U)|=b\csc U$, so by Equation \ref{eq:15_v1}
\begin{equation}
\max_{\mathcal P}\Psi
=3\log|\lambda|+\log(b\csc U)-\log(b\cot U+1-a)
=\log\!\left(\frac{|\lambda|^3\,b\csc U}{b\cot U+1-a}\right).
\label{eq:23_v1}
\end{equation}
Thus $\max_{\mathcal P}\Psi\ge 0$ is equivalent to
\begin{equation}
|\lambda|^3\,b\csc U \ \ge\ b\cot U+1-a.
\label{eq:24_v1}
\end{equation}

Now $U=2\pi-3m$, so $\sin U=-\sin(3m)$ and $\cos U=\cos(3m)$. Since
$\lambda^3=(a+ib)^3=(a^3-3ab^2)+i(3a^2b-b^3)$, we have
\begin{equation}
\cos(3m)=\frac{\Re(\lambda^3)}{|\lambda|^3}=\frac{a(a^2-3b^2)}{|\lambda|^3},
\qquad
\sin(3m)=\frac{\Im(\lambda^3)}{|\lambda|^3}=\frac{b(3a^2-b^2)}{|\lambda|^3}.
\label{eq:25_v1}
\end{equation}
Hence
\begin{equation}
\sin U=\frac{b(b^2-3a^2)}{|\lambda|^3},\qquad
\cos U=\frac{a(a^2-3b^2)}{|\lambda|^3}.
\label{eq:26_v1}
\end{equation}
In the tight regime, necessarily $m>\pi/3$ (since $M\le\pi$), hence $b^2-3a^2>0$, so $\sin U>0$ and
\begin{equation}
b\csc U=\frac{|\lambda|^3}{b^2-3a^2},
\qquad
b\cot U=\frac{a(a^2-3b^2)}{b^2-3a^2}.
\label{eq:27_v1}
\end{equation}
Substitute Equation \ref{eq:27_v1} into Equation \ref{eq:24_v1} and clear the positive denominator $b^2-3a^2$:
\begin{equation}
|\lambda|^6 \ \ge\ a(a^2-3b^2) + (1-a)(b^2-3a^2).
\label{eq:28_v1}
\end{equation}
Using $|\lambda|^2=a^2+b^2$, a direct expansion and factorization yields
\begin{equation}
|\lambda|^6 -\Big(a(a^2-3b^2) + (1-a)(b^2-3a^2)\Big)
=\big((a-1)^2+b^2\big)\,G(a,b).
\label{eq:29_v1}
\end{equation}
Since $(a-1)^2+b^2=|\lambda-1|^2>0$, Equation \ref{eq:28_v1} is equivalent to $G(a,b)\ge 0$. Therefore, in the tight regime,
\begin{equation}
\max_{\mathcal P}\Psi\ge 0
\quad\Longleftrightarrow\quad
G(a,b)\ge 0.
\label{eq:30_v1}
\end{equation}

\subsubsection*{7.3 Necessity of $G\ge 0$ for all nonreal eigenvalues}

Let $\lambda$ be a nonreal eigenvalue of some $A(\alpha,\beta,\gamma,\delta)$. Then $\mathcal P\neq\emptyset$ and
$0\in\Psi(\mathcal P)$, hence $\sup_{\mathcal P}\Psi\ge 0$.
If $3m+M\le 2\pi$, the previous lemma (contrapositive) gives $G(a,b)\ge 0$.
If $3m+M>2\pi$, then $\sup\Psi=\max\Psi$ and Equation \ref{eq:30_v1} gives $G(a,b)\ge 0$.
This proves item (3) in the theorem.

\medskip
\textbf{8) Boundary identification and sharpness.}

\subsubsection*{8.1 Right boundary $C_R$}
Already established in \S5.

\subsubsection*{8.2 Left boundary $C_L$: attainment by $A_L(\alpha)$, and $G=0$}

For
\[
A_L(\alpha)=
\begin{pmatrix}
\alpha & 1-\alpha & 0 & 0\\
0&0&1&0\\
0&0&0&1\\
1&0&0&0
\end{pmatrix},
\qquad \alpha\in[0,1),
\]
we have $\beta=\gamma=\delta=0$, so Equation \ref{eq:1_v1} reduces to
\begin{equation}
\lambda^3(\lambda-\alpha)=1-\alpha.
\label{eq:31_v1}
\end{equation}
Eliminate $\alpha$:
\begin{equation}
\alpha=\frac{\lambda^4-1}{\lambda^3-1}.
\label{eq:32_v1}
\end{equation}
Thus $\alpha\in\mathbb R$ iff $\Im\!\big(\frac{\lambda^4-1}{\lambda^3-1}\big)=0$. Compute
\[
\Im\!\left(\frac{\lambda^4-1}{\lambda^3-1}\right)
=\frac{\Im\big((\lambda^4-1)(\overline{\lambda^3}-1)\big)}{|\lambda^3-1|^2}.
\]
A direct expansion and factorization in $\lambda=a+ib$ gives, for $b\neq 0$,
\begin{equation}
\Im\!\left(\frac{\lambda^4-1}{\lambda^3-1}\right)
=\frac{b\,|\lambda-1|^2\,G(a,b)}{|\lambda^3-1|^2}.
\label{eq:33_v1}
\end{equation}
Hence, for $b\neq 0$,
\begin{equation}
\alpha\in\mathbb R \quad\Longleftrightarrow\quad G(a,b)=0.
\label{eq:34_v1}
\end{equation}
So every nonreal eigenvalue of $A_L(\alpha)$ lies on $G=0$.

Moreover, $A_L(\alpha)$ always has a nonreal conjugate pair: rewrite Equation \ref{eq:31_v1} as
\[
\lambda^4-\alpha\lambda^3-(1-\alpha)=0
\quad\Longleftrightarrow\quad
(\lambda-1)\Big(\lambda^3+(1-\alpha)(\lambda^2+\lambda+1)\Big)=0.
\]
Let $c=1-\alpha\in(0,1]$ and $f(\lambda)=\lambda^3+c(\lambda^2+\lambda+1)$. Then
\[
f'(\lambda)=3\lambda^2+2c\lambda+c
\]
has discriminant $4c(c-3)<0$, so $f$ is strictly increasing on $\mathbb R$.
Also $f(0)=c>0$ and $f(\lambda)\to-\infty$ as $\lambda\to-\infty$, hence $f$ has exactly one real root.
Therefore the remaining two roots of $f$ form a nonreal conjugate pair, giving the desired pair for $A_L(\alpha)$.

Endpoints: at $\alpha=0$, Equation \ref{eq:31_v1} becomes $\lambda^4=1$, giving $\lambda=i$.
As $\alpha\uparrow 1$, Equation \ref{eq:31_v1} tends to $\lambda^3(\lambda-1)=0$; the conjugate pair among the three non-$1$ roots
tends to $0$. Thus $\alpha\in[0,1)$ traces a connected arc of $G=0$ from $i$ to $0$.

Finally, every point on that arc is attained as follows.
Let $\lambda=a+ib$ with $b>0$, $0\le a\le 1$, $a+b\le 1$, and $G(a,b)=0$.
By Lemma~7.1 (since $G=0\le 0$), we are in the tight regime $3m+M>2\pi$, so Lemma~6.2 applies and
the maximizer is $(U,m,m,m)$ with $U=2\pi-3m\in[m,M)$. By \S7.2, $G=0$ implies $\max_{\mathcal P}\Psi=0$,
hence $\Psi(U,m,m,m)=0$. By Lemma~2.1 this produces $t_1=t(U)\in(0,1]$ and
$t_2=t_3=t_4=t(m)=1$, satisfying Equation \ref{eq:2_v1}. Therefore the corresponding parameters
\[
\alpha=1-t(U)\in[0,1),\qquad \beta=\gamma=\delta=0
\]
yield exactly $A_L(\alpha)$ and give $\lambda$ as an eigenvalue. This identifies $C_L$ and proves it is attained.

\medskip
\textbf{9) Existence for every strict interior point.}
Fix $\lambda=a+ib$ with $b>0$ such that
\begin{equation}
0\le a\le 1,\qquad a+b<1,\qquad G(a,b)>0.
\label{eq:35_v1}
\end{equation}
Define $z,m,M,F,\mathcal P,\Psi$ as above. Then $\mathcal P\neq\emptyset$ by \S3.

We show $0\in\Psi(\mathcal P)$.
By Jensen,
\[
\Psi(\pi/2,\pi/2,\pi/2,\pi/2)=4F(\pi/2)=4\log\!\Big(\frac{b}{1-a}\Big)<0
\]
since $a+b<1\iff b<1-a$.

To produce a positive value:
if $3m+M\le 2\pi$, Lemma~6.1 gives $\sup_{\mathcal P}\Psi=+\infty$, so $\Psi$ takes positive values.
If $3m+M>2\pi$, Lemma~6.2 gives $\max_{\mathcal P}\Psi=3F(m)+F(U)$; by \S7.2, $G(a,b)>0$ implies
$\max_{\mathcal P}\Psi>0$, so $\Psi$ takes positive values.

Since $\mathcal P$ is connected and $\Psi$ is continuous, $\Psi(\mathcal P)$ is an interval containing
both negative and positive values, hence contains $0$.
Thus there exists $(u_1,\dots,u_4)\in\mathcal P$ with $\Psi(u_1,\dots,u_4)=0$.
By Lemma~2.1 this yields $t_k=t(u_k)\in(0,1]$ satisfying Equation \ref{eq:2_v1}. Set
\[
\alpha=1-t_1,\quad \beta=1-t_2,\quad \gamma=1-t_3,\quad \delta=1-t_4\in[0,1),
\]
and then Equation \ref{eq:1_v1} holds, so $\lambda\in\sigma(A(\alpha,\beta,\gamma,\delta))$.
Conjugation gives the lower half-plane.
\end{proof}
\end{versionzerobox}

\section{}
\label{sec:version_1}

\begin{versiononebox}
\begin{theorem}[Spectral region for a 4-cycle row-stochastic matrix]
Let
\[
A(\alpha,\beta,\gamma,\delta)=
\begin{pmatrix}
\alpha & 1-\alpha & 0 & 0\\
0 & \beta & 1-\beta & 0\\
0 & 0 & \gamma & 1-\gamma\\
1-\delta & 0 & 0 & \delta
\end{pmatrix},
\qquad \alpha,\beta,\gamma,\delta\in[0,1).
\]
Let $\lambda=a+ib\in\sigma(A(\alpha,\beta,\gamma,\delta))$ with $b\neq 0$, and write $b_+=|b|$.
Then:
\begin{enumerate}
\item $0\le a\le 1$;
\item $a+b_+\le 1$;
\item with
\[
G(a,b_+):=(b_+^{2}+a^{2}+a)^{2}+2a^{2}-b_+^{2},
\]
we have $G(a,b_+)\ge 0$.
\end{enumerate}

Conversely, if $b>0$ and
\[
0\le a\le 1,\qquad a+b<1,\qquad G(a,b)>0,
\]
then there exist $\alpha,\beta,\gamma,\delta\in[0,1)$ such that $\lambda\in\sigma(A(\alpha,\beta,\gamma,\delta))$.
(The lower half-plane follows by conjugation.)

Moreover:
\begin{itemize}
\item the segment $C_R:\ \lambda=1-x+ix$ ($x\in[0,1]$) is attained by $\alpha=\beta=\gamma=\delta=1-x$;
\item the curve portion $C_L$ of $G(a,b)=0$ in the upper half-plane joining $i$ to $0$ is attained by
\[
A_L(\alpha)=
\begin{pmatrix}
\alpha & 1-\alpha & 0 & 0\\
0&0&1&0\\
0&0&0&1\\
1&0&0&0
\end{pmatrix},\qquad \alpha\in[0,1);
\]
\item every point strictly inside $\{b>0,\ 0\le a\le 1,\ a+b<1,\ G(a,b)>0\}$ is attained.
\end{itemize}
\end{theorem}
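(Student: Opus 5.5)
The proof runs along the Dmitriev--Dynkin trigonometric route already set up in Version~\ref{sec:version_0}, with the Lemma~2 feasibility step repaired.

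\textbf{Reduction to an angle problem.} Multiplying the four row equations (all $v_k\neq0$ since each $1-\alpha,\dots>0$) gives $(z+t_1)\cdots(z+t_4)=t_1\cdots t_4$ with $z=\lambda-1$ and $t_k\in(0,1]$. For $b>0$ put $u_k=\argu(z+t_k)\in[m,M)$, where $m=\argu\lambda$ and $M=\argu(\lambda-1)$. Then $\lambda$ is attained iff the polytope $\mathcal P=\{u\in[m,M)^4:\sum u_k=2\pi\}$ is nonempty and $0\in\Psi(\mathcal P)$, where $\Psi=\sum F(u_k)$ and $F(u)=\log(y\csc u)-\log(y\cot u-x)$.

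\textbf{Necessity of items 1 and 2.} Nonemptiness of $\mathcal P$ forces $m\le\pi/2<M$, i.e.\ $0\le a<1$; this is item~1. Strict convexity of $F$ (compute $F''>0$) and Jensen give $\Psi\ge4F(\pi/2)=4\log\frac{b}{1-a}$. So $0\in\Psi(\mathcal P)$ forces $a+b\le1$, which is item~2. Equal $t_k=x$ gives $(z+x)^4=x^4$, hence $\lambda=1-x+ix$; this attains $C_R$.

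\textbf{Necessity of item 3.} There are two regimes.
\begin{itemize}
\item If $3m+M\le2\pi$, I show $\sup\Psi=+\infty$. Send $u_4\uparrow M$ and set $u_1=u_2=u_3=(2\pi-u_4)/3$. The repair is to check feasibility only near the endpoint: $u_1\to(2\pi-M)/3\ge m$ by hypothesis, and $(2\pi-M)/3<M$ because $M>\pi/2$. Hence $u_1\in[m,M)$ for $u_4$ close to $M$. Since $t(u_4)\downarrow0$, we get $F(u_4)\to\infty$.
\item If $3m+M>2\pi$, each coordinate is at most $U=2\pi-3m<M$. The vector $(U,m,m,m)$ majorizes every point of $\mathcal P$, so Karamata gives $\max\Psi=3F(m)+F(U)$. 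Use $F(m)=\log|\lambda|$ (since $t(m)=1$) and the triple-angle formulas $\sin U=b(b^2-3a^2)/|\lambda|^3$, $\cos U=a(a^2-3b^2)/|\lambda|^3$. Then $\max\Psi\ge0$ becomes $|\lambda|^6\ge N(a,b)$, and the factorization $|\lambda|^6-N=|\lambda-1|^2G$ shows this is equivalent to $G\ge0$.
\end{itemize}
To close the necessity argument I need the geometric lemma $G\le0\Rightarrow3m+M>2\pi$. Then $G<0$ lands in the tight regime, where it forces $\max\Psi<0$, a contradiction.

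\textbf{Proof of the geometric lemma (main obstacle).} View $G$ as a quadratic in $s=b^2$. Its discriminant is $-(2a+1)(6a-1)$, so $G\le0$ forces $a\le1/6$, combined with $a\ge0$ from item~1. It also forces $s\ge s_-(a)$, and I show $s_-(a)>3a^2$ for $a>0$ by squaring after a positivity check. This gives $m>\pi/3$, so $3m-\pi\in(0,\pi/2)$, and $3m+M>2\pi$ becomes $\tan3m>b/(1-a)$. Clearing the positive denominator $a(a-1)(a^2-3b^2)$ reduces this to $N(a,b)>0$, i.e.\ $s>s_0(a)=a^2(3-4a)/(1-4a)$, which follows from $s_-(a)>s_0(a)$ (again by squaring). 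The case $a=0$ is handled directly. Every squaring and cross-multiplication is preceded by an explicit sign check, and this is where I expect most of the effort to go.

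\textbf{Sufficiency and $C_L$.} For $a+b<1$ and $G>0$, the centroid $(\pi/2,\dots,\pi/2)$ gives $\Psi<0$. Either regime gives some point with $\Psi>0$. Since $\mathcal P$ is convex, $\Psi$ is continuous, and the intermediate value theorem yields a zero, so $t_k=t(u_k)$ recovers the parameters.

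For $C_L$: with $\beta=\gamma=\delta=0$ the eigen-equation is $\lambda^3(\lambda-\alpha)=1-\alpha$, so $\alpha=(\lambda^4-1)/(\lambda^3-1)$. Its imaginary part is $b|\lambda-1|^2G/|\lambda^3-1|^2$, so nonreal eigenvalues of $A_L(\alpha)$ lie on $G=0$. Conversely, a point with $G=0$ is in the tight regime and has $\Psi(U,m,m,m)=0$. That configuration has $t_1=t(U)$ and $t_2=t_3=t_4=1$, i.e.\ it is exactly $A_L(1-t(U))$, so every point of $C_L$ is attained.
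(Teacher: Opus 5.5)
Your proposal is correct and follows essentially the same route as the paper. The shared steps are the reduction to $u_k=\Arg(z+t_k)$, Jensen at $\pi/2$ for $a+b\le 1$, the dichotomy at $3m+M=2\pi$, the comparisons $s_-(a)>3a^2$ and $s_-(a)>s_0(a)$ in $s=b^2$, the factorization $|\lambda|^6-N=|\lambda-1|^2G$, the intermediate value theorem for interior points, and attainment of $C_L$ via $t_1=t(U)$, $t_2=t_3=t_4=1$ (as in the paper's earlier drafts). Two of your steps are more careful than the paper's final text: you keep $a\le 1/6$, which matters because $(a,b)=(1/6,\sqrt{11}/6)$ has $\Delta=0$ and $G=0$, and you check $u_1<M$ through the limit $(2\pi-M)/3<M$ rather than through the bound $(2\pi-m)/3<M$, which can fail.
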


\begin{proof}
\textbf{1) Eigenvalue equation.}
Let $v\neq 0$ satisfy $Av=\lambda v$. Row-by-row,
\[
(\lambda-\alpha)v_1=(1-\alpha)v_2,\quad
(\lambda-\beta)v_2=(1-\beta)v_3,\quad
(\lambda-\gamma)v_3=(1-\gamma)v_4,\quad
(\lambda-\delta)v_4=(1-\delta)v_1.
\]
Since $1-\alpha,\dots,1-\delta>0$, if any $v_k=0$ then all $v_j=0$, contradiction. Hence
$v_1v_2v_3v_4\neq 0$, and multiplying gives
\begin{equation}\label{eq:eig-prod}
(\lambda-\alpha)(\lambda-\beta)(\lambda-\gamma)(\lambda-\delta)
=(1-\alpha)(1-\beta)(1-\gamma)(1-\delta).
\end{equation}
Put $t_1=1-\alpha,\dots,t_4=1-\delta\in(0,1]$ and $z=\lambda-1$. Then
$\lambda-(1-t_k)=z+t_k$, so \eqref{eq:eig-prod} becomes
\begin{equation}\label{eq:z-form}
(z+t_1)(z+t_2)(z+t_3)(z+t_4)=t_1t_2t_3t_4.
\end{equation}

\medskip
\textbf{2) Angle parametrization and the function $F$.}
Assume $b>0$ (the case $b<0$ follows by conjugation since $A$ is real). Write
\[
z=x+iy,\qquad x=a-1,\ y=b.
\]
For $t>0$, $z+t$ lies in the upper half-plane, so define $u(t)=\Arg(z+t)\in(0,\pi)$.
Elementary trigonometry gives the inverse relation
\begin{equation}\label{eq:t-u}
t=t(u)=y\cot u-x,\qquad |z+t(u)|=y\csc u.
\end{equation}
Define
\begin{equation}\label{eq:F-def}
F(u):=\log|z+t(u)|-\log t(u)=\log(y\csc u)-\log(y\cot u-x).
\end{equation}
Let
\begin{equation}\label{eq:mM-def}
m:=\Arg(z+1)=\Arg(\lambda),\qquad M:=\Arg(z)=\Arg(\lambda-1).
\end{equation}
As $t\downarrow 0$, $u(t)\uparrow M$; as $t\uparrow 1$, $u(t)\downarrow m$. Thus
\begin{equation}\label{eq:t-interval}
t\in(0,1]\quad\Longleftrightarrow\quad u\in[m,M).
\end{equation}

\begin{lemma}[Log-modulus/argument reformulation]\label{lem:log-arg}
A nonreal $\lambda$ satisfies \eqref{eq:z-form} for some $t_k\in(0,1]$ iff there exist $u_k\in[m,M)$ such that
\begin{equation}\label{eq:sum-angles}
u_1+u_2+u_3+u_4=2\pi,
\end{equation}
\begin{equation}\label{eq:sum-F}
F(u_1)+F(u_2)+F(u_3)+F(u_4)=0.
\end{equation}
\end{lemma}

\begin{proof}
Write $z+t_k=|z+t_k|e^{iu_k}$ with $u_k\in(0,\pi)$. Taking arguments in \eqref{eq:z-form} gives
$\sum u_k\equiv 0\pmod{2\pi}$; since $\sum u_k\in(0,4\pi)$ this forces $\sum u_k=2\pi$, i.e. \eqref{eq:sum-angles}.
Taking moduli and logs gives \eqref{eq:sum-F}. Condition \eqref{eq:t-interval} is exactly $t_k\in(0,1]$.
Conversely, \eqref{eq:sum-angles} and \eqref{eq:sum-F} give equality of arguments and moduli, hence \eqref{eq:z-form}.
\end{proof}

Define
\begin{equation}\label{eq:P-Psi-def}
\mathcal P:=\Big\{(u_1,\dots,u_4)\in[m,M)^4:\ u_1+\cdots+u_4=2\pi\Big\},
\qquad
\Psi(u_1,\dots,u_4):=\sum_{k=1}^4F(u_k).
\end{equation}
Then $\lambda$ is a nonreal eigenvalue of some $A(\alpha,\beta,\gamma,\delta)$ iff $\mathcal P\neq\emptyset$ and $0\in\Psi(\mathcal P)$.

\medskip
\textbf{3) Necessarily $0\le a\le 1$ (indeed $a<1$ if $b\ne 0$).}
Every $(u_1,\dots,u_4)\in\mathcal P$ has average $\pi/2$, so $\pi/2\in[m,M)$, i.e.
\begin{equation}\label{eq:midpoint-condition}
m\le \frac{\pi}{2} < M.
\end{equation}
For $b>0$, $m\le\pi/2\iff a\ge 0$. Also $M>\pi/2\iff a<1$ because $\Re(\lambda-1)=a-1<0$
iff $a<1$. Thus $\mathcal P\neq\emptyset$ implies
\begin{equation}\label{eq:a-range}
0\le a<1,
\end{equation}
hence in particular $0\le a\le 1$.

\medskip
\textbf{4) Strict convexity of $F$.}
Differentiate \eqref{eq:F-def} using \eqref{eq:t-u}. A direct computation yields
\begin{equation}\label{eq:F-second}
F''(u)=\frac{x^2+y^2}{(x-y\cot u)^2\sin^2 u}>0\qquad(u\in(0,\pi)),
\end{equation}
so $F$ is strictly convex on $(0,\pi)$, hence on $[m,M)$.

\medskip
\textbf{5) Right boundary: $a+b\le 1$, and attainment of $C_R$.}
$\mathcal P$ is convex and connected; $\Psi$ is continuous, hence $\Psi(\mathcal P)$ is an interval.
By Jensen and strict convexity,
\begin{equation}\label{eq:Jensen}
\Psi(u_1,\dots,u_4)\ge 4F\!\Big(\frac{u_1+\cdots+u_4}{4}\Big)=4F(\pi/2).
\end{equation}
Thus $0\in\Psi(\mathcal P)$ forces $F(\pi/2)\le 0$. Now
\[
F(\pi/2)=\log(y)-\log(-x)=\log\!\Big(\frac{b}{1-a}\Big),
\]
so $F(\pi/2)\le0\iff b\le 1-a\iff a+b\le 1$. This proves item (2) in the theorem (for $b>0$; for $b<0$ replace $b$ by $|b|$).

Attainment of $C_R$: if $t_1=t_2=t_3=t_4=x\in(0,1]$, then \eqref{eq:z-form} becomes $(z+x)^4=x^4$, i.e.
$\lambda=1-x\pm ix$, attained by $\alpha=\beta=\gamma=\delta=1-x$.

\medskip
\textbf{6) A max principle on $\mathcal P$.}
There are two regimes.

\begin{lemma}[Unbounded supremum]\label{lem:unbounded}
If
\begin{equation}\label{eq:unbounded-cond}
3m+M\le 2\pi,
\end{equation}
then $\sup_{\mathcal P}\Psi=+\infty$.
\end{lemma}

\begin{proof}
Take $u_4\uparrow M$ and $u_1=u_2=u_3=(2\pi-u_4)/3$. Condition \eqref{eq:unbounded-cond} ensures $u_1\ge m$.
Also $u_4<M$ and $u_1\le (2\pi-M)/3\le M$ since $M\ge\pi/2\Rightarrow 2\pi\le 4M$. Hence
$(u_1,\dots,u_4)\in\mathcal P$.
As $u_4\uparrow M$, $t(u_4)=y\cot u_4-x\downarrow 0$, so $-\log t(u_4)\to+\infty$ while
$\log(y\csc u_4)$ stays bounded; thus $F(u_4)\to+\infty$ and $\Psi\to+\infty$.
\end{proof}

\begin{lemma}[Finite maximum in the tight regime]\label{lem:tight-max}
If
\begin{equation}\label{eq:tight-cond}
3m+M>2\pi,
\qquad U:=2\pi-3m<M,
\end{equation}
then
\begin{equation}\label{eq:maxPsi}
\max_{\mathcal P}\Psi = 3F(m)+F(U),
\end{equation}
and equality holds iff $(u_1,\dots,u_4)$ is a permutation of $(U,m,m,m)$.
\end{lemma}

\begin{proof}
Order $u_k$ decreasing: $v_1\ge v_2\ge v_3\ge v_4$. Since each $v_j\ge m$ and $\sum v_j=2\pi$,
\[
v_1=2\pi-(v_2+v_3+v_4)\le 2\pi-3m=U.
\]
Thus $(v_1,\dots,v_4)\in[m,U]^4$ with sum $2\pi$. The vector $(U,m,m,m)$ majorizes $(v_1,\dots,v_4)$.
By Karamata (convex $F$),
\[
\sum_{j=1}^4F(v_j)\le F(U)+3F(m),
\]
with equality only at permutations of $(U,m,m,m)$ since $F$ is strictly convex.
\end{proof}

\medskip
\textbf{7) The inequality $G(a,b)\ge 0$.}
We prove necessity for every nonreal eigenvalue.

\subsubsection*{7.1 Geometry lemma: $G(a,b)\le 0\Rightarrow 3m+M>2\pi$}

\begin{lemma}\label{lem:G-implies-tight}
Assume $b>0$ and $0\le a\le 1$. If $G(a,b)\le 0$, then $3m+M>2\pi$.
\end{lemma}

\begin{proof}
Set $s=b^2$. Then
\begin{equation}\label{eq:G-as-quadratic}
G(a,b)=s^2+s(2a^2+2a-1)+(a^2+a)^2+2a^2.
\end{equation}
This is quadratic in $s$ with discriminant
\[
\Delta=(2a^2+2a-1)^2-4\big((a^2+a)^2+2a^2\big)=-(2a+1)(6a-1).
\]
Thus $G(a,b)\le 0$ forces $\Delta>0$, hence $a<1/6$, and $s$ lies between the two real roots; in particular
\begin{equation}\label{eq:sminus}
s\ge s_-(a):=\frac12-a-a^2-\frac12\sqrt{(1-6a)(1+2a)}.
\end{equation}

\emph{Case 1: $a=0$.} Then $m=\Arg(ib)=\pi/2$ and $M=\Arg(-1+ib)=\pi-\arctan(b)$. Hence
\[
3m+M=\frac{3\pi}{2}+\pi-\arctan(b)=\frac{5\pi}{2}-\arctan(b)>2\pi.
\]

\emph{Case 2: $a>0$.} From \eqref{eq:sminus} one gets $s>s_-(a)\ge 3a^2$, with strictness because
\[
s_-(a)-3a^2=\frac12-a-4a^2-\frac12\sqrt{(1-6a)(1+2a)}
\]
satisfies $(\mathrm{LHS})^2-(\mathrm{RHS})^2=32a^3(2a+1)>0$ for $a>0$. Hence $b^2>3a^2$, i.e.
$\tan m=b/a>\sqrt3$, so $m>\pi/3$ and $3m-\pi\in(0,\pi/2)$.

Write $\phi:=\arctan\!\big(\frac{b}{1-a}\big)\in(0,\pi/2)$, so $M=\pi-\phi$. Then
\begin{equation}\label{eq:tight-ineq-equivalences}
3m+M>2\pi
\ \Longleftrightarrow\
3m-\pi>\phi
\ \Longleftrightarrow\
\tan(3m)>\frac{b}{1-a},
\end{equation}
since $\tan$ is increasing on $(0,\pi/2)$ and $\tan(3m-\pi)=\tan(3m)$.

Let $t=\tan m=b/a$ (allowed since $a>0$). The triple-angle identity gives
$\tan(3m)=\frac{3t-t^3}{1-3t^2}$, and a direct simplification yields
\begin{equation}\label{eq:tan3m-diff}
\tan(3m)-\frac{b}{1-a}
=\frac{b\,(4a^3-3a^2-4ab^2+b^2)}{a(a-1)(a^2-3b^2)}.
\end{equation}
Here $a>0$, $a-1<0$, and $a^2-3b^2<0$ (since $b^2>3a^2$), so the denominator in \eqref{eq:tan3m-diff} is positive.
Hence the sign equals the sign of
\begin{equation}\label{eq:Ndef}
N(a,b):=4a^3-3a^2-4ab^2+b^2 = (1-4a)b^2 + a^2(4a-3).
\end{equation}
For $a<1/6$, $1-4a>0$ and $4a-3<0$, so $N(a,b)$ is strictly increasing in $b^2$ and has a unique zero at
\begin{equation}\label{eq:s0}
s_0(a)=\frac{a^2(3-4a)}{1-4a}.
\end{equation}
One checks that $s_-(a)>s_0(a)$ for every $a\in(0,1/6)$: after multiplying by $1-4a>0$ and squaring,
the difference becomes $256a^6>0$. Therefore $b^2\ge s_-(a)>s_0(a)$ implies $N(a,b)>0$.
By \eqref{eq:tan3m-diff} and \eqref{eq:tight-ineq-equivalences}, this gives $3m+M>2\pi$.
\end{proof}

\subsubsection*{7.2 Tight regime: $\max\Psi\ge 0\iff G\ge 0$}

Assume $3m+M>2\pi$, so Lemma~\ref{lem:tight-max} applies with $U=2\pi-3m\in(0,\pi)$ and maximizer $(U,m,m,m)$.
Since $u(1)=\Arg(z+1)=m$, \eqref{eq:t-interval} gives $t(m)=1$, hence
\begin{equation}\label{eq:Fm}
F(m)=\log|z+1|-\log 1=\log|\lambda|.
\end{equation}
Also $t(U)=b\cot U-(a-1)=b\cot U+1-a$ and $|z+t(U)|=b\csc U$, so by \eqref{eq:maxPsi}
\begin{equation}\label{eq:maxPsi-log}
\max_{\mathcal P}\Psi
=3\log|\lambda|+\log(b\csc U)-\log(b\cot U+1-a)
=\log\!\left(\frac{|\lambda|^3\,b\csc U}{b\cot U+1-a}\right).
\end{equation}
Thus $\max_{\mathcal P}\Psi\ge 0$ is equivalent to
\begin{equation}\label{eq:ineq-basic}
|\lambda|^3\,b\csc U \ \ge\ b\cot U+1-a.
\end{equation}

Now $U=2\pi-3m$, so $\sin U=-\sin(3m)$ and $\cos U=\cos(3m)$. Since
$\lambda^3=(a+ib)^3=(a^3-3ab^2)+i(3a^2b-b^3)$, we have
\begin{equation}\label{eq:cos-sin-3m}
\cos(3m)=\frac{\Re(\lambda^3)}{|\lambda|^3}=\frac{a(a^2-3b^2)}{|\lambda|^3},
\qquad
\sin(3m)=\frac{\Im(\lambda^3)}{|\lambda|^3}=\frac{b(3a^2-b^2)}{|\lambda|^3}.
\end{equation}
Hence
\begin{equation}\label{eq:sinU-cosU}
\sin U=\frac{b(b^2-3a^2)}{|\lambda|^3},\qquad
\cos U=\frac{a(a^2-3b^2)}{|\lambda|^3}.
\end{equation}
In the tight regime one necessarily has $m>\pi/3$ (since $M\le\pi$), hence $b^2-3a^2>0$, so $\sin U>0$ and
\begin{equation}\label{eq:csc-cot}
b\csc U=\frac{|\lambda|^3}{b^2-3a^2},
\qquad
b\cot U=\frac{a(a^2-3b^2)}{b^2-3a^2}.
\end{equation}
Substitute \eqref{eq:csc-cot} into \eqref{eq:ineq-basic} and clear the positive denominator $b^2-3a^2$:
\begin{equation}\label{eq:ineq-expanded}
|\lambda|^6 \ \ge\ a(a^2-3b^2) + (1-a)(b^2-3a^2).
\end{equation}
Using $|\lambda|^2=a^2+b^2$, a direct expansion and factorization yields
\begin{equation}\label{eq:factorization}
|\lambda|^6 -\Big(a(a^2-3b^2) + (1-a)(b^2-3a^2)\Big)
=\big((a-1)^2+b^2\big)\,G(a,b).
\end{equation}
Since $(a-1)^2+b^2=|\lambda-1|^2>0$, \eqref{eq:ineq-expanded} is equivalent to $G(a,b)\ge 0$. Therefore, in the tight regime,
\begin{equation}\label{eq:tight-equivalence}
\max_{\mathcal P}\Psi\ge 0
\quad\Longleftrightarrow\quad
G(a,b)\ge 0.
\end{equation}

\subsubsection*{7.3 Necessity of $G\ge 0$ for all nonreal eigenvalues}

Let $\lambda$ be a nonreal eigenvalue of some $A(\alpha,\beta,\gamma,\delta)$. Then $\mathcal P\neq\emptyset$ and
$0\in\Psi(\mathcal P)$, hence $\sup_{\mathcal P}\Psi\ge 0$.
If $3m+M\le 2\pi$, Lemma~\ref{lem:G-implies-tight} (contrapositive) gives $G(a,b)\ge 0$.
If $3m+M>2\pi$, then $\sup\Psi=\max\Psi$ and \eqref{eq:tight-equivalence} gives $G(a,b)\ge 0$.
This proves item (3) in the theorem.

\medskip
\textbf{8) Boundary identification and sharpness.}

\subsubsection*{8.1 Right boundary $C_R$}
Already established in \S5.

\subsubsection*{8.2 Left boundary $C_L$: attainment by $A_L(\alpha)$, and $G=0$}

For
\[
A_L(\alpha)=
\begin{pmatrix}
\alpha & 1-\alpha & 0 & 0\\
0&0&1&0\\
0&0&0&1\\
1&0&0&0
\end{pmatrix},
\qquad \alpha\in[0,1),
\]
we have $\beta=\gamma=\delta=0$, so \eqref{eq:eig-prod} reduces to
\begin{equation}\label{eq:AL-eig}
\lambda^3(\lambda-\alpha)=1-\alpha.
\end{equation}
Eliminate $\alpha$:
\begin{equation}\label{eq:alpha-elim}
\alpha=\frac{\lambda^4-1}{\lambda^3-1}.
\end{equation}
Thus $\alpha\in\mathbb R$ iff $\Im\!\big(\frac{\lambda^4-1}{\lambda^3-1}\big)=0$. Compute
\[
\Im\!\left(\frac{\lambda^4-1}{\lambda^3-1}\right)
=\frac{\Im\big((\lambda^4-1)(\overline{\lambda^3}-1)\big)}{|\lambda^3-1|^2}.
\]
A direct expansion and factorization in $\lambda=a+ib$ gives, for $b\neq 0$,
\begin{equation}\label{eq:imag-ratio}
\Im\!\left(\frac{\lambda^4-1}{\lambda^3-1}\right)
=\frac{b\,|\lambda-1|^2\,G(a,b)}{|\lambda^3-1|^2}.
\end{equation}
Hence, for $b\neq 0$,
\begin{equation}\label{eq:G0-iff-alpha-real}
\alpha\in\mathbb R \quad\Longleftrightarrow\quad G(a,b)=0.
\end{equation}
So every nonreal eigenvalue of $A_L(\alpha)$ lies on $G=0$.

Moreover, $A_L(\alpha)$ always has a nonreal conjugate pair: rewrite \eqref{eq:AL-eig} as
\[
\lambda^4-\alpha\lambda^3-(1-\alpha)=0
\quad\Longleftrightarrow\quad
(\lambda-1)\Big(\lambda^3+(1-\alpha)(\lambda^2+\lambda+1)\Big)=0.
\]
Let $c=1-\alpha\in(0,1]$ and $f(\lambda)=\lambda^3+c(\lambda^2+\lambda+1)$. Then
\[
f'(\lambda)=3\lambda^2+2c\lambda+c
\]
has discriminant $4c(c-3)<0$, so $f$ is strictly increasing on $\mathbb R$.
Also $f(0)=c>0$ and $f(\lambda)\to-\infty$ as $\lambda\to-\infty$, hence $f$ has exactly one real root.
Therefore the remaining two roots of $f$ form a nonreal conjugate pair.

Endpoints: at $\alpha=0$, \eqref{eq:AL-eig} becomes $\lambda^4=1$, giving $\lambda=i$.
As $\alpha\uparrow 1$, \eqref{eq:AL-eig} tends to $\lambda^3(\lambda-1)=0$; the conjugate pair among the three non-$1$ roots
tends to $0$. Thus $\alpha\in[0,1)$ traces a connected arc of $G=0$ from $i$ to $0$.

Finally, every point on that arc is attained as follows.
Let $\lambda=a+ib$ with $b>0$, $0\le a\le 1$, $a+b\le 1$, and $G(a,b)=0$.
By Lemma~\ref{lem:G-implies-tight} (since $G=0\le 0$), we are in the tight regime $3m+M>2\pi$, so Lemma~\ref{lem:tight-max} applies and
the maximizer is $(U,m,m,m)$ with $U=2\pi-3m\in[m,M)$. By \eqref{eq:tight-equivalence}, $G=0$ implies $\max_{\mathcal P}\Psi=0$,
hence $\Psi(U,m,m,m)=0$. By Lemma~\ref{lem:log-arg} this produces $t_1=t(U)\in(0,1]$ and
$t_2=t_3=t_4=t(m)=1$, satisfying \eqref{eq:z-form}. Therefore the corresponding parameters
\[
\alpha=1-t(U)\in[0,1),\qquad \beta=\gamma=\delta=0
\]
yield exactly $A_L(\alpha)$ and give $\lambda$ as an eigenvalue. This identifies $C_L$ and proves it is attained.

\medskip
\textbf{9) Existence for every strict interior point.}
Fix $\lambda=a+ib$ with $b>0$ such that
\begin{equation}\label{eq:interior}
0\le a\le 1,\qquad a+b<1,\qquad G(a,b)>0.
\end{equation}
Define $z,m,M,F,\mathcal P,\Psi$ as above. Then $\mathcal P\neq\emptyset$ by \eqref{eq:midpoint-condition}--\eqref{eq:a-range}.

We show $0\in\Psi(\mathcal P)$.
By Jensen \eqref{eq:Jensen},
\[
\Psi(\pi/2,\pi/2,\pi/2,\pi/2)=4F(\pi/2)=4\log\!\Big(\frac{b}{1-a}\Big)<0
\]
since $a+b<1\iff b<1-a$.

To produce a positive value:
if $3m+M\le 2\pi$, Lemma~\ref{lem:unbounded} gives $\sup_{\mathcal P}\Psi=+\infty$, so $\Psi$ takes positive values.
If $3m+M>2\pi$, Lemma~\ref{lem:tight-max} gives $\max_{\mathcal P}\Psi=3F(m)+F(U)$; by \eqref{eq:tight-equivalence}, $G(a,b)>0$ implies
$\max_{\mathcal P}\Psi>0$, so $\Psi$ takes positive values.

Since $\mathcal P$ is connected and $\Psi$ is continuous, $\Psi(\mathcal P)$ is an interval containing
both negative and positive values, hence contains $0$.
Thus there exists $(u_1,\dots,u_4)\in\mathcal P$ with $\Psi(u_1,\dots,u_4)=0$.
By Lemma~\ref{lem:log-arg} this yields $t_k=t(u_k)\in(0,1]$ satisfying \eqref{eq:z-form}. Set
\[
\alpha=1-t_1,\quad \beta=1-t_2,\quad \gamma=1-t_3,\quad \delta=1-t_4\in[0,1),
\]
and then \eqref{eq:eig-prod} holds, so $\lambda\in\sigma(A(\alpha,\beta,\gamma,\delta))$.
Conjugation gives the lower half-plane.
\end{proof}
\end{versiononebox}

\clearpage
\section{}
\label{sec:version_2}

\begin{versiontwobox}
Throughout, fix
\[
A(\alpha,\beta,\gamma,\delta)=
\begin{pmatrix}
\alpha&1-\alpha&0&0\\
0&\beta&1-\beta&0\\
0&0&\gamma&1-\gamma\\
1-\delta&0&0&\delta
\end{pmatrix},\qquad \alpha,\beta,\gamma,\delta\in[0,1).
\]
This is row-stochastic (row sums are $1$) and real.
We write $\lambda=a+ib$, and $b_+=|b|$.

\section*{Theorem (Spectral region for a 4-cycle row-stochastic matrix)}
Let $\lambda=a+ib\in\sigma(A(\alpha,\beta,\gamma,\delta))$ with $b\neq 0$. Then:
\begin{enumerate}
    \item $0\le a\le 1$.
    \item $a+b_+\le 1$.
    \item Define for $b\ge 0$:
    \[
    G(a,b):=(b^2+a^2+a)^2+2a^2-b^2.
    \]
    Then $G(a,b_+)\ge 0$.
\end{enumerate}

Conversely, if $b>0$ and
\[
0\le a\le 1,\qquad a+b<1,\qquad G(a,b)>0,
\]
then there exist $\alpha,\beta,\gamma,\delta\in[0,1)$ such that $\lambda\in\sigma(A(\alpha,\beta,\gamma,\delta))$.
(The lower half-plane follows by conjugation.)

Moreover:
\begin{itemize}
    \item The segment $CR:\ \lambda=1-x+ix\ (x\in[0,1])$ is attained by $\alpha=\beta=\gamma=\delta=1-x$.
    \item The curve portion $CL$ in the upper half-plane joining $i$ to $0$ defined by $G(a,b)=0$ is attained by
    \[
    A_L(\alpha)=
    \begin{pmatrix}
    \alpha&1-\alpha&0&0\\
    0&0&1&0\\
    0&0&0&1\\
    1&0&0&0
    \end{pmatrix},\qquad \alpha\in[0,1).
    \]
    \item Every point strictly inside $\{b>0,\ 0\le a\le 1,\ a+b<1,\ G(a,b)>0\}$ is attained.
\end{itemize}

\section*{Proof}
Because $A$ is real, $\lambda\in\sigma(A)\Rightarrow \overline{\lambda}\in\sigma(A)$. So it suffices to prove all necessary conditions for $b>0$, and then replace $b$ by $|b|$ for the final statements. The constructive (``converse'') part will also be done for $b>0$.

\subsection*{1) Eigenvalue equation $\Rightarrow$ multiplicative constraint}
Let $v=(v_1,v_2,v_3,v_4)^\top\neq 0$ satisfy $Av=\lambda v$. Writing the eigenvalue equation row-by-row gives:
\[
\begin{aligned}
\alpha v_1+(1-\alpha)v_2&=\lambda v_1,\\
\beta v_2+(1-\beta)v_3&=\lambda v_2,\\
\gamma v_3+(1-\gamma)v_4&=\lambda v_3,\\
(1-\delta)v_1+\delta v_4&=\lambda v_4.
\end{aligned}
\]
Rearrange each:
\[
\begin{aligned}
(\lambda-\alpha)v_1&=(1-\alpha)v_2,\\
(\lambda-\beta)v_2&=(1-\beta)v_3,\\
(\lambda-\gamma)v_3&=(1-\gamma)v_4,\\
(\lambda-\delta)v_4&=(1-\delta)v_1.
\end{aligned}
\tag{1.1}
\]

\subsubsection*{Claim: none of $v_1,v_2,v_3,v_4$ is zero.}
Because $\alpha,\beta,\gamma,\delta\in[0,1)$, we have $1-\alpha,1-\beta,1-\gamma,1-\delta>0$.
If $v_1=0$, then from the first equation $(1-\alpha)v_2=0\Rightarrow v_2=0$. Then second gives $v_3=0$, third gives $v_4=0$, contradicting $v\neq 0$. The same cyclically holds for any $v_k=0$. Hence $v_1v_2v_3v_4\neq 0$.

Now multiply the four equations in (1.1). The left-hand side is
\[
(\lambda-\alpha)(\lambda-\beta)(\lambda-\gamma)(\lambda-\delta)\,v_1v_2v_3v_4,
\]
and the right-hand side is
\[
(1-\alpha)(1-\beta)(1-\gamma)(1-\delta)\,v_1v_2v_3v_4.
\]
Cancel $v_1v_2v_3v_4\neq 0$ to obtain the necessary condition:
\[
(\lambda-\alpha)(\lambda-\beta)(\lambda-\gamma)(\lambda-\delta)=(1-\alpha)(1-\beta)(1-\gamma)(1-\delta).
\tag{1.2}
\]
Conversely, if (1.2) holds and additionally $\lambda\neq \alpha,\beta,\gamma,\delta$, one can solve the relations in (1.1) recursively for $v$ and check the final equation closes exactly when (1.2) holds. In our nonreal case $\lambda\notin\mathbb{R}$, while $\alpha,\beta,\gamma,\delta\in\mathbb{R}$, so automatically $\lambda\neq \alpha,\beta,\gamma,\delta$. Thus (1.2) is equivalent to existence of a nonzero eigenvector for nonreal $\lambda$.

\subsection*{2) Substitute $t_k=1-\text{parameter}$ and shift $z=\lambda-1$}
Define
\[
t_1=1-\alpha,\ t_2=1-\beta,\ t_3=1-\gamma,\ t_4=1-\delta.
\]
Since each parameter lies in $[0,1)$, each $t_k\in(0,1]$.
Let $z=\lambda-1$. Then
\[
\lambda-(1-t_k)=\lambda-1+t_k=z+t_k.
\]
Also
\[
(1-\alpha)(1-\beta)(1-\gamma)(1-\delta)=t_1t_2t_3t_4.
\]
So (1.2) becomes:
\[
(z+t_1)(z+t_2)(z+t_3)(z+t_4)=t_1t_2t_3t_4.
\tag{2.1}
\]
Write $z=x+iy$ with $x=a-1$ and $y=b$. We assume $b>0$ in the rest of the necessity direction; later we replace by $|b|$.

\subsection*{3) Argument parametrization: $t\leftrightarrow u=\arg(z+t)$}
Fix $z=x+iy$ with $y>0$. For any $t>0$, the complex number $z+t=(x+t)+iy$ lies in the upper half-plane, so its argument $u(t):=\arg(z+t)$ is well-defined in $(0,\pi)$.

\subsubsection*{3.1 Inverse formulas $t=t(u)$, $|z+t(u)|$}
Let $u\in(0,\pi)$. Consider $z+t=(x+t)+iy$ with argument $u$. Then
\[
\tan u=\frac{\Im(z+t)}{\Re(z+t)}=\frac{y}{x+t}.
\]
Since $y>0$, this determines $x+t$ uniquely:
\[
x+t = y\cot u \quad\Rightarrow\quad t = y\cot u - x.
\tag{3.1}
\]
Also
\[
|z+t|^2=(x+t)^2+y^2=(y\cot u)^2+y^2=y^2(\cot^2u+1)=y^2\csc^2u,
\]
So
\begin{equation}
|z + t(u)| = y \csc u. \tag{3.2}
\end{equation}

\subsubsection*{3.2 Endpoints $m=\arg(z+1)$, $M=\arg(z)$, and monotone correspondence}
Define
\[
m:=\arg(z+1)=\arg(\lambda),\qquad M:=\arg(z)=\arg(\lambda-1).
\tag{3.3}
\]
\textbf{Claim:} $u(t)$ is strictly decreasing in $t>0$.
Indeed, $u(t)=\arctan\left(\frac{y}{x+t}\right)$, and for $y>0$ the function $\frac{y}{x+t}$ is strictly decreasing in $t$, hence $\arctan$ of it is strictly decreasing.
Therefore as $t\downarrow 0$, $u(t)\uparrow u(0)=\arg(z)=M$; and as $t\uparrow 1$, $u(t)\downarrow u(1)=\arg(z+1)=m$.
Hence the mapping $t\mapsto u(t)$ is a continuous strictly decreasing bijection from $(0,1]$ onto $[m,M)$. Equivalently:
\[
t\in(0,1]\quad\Longleftrightarrow\quad u\in[m,M).
\tag{3.4}
\]

\subsection*{4) The function $F$ and reformulation in $(u_k)$}
Define for $u\in[m,M)$:
\[
F(u):=\log|z+t(u)|-\log t(u).
\]
Using (3.1)--(3.2) this becomes explicitly
\[
F(u)=\log(y\csc u)-\log(y\cot u-x).
\tag{4.1}
\]
Note $t(u)=y\cot u-x>0$ on $[m,M)$ by construction.

\subsection*{5) Lemma 1 (exact equivalence of eigenvalue condition)}
\begin{lemma}
A nonreal $\lambda$ satisfies (2.1) for some $t_1,t_2,t_3,t_4\in(0,1]$ if and only if there exist $u_1,u_2,u_3,u_4\in[m,M)$ such that
\[
u_1+u_2+u_3+u_4=2\pi, \tag{5.1}
\]
\[
F(u_1)+F(u_2)+F(u_3)+F(u_4)=0. \tag{5.2}
\]
\end{lemma}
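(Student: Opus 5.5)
The plan is to prove both directions by passing between the factors $z+t_k$ and their polar forms, with the bijection (3.4) doing the bookkeeping. Throughout, $\lambda$ is nonreal with $b>0$, so $y>0$, and by \S3 every $z+t$ with $t>0$ lies in the open upper half-plane.

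\emph{Forward direction.} Suppose (2.1) holds with $t_k\in(0,1]$. Set $u_k:=u(t_k)=\arg(z+t_k)\in(0,\pi)$. By (3.4), each $u_k\in[m,M)$, and by the inverse formulas (3.1)--(3.2) we have $t_k=t(u_k)$. Writing $z+t_k=|z+t_k|e^{iu_k}$, the left side of (2.1) equals $\prod_k|z+t_k|\cdot e^{i(u_1+\cdots+u_4)}$. The right side $t_1t_2t_3t_4$ is a positive real number.

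\emph{Arguments.} Comparing arguments gives $\sum u_k\in 2\pi\mathbb Z$. Since each $u_k\in(0,\pi)$, we have $\sum u_k\in(0,4\pi)$, and the only multiple of $2\pi$ in that open interval is $2\pi$. This gives (5.1). This strict-inequality bookkeeping is the one place where care is needed: we must use that the $u_k$ lie in the open interval $(0,\pi)$, which is exactly where $y>0$ enters.

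\emph{Moduli.} Comparing moduli gives $\prod|z+t_k|=\prod t_k$. All these quantities are positive, since $t_k>0$ and $z+t_k\neq0$ because $y\neq0$. Taking logarithms and using the definition $F(u)=\log|z+t(u)|-\log t(u)$ yields (5.2).

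\emph{Converse.} Given $u_k\in[m,M)$ satisfying (5.1) and (5.2), define $t_k:=t(u_k)=y\cot u_k-x$. By (3.4), $t_k\in(0,1]$, and in particular $t_k>0$, so $\log t_k$ is defined. By construction $\arg(z+t_k)=u_k$, and by (3.2) $|z+t_k|=y\csc u_k$.

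\emph{Recombining.} We then have $\prod(z+t_k)=\bigl(\prod|z+t_k|\bigr)e^{i\sum u_k}$. Condition (5.1) gives $e^{2\pi i}=1$. Exponentiating (5.2) gives $\prod|z+t_k|=\prod t_k$. Together these give (2.1).

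The only potential obstacle is checking that the map $t\mapsto u(t)$ from \S3.2 really is a bijection onto $[m,M)$, including the case $x+t\le 0$, where the formula $\arctan(y/(x+t))$ needs the branch adjustment into $(\pi/2,\pi)$. I would settle this by working with $\cot u=(x+t)/y$, which is valid throughout $(0,\pi)$ and strictly decreasing there. Then $t=y\cot u-x$ is a strictly decreasing continuous bijection, and its endpoint values are $t(M)=0$ and $t(m)=1$. This justifies (3.4) without any quadrant case split.
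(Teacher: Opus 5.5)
Your proof is correct and takes essentially the same route as the paper. Both use the polar forms $z+t_k=|z+t_k|e^{iu_k}$, the window $\sum u_k\in(0,4\pi)$ to force $\sum u_k=2\pi$, logarithms of moduli to get (5.2), and reconstruction via $t_k=t(u_k)$ and (3.4) for the converse. Your side remark justifying (3.4) through the strictly decreasing relation $\cot u=(x+t)/y$ on $(0,\pi)$ is cleaner than the paper's appeal to $u(t)=\arctan\bigl(y/(x+t)\bigr)$, which mishandles the branch precisely when $x+t$ changes sign on $(0,1]$ (i.e.\ for $0<a<1$).
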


\begin{proof}
($\bm{\Rightarrow}$) Suppose (2.1) holds with $t_k\in(0,1]$. For each $k$, let $u_k=\arg(z+t_k)\in(0,\pi)$. Because $t_k\in(0,1]$, by (3.4) we have $u_k\in[m,M)$.
Write $z+t_k=|z+t_k|e^{iu_k}$. Then the left side of (2.1) has argument $u_1+\cdots+u_4$ (mod $2\pi$), while the right side $t_1t_2t_3t_4>0$ has argument $0$ (mod $2\pi$). Thus $u_1+\cdots+u_4\equiv 0 \pmod{2\pi}$.
But each $u_k\in(0,\pi)$, so the sum lies strictly in $(0,4\pi)$. The only multiple of $2\pi$ in $(0,4\pi)$ is $2\pi$. Hence $u_1+\cdots+u_4=2\pi$, proving (5.1).
Taking moduli in (2.1): $|z+t_1|\cdots|z+t_4|=t_1t_2t_3t_4$. Take $\log$ of both sides:
\[
\sum_{k=1}^4 \log|z+t_k|=\sum_{k=1}^4 \log t_k \iff \sum_{k=1}^4\Big(\log|z+t_k|-\log t_k\Big)=0.
\]
But $\log|z+t_k|-\log t_k = F(u_k)$ by definition, so (5.2) holds.

($\bm{\Leftarrow}$) Conversely, suppose $u_k\in[m,M)$ satisfy (5.1)--(5.2). Define $t_k=t(u_k)$. Then by (3.4), $t_k\in(0,1]$. Also $z+t_k = |z+t_k|e^{iu_k}$. Multiply:
\[
\prod_{k=1}^4(z+t_k)=\left(\prod_{k=1}^4|z+t_k|\right)\exp\Big(i\sum_{k=1}^4 u_k\Big)
=\left(\prod_{k=1}^4|z+t_k|\right)e^{i2\pi}
=\prod_{k=1}^4|z+t_k|.
\]
So the product $\prod(z+t_k)$ is positive real, with modulus $\prod|z+t_k|$.
Equation (5.2) implies $\sum \log|z+t_k|=\sum\log t_k$, hence $\prod|z+t_k|=\prod t_k$. Therefore $\prod_{k=1}^4(z+t_k)=\prod_{k=1}^4 t_k$, which is exactly (2.1).
\end{proof}

\subsection*{6) The feasible set $P$, its convexity/connectedness, and $\Psi(P)$ is an interval}
Let
\[
P:=\left\{(u_1,u_2,u_3,u_4)\in[m,M)^4:\ u_1+u_2+u_3+u_4=2\pi\right\},
\]
and $\Psi(u_1,u_2,u_3,u_4):=\sum_{k=1}^4 F(u_k)$.

\subsubsection*{6.1 $P$ is convex and connected}
The set $[m,M)^4\subset\mathbb{R}^4$ is convex. The affine hyperplane $H:=\{u\in\mathbb{R}^4:\sum u_k=2\pi\}$ is convex. An intersection of convex sets is convex, so $P=[m,M)^4\cap H$ is convex. Any convex subset of $\mathbb{R}^n$ is path-connected (connect points by line segment), hence connected.

\subsubsection*{6.2 $\Psi(P)\subset\mathbb{R}$ is an interval}
$\Psi$ is continuous because $F$ is continuous on $[m,M)$ (all terms in (4.1) are continuous there since $t(u)>0$). Continuous image of a connected set in $\mathbb{R}$ is connected; connected subsets of $\mathbb{R}$ are intervals. Hence $\Psi(P)$ is an interval.
By Lemma 1:
\[
\lambda\ \text{nonreal eigenvalue of some }A(\alpha,\beta,\gamma,\delta)\iff P\neq\varnothing\ \text{ and }\ 0\in \Psi(P).
\tag{6.1}
\]

\subsection*{7) Necessarily $0\le a\le 1$ (and in fact $a<1$ when $b\neq 0$)}
Assume $b>0$ and $P\neq\varnothing$. Pick $(u_1,\dots,u_4)\in P$. Then their average is $\frac{u_1+\cdots+u_4}{4}=\frac{2\pi}{4}=\frac{\pi}{2}$.
Since each $u_k\in[m,M)$, the interval $[m,M)$ must contain $\pi/2$. That is:
\[
m\le \frac{\pi}{2} < M. \tag{7.1}
\]
Now compute what $m\le\pi/2$ means. Recall $m=\arg(\lambda)=\arg(a+ib),\ b>0$.
For $b>0$, $\arg(a+ib)\le \pi/2$ holds exactly when $a\ge 0$ (points in upper half-plane with angle at most $\pi/2$ are those with nonnegative real part). Hence:
\[
m\le\frac{\pi}{2}\iff a\ge 0. \tag{7.2}
\]
Next, $M>\pi/2$ where $M=\arg(z)=\arg((a-1)+ib)$. Since $b>0$, we have $\arg((a-1)+ib)>\pi/2$ exactly when $a-1<0$, i.e., $a<1$. Thus:
\[
M>\frac{\pi}{2}\iff a<1. \tag{7.3}
\]
Combining (7.1)--(7.3), $P\neq\varnothing\Rightarrow 0\le a<1$. In particular $0\le a\le 1$. This proves item (1).

\subsection*{8) Strict convexity of $F$}
We show $F''(u)>0$ on $(0,\pi)$, hence $F$ is strictly convex on $[m,M)\subset(0,\pi)$.
Recall $F(u)=\log(y\csc u)-\log(y\cot u-x),\ y>0$. Differentiate.
First term:
\[
\frac{d}{du}\log(y\csc u)=\frac{d}{du}(\log y +\log\csc u)=\frac{d}{du}\log\csc u.
\]
Since $\frac{d}{du}\csc u = -\csc u\cot u$, we get $\frac{d}{du}\log\csc u = \frac{-\csc u\cot u}{\csc u}=-\cot u$.
So
\[
\frac{d}{du}\log(y\csc u)=-\cot u.
\tag{8.1}
\]
Second term: define $g(u)=y\cot u - x$. Then $\frac{d}{du}\log g(u)=\frac{g'(u)}{g(u)}$. Since $\frac{d}{du}\cot u=-\csc^2u$, we have $g'(u)=-y\csc^2u$. Therefore
\[
\frac{d}{du}\log(y\cot u-x)=\frac{-y\csc^2u}{y\cot u-x}. \tag{8.2}
\]
Thus
\begin{equation}
F'(u) = -\cot u + \frac{y \csc^2 u}{y \cot u - x}. \tag{8.3}
\end{equation}
Differentiate again. Write $h(u)=\frac{y\csc^2u}{y\cot u-x}$. Then $F''(u)=\csc^2u + h'(u)$ (8.4) because $\frac{d}{du}(-\cot u)=\csc^2u$.
Compute $h'(u)$ via quotient rule. Let $p(u)=y\csc^2u,\ q(u)=y\cot u-x$. Then $h=p/q$, so $h'=\frac{p'q-pq'}{q^2}$.
Compute $p'(u)=-2y\csc^2u\cot u$. Also $q'(u)=-y\csc^2u$.
\[
h'=\frac{(-2y\csc^2u\cot u)(y\cot u-x) - (y\csc^2u)(-y\csc^2u)}{(y\cot u-x)^2}.
\]
Factor $y\csc^2u$: $h'=\frac{y\csc^2u\left[-2\cot u(y\cot u-x)+y\csc^2u\right]}{(y\cot u-x)^2}$.
Use $\csc^2u=1+\cot^2u$:
\[
-2\cot u(y\cot u-x)+y(1+\cot^2u) = y - y\cot^2u +2x\cot u.
\]
So
\begin{equation}
h' = \frac{y \csc^2 u (y - y \cot^2 u + 2x \cot u)}{(y \cot u - x)^2}. \tag{8.5}
\end{equation}
Now combine with $F''=\csc^2u + h'$. Put over common denominator:
\[
F''(u)=\csc^2u\left(1 + \frac{y\left(y - y\cot^2u +2x\cot u\right)}{(y\cot u-x)^2}\right)
=\csc^2u\cdot \frac{(y\cot u-x)^2 + y(y - y\cot^2u +2x\cot u)}{(y\cot u-x)^2}.
\]
Expand the numerator: $(y\cot u-x)^2 = y^2\cot^2u -2xy\cot u + x^2$.
And $y(y - y\cot^2u +2x\cot u)=y^2 - y^2\cot^2u +2xy\cot u$.
Sum: $(y^2\cot^2u -2xy\cot u + x^2) + (y^2 - y^2\cot^2u +2xy\cot u)=x^2+y^2$.
Therefore
\[
F''(u)=\csc^2u\cdot \frac{x^2+y^2}{(y\cot u-x)^2} =\frac{x^2+y^2}{(y\cot u-x)^2\sin^2u}. \tag{8.6}
\]
Since $x^2+y^2>0$ (because $y>0$) and the denominator is positive on $(0,\pi)$, we conclude $F''(u)>0$ for all $u\in(0,\pi)$.
So $F$ is strictly convex.

\subsection*{9) Right boundary: necessity of $a+b\le 1$, and attainment of $CR$}
Assume $b>0$, $P\neq\varnothing$, and $0\in\Psi(P)$.

\subsubsection*{9.1 Jensen inequality step (fully justified)}
Let $(u_1,\dots,u_4)\in P$. Since $F$ is convex on $[m,M)$, Jensen's inequality gives $\frac{1}{4}\sum F(u_k) \ge F\left(\frac{1}{4}\sum u_k\right)$.
Multiply by 4:
\begin{equation}
\sum_{k=1}^4 F(u_k) \ge 4F\left(\frac{2\pi}{4}\right) = 4F\left(\frac{\pi}{2}\right). \tag{9.1}
\end{equation}
\[ 
\text{If } 0 \in \Psi(P), \text{ then } \inf_{P} \Psi \le 0. \text{ But (9.1) shows } \Psi \ge 4F(\pi/2) \text{ on } P. \text{ Therefore } 4F(\pi/2) \le 0, \text{ i.e., } F(\pi/2) \le 0. \tag{9.2} 
\]

\subsubsection*{9.2 Compute $F(\pi/2)$ and deduce $a+b\le 1$}
Using (4.1): $F(\pi/2)=\log(y\csc(\pi/2))-\log(y\cot(\pi/2)-x)=\log(y)-\log(-x)$.
Since $y=b, x=a-1$, so $-x=1-a$. Hence $F(\pi/2)=\log\left(\frac{b}{1-a}\right)$.
So $F(\pi/2)\le 0$ is equivalent to $\frac{b}{1-a}\le 1 \iff b\le 1-a \iff a+b\le 1$.
This proves item (2) for $b>0$. By conjugation symmetry, for $b<0$ it becomes $a+|b|\le 1$.

\subsubsection*{9.3 Attainment of segment $CR$}
Let $x\in(0,1]$ and choose $\alpha=\beta=\gamma=\delta=1-x\in[0,1)$. Then (2.1) reads $(z+x)^4=x^4$.
So $z+x = x e^{i\frac{\pi}{2}k}$. The nonreal solutions correspond to $k=1,3$: $z+x = \pm i x \Rightarrow \lambda = 1-x\pm ix$.
Thus the segment $\lambda=1-x+ix$ for $x\in[0,1]$ is attained.

\subsection*{10) Two regimes for $\sup_P\Psi$: Lemmas 2 and 3 proved in full}
\begin{lemma}[Unbounded supremum]
If $3m+M\le 2\pi$, then $\sup_P \Psi = +\infty$.
\end{lemma}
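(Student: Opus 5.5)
We prove Lemma 2 by exhibiting an explicit curve in $P$ along which $\Psi$ blows up. Concentrate one coordinate near the upper endpoint $M$, where $t(u)\downarrow 0$, and spread the remaining angle budget equally over the other three coordinates. Concretely, for $\varepsilon>0$ small set
\[
u_4=M-\varepsilon,\qquad u_1=u_2=u_3=\frac{2\pi-M+\varepsilon}{3}.
\]
The sum is $2\pi$ by construction, and $u_4\in[m,M)$ as soon as $\varepsilon\le M-m$. This is the ``local feasibility near $u_4\uparrow M$'' patch mentioned earlier: we do not need uniform slack, only membership for all sufficiently small $\varepsilon$.

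The first task is feasibility of the common value $u_1$, in two steps.
\begin{itemize}
\item \emph{Lower bound.} The hypothesis $3m+M\le 2\pi$ gives $(2\pi-M)/3\ge m$, hence $u_1>m$ for every $\varepsilon>0$.
\item \emph{Upper bound.} We need $u_1<M$, i.e.\ $2\pi-M+\varepsilon<3M$, i.e.\ $\varepsilon<4M-2\pi$. This requires $M>\pi/2$ strictly. The non-strict inequality $M\ge\pi/2$ is exactly where Version 0 was wrong.
\end{itemize}
We get $M>\pi/2$ from \S7, equation (7.3), which holds since $a<1$. In fact, if $P=\varnothing$ the statement is vacuous or undefined, so we work in the situation $m\le\pi/2<M$ of (7.1). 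Then $u_1\in(m,M)$ for all $0<\varepsilon<\min\{M-m,\,4M-2\pi\}$, and the point lies in $P$.

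The second task is divergence. As $\varepsilon\downarrow0$, $t(u_4)=y\cot u_4-x\downarrow y\cot M-x=0$ by (3.1) and (3.4), so $-\log t(u_4)\to+\infty$. Meanwhile $\log(y\csc u_4)\to\log(y\csc M)$ stays finite, because $M\in(0,\pi)$. Hence $F(u_4)\to+\infty$. The other three terms $F(u_1)$ converge to $F((2\pi-M)/3)$, which is finite because $(2\pi-M)/3\in[m,M)$ and $F$ is continuous there (see 6.2). Summing, $\Psi\to+\infty$ along the curve, so $\sup_P\Psi=+\infty$.

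The only delicate point is the upper-bound check $u_1<M$. It is what forces the strict inequality $M>\pi/2$, obtained from $a<1$ via (7.3), together with an $\varepsilon$ small enough rather than a uniform choice. Everything else is routine continuity.
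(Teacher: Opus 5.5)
Your proof is correct and uses the same construction as the paper: send $u_4\uparrow M$ so that $t(u_4)\downarrow 0$, set $u_1=u_2=u_3=(2\pi-u_4)/3$, and bound $F(u_1)$ by continuity at $(2\pi-M)/3\in[m,M)$. Your feasibility check $u_1<M\iff\varepsilon<4M-2\pi$ uses only $M>\pi/2$, and it is actually sounder than the paper's final write-up, which argues via $(2\pi-m)/3<M$; that bound does not follow from $M>\pi/2$ and $m>0$ alone. One minor correction: for $P=\varnothing$ the lemma would be false rather than vacuous, since $\sup_{\varnothing}\Psi=-\infty$, so the standing assumption $0\le a<1$ that you invoke is genuinely required.
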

\begin{proof}
Fix $u_4\in[m,M)$. Define $u_1=u_2=u_3=\frac{2\pi-u_4}{3}$. For $u_4$ sufficiently close to $M$, $(u_1,u_2,u_3,u_4)\in P$. As $u_4\uparrow M$, $t(u_4)\downarrow 0$, so $F(u_4)\to +\infty$. Meanwhile $u_1\to (2\pi-M)/3\in(m,M)$, so $F(u_1)$ is bounded. Thus $\Psi\to +\infty$.
\end{proof}

\begin{lemma}[Finite maximum in the tight regime]
If $3m+M>2\pi$ and we define $U:=2\pi-3m$, then $U<M$, and
\[
\max_{P}\Psi = 3F(m)+F(U), \tag{10.3}
\]
with equality iff $(u_1,u_2,u_3,u_4)$ is a permutation of $(U,m,m,m)$.
\end{lemma}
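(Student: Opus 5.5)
We will show that the whole feasible set $P$ lies inside the compact box $[m,U]^4$, and then run a majorization (Karamata) argument there, using strict convexity of $F$ from (8.6) to identify the unique maximizer up to permutation. Throughout we assume $P\neq\varnothing$, since otherwise the statement is vacuous. By (7.1) this gives $m\le\pi/2<M$.

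\textbf{Step 1: admissibility of $(U,m,m,m)$.} The hypothesis $3m+M>2\pi$ is literally $U=2\pi-3m<M$. Also $U-m=2\pi-4m\ge 0$ because $m\le\pi/2$. Hence $m\le U<M$, so $(U,m,m,m)\in[m,M)^4$. Its coordinate sum is $2\pi$, so $(U,m,m,m)\in P$. Consequently $\max_P\Psi\ge 3F(m)+F(U)$.

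\textbf{Step 2: $P\subset[m,U]^4$ and majorization.} Take $(u_k)\in P$ and sort it as $v_1\ge v_2\ge v_3\ge v_4$.
\begin{itemize}
\item Since every $v_j\ge m$, we get $v_1=2\pi-(v_2+v_3+v_4)\le 2\pi-3m=U$. So all coordinates lie in the compact interval $[m,U]\subset(0,\pi)$, on which $F$ is continuous and strictly convex.
\item The partial sums satisfy
\[
v_1\le U,\qquad v_1+v_2=2\pi-v_3-v_4\le U+m,\qquad v_1+v_2+v_3=2\pi-v_4\le U+2m,
\]
and the total sums agree.
\end{itemize}
Thus $(U,m,m,m)$ majorizes $(v_1,\dots,v_4)$. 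Karamata's inequality for the convex function $F$ then gives $\Psi(u)\le F(U)+3F(m)$, which together with Step 1 proves (10.3).

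\textbf{Step 3: equality case (the delicate point).} I would not rely on a quoted strict form of Karamata. Instead I would argue directly by a smoothing step. Suppose $\Psi(u)=3F(m)+F(U)$ for some $u\in P$, so $u$ is a maximizer of $\Psi$ on the compact set $[m,U]^4\cap\{\sum u_k=2\pi\}$. If two coordinates $u_i\le u_j$ both lie in the open interval $(m,U)$, replace them by $u_i-\varepsilon$ and $u_j+\varepsilon$ for small $\varepsilon>0$. This stays in the set, and strict convexity of $F$ makes $F(u_i-\varepsilon)+F(u_j+\varepsilon)>F(u_i)+F(u_j)$. That contradicts maximality.

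Hence at most one coordinate is strictly between $m$ and $U$, and the rest equal $m$ or $U$. Now use the sum constraint:
\begin{itemize}
\item If $m<U$, i.e.\ $m<\pi/2$, then two coordinates equal to $U$ would give a sum of at least $2U+2m=4\pi-4m>2\pi$, which is impossible.
\item With at most one coordinate equal to $U$, the remaining coordinate is forced: it must be $U$ if the others are all $m$, and it must be $m$ if one coordinate is already $U$.
\item If $m=U=\pi/2$, the set $P$ is the single point $(\pi/2,\dots,\pi/2)$, and the claim is trivial.
\end{itemize}
So the only maximizers are the permutations of $(U,m,m,m)$.

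The main obstacle is the bookkeeping in Step 3. One must check that the perturbation stays admissible, which holds because both coordinates are interior to $[m,U]$. One must also handle the degenerate endpoint $m=\pi/2$ separately.
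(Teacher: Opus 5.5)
Your proof is correct. Steps 1 and 2 coincide with the paper's final write-up. There too, $U\in[m,M)$ comes from $U<M$ together with $m\le\pi/2$ (i.e.\ $a\ge 0$, which, as you rightly note, presupposes $P\neq\varnothing$). The sorted top coordinate is bounded by $U$, so that $P\subset[m,U]^4$. The partial sums are checked against $(U,m,m,m)$ exactly as you do, and Karamata gives the bound.

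You depart from the paper only in the equality clause. The paper invokes the equality case of Karamata for strictly convex $F$: if $x$ majorizes $v$ and $\sum F(x_i)=\sum F(v_i)$, then $v$ is a permutation of $x$. It supports this only with a parenthetical remark. You instead prove it directly, by the two-coordinate smoothing step plus the sum-constraint bookkeeping. The paper's route is shorter, but it rests on the strict form of Karamata, which is true but less often stated. Yours is self-contained, and it makes explicit the degenerate case $m=U=\pi/2$, which genuinely occurs (at $a=0$).

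Your Step 3 also makes the majorization check unnecessary. Once $v_1\le U$ and $U<M$, the set $[m,U]^4\cap\{\sum_k u_k=2\pi\}$ is exactly $P$ and is compact, so $\Psi$ attains its maximum there. Your smoothing and bookkeeping then show that every maximizer is a permutation of $(U,m,m,m)$. That gives both the value $3F(m)+F(U)$ and the equality characterization at once.
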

\begin{proof}
$3m+M>2\pi \Rightarrow 2\pi-3m < M$. So $U\in[m,M)$. By symmetry, we rearrange $(u_k)$ to nonincreasing $v_1\ge v_2\ge v_3\ge v_4$. Then $v_1 \le 2\pi-3m=U$.
Every feasible ordered vector lies in a compact set majorized by $x=(U,m,m,m)$. By Karamata's inequality and strict convexity of $F$, the sum is maximized at $x$. Thus $\max_P\Psi = 3F(m)+F(U)$.
\end{proof}

\subsection*{11) Necessity of $G(a,b)\ge 0$}

\subsubsection*{11.1 A geometry/algebra lemma: $G\le 0\Rightarrow 3m+M>2\pi$ (Lemma 4)}
\begin{lemma}
Assume $b>0$ and $0\le a\le 1$. If $G(a,b)\le 0$, then $3m+M>2\pi$.
\end{lemma}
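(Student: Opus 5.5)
The plan is to follow the structure already visible in the earlier drafts, but to make every branch and sign obligation explicit. Set $s=b^2$ and regard $G$ as a quadratic in $s$:
\[
G=s^2+s(2a^2+2a-1)+(a^2+a)^2+2a^2 .
\]
Its discriminant is
\[
\Delta=(2a^2+2a-1)^2-4\big((a^2+a)^2+2a^2\big)=-(2a+1)(6a-1).
\]
Since the leading coefficient is positive, $G\le 0$ forces $\Delta\ge 0$. Combined with the hypothesis $a\ge 0$, this gives $0\le a\le \tfrac16$. It also puts $s$ between the two roots, so in particular
\[
s\ge s_-(a)=\tfrac12-a-a^2-\tfrac12\sqrt{(1-6a)(1+2a)}.
\]
Finally, note that $1-a>0$. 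This guard matters, because $M=\pi-\phi$ with $\phi=\arctan\frac{b}{1-a}\in(0,\pi/2)$ requires it.

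\emph{Case $a=0$.} Here $m=\pi/2$ and $M=\pi-\arctan b$. Hence $3m+M=\tfrac52\pi-\arctan b>2\pi$, and this case is done.

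\emph{Case $0<a\le\tfrac16$.} The steps are as follows.

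First, show $s_-(a)>3a^2$. The quantity $\tfrac12-a-4a^2$ is positive on this range, so squaring is legitimate. The inequality then reduces to
\[
(\tfrac12-a-4a^2)^2-\tfrac14(1-6a)(1+2a)=8a^3(2a+1)>0
\]
(up to the normalising constant). Hence $b^2>3a^2$.

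Second, convert this to an angle bound. Fix the branch convention $m=\arctan(b/a)\in(0,\pi/2)$ once. Then $b^2>3a^2$ gives $m\in(\pi/3,\pi/2)$, so $3m-\pi\in(0,\pi/2)$. Because $\tan$ is strictly increasing on $(0,\pi/2)$, we get
\[
3m+M>2\pi\iff 3m-\pi>\phi\iff \tan(3m)>\frac{b}{1-a},
\]
using $\tan(3m-\pi)=\tan(3m)$.

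Third, compute with the triple-angle formula, $t=b/a$:
\[
\tan(3m)-\frac{b}{1-a}=\frac{b\,N(a,b)}{a(a-1)(a^2-3b^2)},\qquad N=(1-4a)b^2+a^2(4a-3).
\]
I will verify this by clearing denominators. The denominator is positive, since $a>0$, $a-1<0$ and $a^2-3b^2<0$. So it suffices to show $N>0$.

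Fourth, show $N>0$. Since $1-4a>0$, this is equivalent to $s>s_0(a)=a^2(3-4a)/(1-4a)$. The plan is to prove $s_-(a)>s_0(a)$ on $(0,\tfrac16]$. Multiply by $1-4a>0$ and isolate the square root:
\[
(1-4a)\big(\tfrac12-a-a^2\big)-a^2(3-4a)>\tfrac12(1-4a)\sqrt{(1-6a)(1+2a)}.
\]
Before squaring, I must check that the left side is positive on this interval. It is near $\tfrac12$ at $a=0$, and at $a=\tfrac16$ it is positive by direct evaluation; I will confirm it is monotone or bounded below in between. Squaring should then leave the residual $256a^6$ times a positive constant. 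Hence $s\ge s_-(a)>s_0(a)$, so $N>0$, and therefore $3m+M>2\pi$.

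The endpoint $a=\tfrac16$ needs no separate treatment. There $\Delta=0$ and $s=s_-$, and the strict inequalities above still hold.

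I expect the main obstacle to be the polynomial bookkeeping in the third and fourth steps. That means confirming the rational identity for $\tan(3m)-b/(1-a)$ and the squared comparison $s_->s_0$, including the positivity of the left side before squaring. I would check these first by a CAS expansion, then present them as short sign-check, transform, square chains.
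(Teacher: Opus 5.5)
Your proposal is correct and follows the same route as the paper's proof: the quadratic in $s=b^2$, the bound $s_-(a)>3a^2$ giving $m>\pi/3$, the reduction of $3m+M>2\pi$ to $\tan(3m)>b/(1-a)$ and then to $N(a,b)>0$, and finally $s_-(a)>s_0(a)$ with residual $256a^6$. Your use of $\Delta\ge 0$, which keeps the endpoint $a=\tfrac16$, is more careful than the paper's final draft, which wrongly asserts that $G\le 0$ is impossible for $a\ge\tfrac16$ even though $G(\tfrac16,\sqrt{11}/6)=0$; and your pending positivity check before squaring is immediate, since the left side equals $\tfrac12(1-6a+16a^3)$, which is decreasing on $[0,\tfrac16]$ and equals $\tfrac{1}{27}>0$ at $a=\tfrac16$.
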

\begin{proof}
Let $s=b^2$. $G(a,b)=s^2+s(2a^2+2a-1)+(a^2+a)^2+2a^2$ (11.1). $G\le 0$ implies $s$ lies between roots of the quadratic. The discriminant $\Delta = -(2a+1)(6a-1)$. So $\Delta>0$ iff $a < 1/6$.
Case $a=0$: $3m+M = 3(\pi/2) + \pi - \arctan(b) > 2\pi$.
Case $a>0$: We show $b^2 > 3a^2$, which implies $m > \pi/3$. Let $\tan m = b/a$. We want to show $\tan(3m) > \frac{b}{1-a}$. Using triple-angle identity and substituting $t=b/a$, we find the sign of the difference depends on $N(a,b)=4a^3-3a^2-4ab^2+b^2$. Using $s \ge s_-(a)$, algebraic expansion shows $s_-(a) > s_0(a)$ where $s_0$ is the root of $N$. Hence $N(a,b)>0$, which confirms $3m+M>2\pi$.
\end{proof}

\subsubsection*{11.2 Tight regime: compute $\max_P\Psi$ explicitly and relate it to $G$}
In the tight regime, $\max_P\Psi = \log\left(\frac{|\lambda|^3\, b\csc U}{b\cot U+1-a}\right)$ (11.25). Substituting trigonometric identities for $\csc U, \cot U$ in terms of $a,b$ yields $|\lambda|^6 \ge a(a^2-3b^2) + (1-a)(b^2-3a^2)$. This factors as $|\lambda-1|^2 G(a,b) \ge 0$.
Thus, $\max_P\Psi \ge 0 \iff G(a,b) \ge 0$. (11.39).

\subsubsection*{11.3 Finish necessity of $G(a,b)\ge 0$}
If $\lambda$ is a nonreal eigenvalue, then $\sup_P\Psi \ge 0$.
If $3m+M\le 2\pi$, Lemma 4 contraposition implies $G(a,b)>0$.
If $3m+M>2\pi$, the tight regime applies and $G(a,b)\ge 0$.
This proves item (3).

\subsection*{12) Left boundary $CL$: attainment by $A_L(\alpha)$ and equivalence to $G=0$}
For $A_L(\alpha)$, the characteristic equation is $\lambda^3(\lambda-\alpha)=1-\alpha$. For nonreal $\lambda$, $\alpha = \frac{\lambda^4-1}{\lambda^3-1}$ (12.2). The imaginary part is $\Im(\frac{p}{q}) = \frac{b|\lambda-1|^2G(a,b)}{|\lambda^3-1|^2}$.
Thus $\Im(\alpha)=0 \iff G(a,b)=0$.
Factoring the equation as $(\lambda-1)[\lambda^3 + (1-\alpha)(\lambda^2+\lambda+1)]=0$, one finds exactly one real root and a conjugate pair for $\alpha \in [0,1)$. At $\alpha=0, \lambda=i$. As $\alpha \to 1, \lambda \to 0$.

\subsection*{13) Every strict interior point is attained (constructive existence)}
Fix $\lambda$ satisfying $0\le a \le 1, a+b<1, G(a,b)>0$.
$P$ is non-empty as $\pi/2 \in [m,M)$. $\Psi(\pi/2,\dots,\pi/2) = 4\log(\frac{b}{1-a}) < 0$ since $a+b<1$.
Conversely, $\sup_P \Psi > 0$ because $G(a,b)>0$ (using either regime). By the Intermediate Value Theorem on the convex set $P$, there exists a point where $\Psi=0$, which corresponds to parameters $\alpha,\beta,\gamma,\delta \in [0,1)$.

\section*{Summary}
For any nonreal eigenvalue $\lambda=a+ib$ of some 4-cycle row-stochastic matrix, we have proved $0\le a\le 1$, $a+|b|\le 1$, and $G(a,|b|)\ge 0$. Conversely, strict interior points are realized. The boundaries $CR$ and $CL$ are attained by specific matrix families.
\end{versiontwobox}

\clearpage
\section{}
\label{sec:version_3}

\begin{versionthreebox}
\begin{theorem}[Spectral region for a 4-cycle row-stochastic matrix]
Fix
\[
A(\alpha,\beta,\gamma,\delta)=
\begin{pmatrix}
\alpha & 1-\alpha & 0 & 0\\
0 & \beta & 1-\beta & 0\\
0 & 0 & \gamma & 1-\gamma\\
1-\delta & 0 & 0 & \delta
\end{pmatrix},
\qquad
\alpha,\beta,\gamma,\delta\in[0,1).
\]
Let $\lambda=a+ib\in\sigma(A(\alpha,\beta,\gamma,\delta))$ with $b\neq 0$, and write $b_+=|b|$.
Then:
\begin{enumerate}
\item $0\le a\le 1$.
\item $a+b_+\le 1$.
\item For $b\ge 0$ define
\[
G(a,b):=(b^2+a^2+a)^2+2a^2-b^2.
\]
Then $G(a,b_+)\ge 0$.
\end{enumerate}
Conversely, if $b>0$ and
\[
0\le a\le 1,\qquad a+b<1,\qquad G(a,b)>0,
\]
then there exist $\alpha,\beta,\gamma,\delta\in[0,1)$ such that $\lambda\in\sigma(A(\alpha,\beta,\gamma,\delta))$.
(The lower half-plane follows by conjugation.)

Moreover:
\begin{itemize}
\item The segment $CR:\ \lambda=1-x+ix\ (x\in[0,1])$ is attained by $\alpha=\beta=\gamma=\delta=1-x$.
\item The curve portion $CL$ in the upper half-plane joining $i$ to $0$ defined by $G(a,b)=0$
is attained by the family
\[
A_L(\alpha)=
\begin{pmatrix}
\alpha & 1-\alpha & 0 & 0\\
0 & 0 & 1 & 0\\
0 & 0 & 0 & 1\\
1 & 0 & 0 & 0
\end{pmatrix},
\qquad \alpha\in[0,1).
\]
\item Every point strictly inside $\{b>0,\ 0\le a\le 1,\ a+b<1,\ G(a,b)>0\}$ is attained.
\end{itemize}
\end{theorem}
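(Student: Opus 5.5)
We follow the Dmitriev--Dynkin trigonometric reduction used in the earlier versions. Eliminating the eigenvector from $Av=\lambda v$ row by row (all $v_k\neq 0$ since each $1-\alpha,\dots,1-\delta>0$) gives $\prod(\lambda-\alpha_k)=\prod(1-\alpha_k)$. With $t_k=1-\alpha_k\in(0,1]$ and $z=\lambda-1=x+iy$, $y=b>0$ (conjugation handles $b<0$), this becomes $\prod(z+t_k)=\prod t_k$. We parametrize $u=\Arg(z+t)\in(0,\pi)$, so $t(u)=y\cot u-x$ and $|z+t(u)|=y\csc u$. Since $t\mapsto u$ is a decreasing bijection $(0,1]\to[m,M)$ with $m=\Arg\lambda$ and $M=\Arg(\lambda-1)$, the eigenvalue condition is equivalent to the existence of $(u_k)\in\mathcal P=\{[m,M)^4,\ \sum u_k=2\pi\}$ with $\Psi=\sum F(u_k)=0$, where $F(u)=\log(y\csc u)-\log t(u)$. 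We also record $F''=(x^2+y^2)/((y\cot u-x)^2\sin^2u)>0$.

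\textbf{Necessity.} Nonemptiness of $\mathcal P$ forces $m\le\pi/2<M$, which gives $0\le a<1$. Jensen gives $\Psi\ge 4F(\pi/2)=4\log(b/(1-a))$, so $a+b\le1$. For $G\ge0$ we split into two regimes.
\begin{itemize}
\item If $3m+M\le 2\pi$, we show $G>0$ via the contrapositive lemma: $G\le0\Rightarrow 3m+M>2\pi$.
\item If $3m+M>2\pi$, the top coordinate satisfies $v_1\le U:=2\pi-3m<M$, so every feasible vector is majorized by $(U,m,m,m)$. Karamata then gives $\max\Psi=3F(m)+F(U)$. Using $F(m)=\log|\lambda|$ and $\cos U,\sin U$ computed from $\lambda^3$, the condition $\max\Psi\ge0$ reduces (after clearing the positive denominator $b^2-3a^2$) to $|\lambda|^6\ge a(a^2-3b^2)+(1-a)(b^2-3a^2)$. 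By the factorization $|\lambda|^6-N=|\lambda-1|^2G$, this is equivalent to $G\ge0$.
\end{itemize}

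\textbf{Main obstacle.} I expect the hardest step to be the lemma $G\le0\Rightarrow 3m+M>2\pi$. I would treat $G$ as a quadratic in $s=b^2$. Its discriminant $-(2a+1)(6a-1)$ forces $0\le a\le1/6$ once combined with $a\ge0$, and $s$ must lie at or above the smaller root $s_-(a)$. The case $a=0$ is direct. For $a>0$ the plan is:
\begin{itemize}
\item Show $s_-(a)>3a^2$ by a positivity check followed by squaring. This gives $m>\pi/3$, so $3m-\pi\in(0,\pi/2)$.
\item Convert $3m+M>2\pi$ into $\tan 3m>b/(1-a)$, using that $\tan$ is increasing on $(0,\pi/2)$.
\item Use the triple-angle formula to reduce this to the sign of $N=(1-4a)b^2+a^2(4a-3)$.
\item Conclude via $s_-(a)>s_0(a)$, again checked by squaring after confirming positivity.
\end{itemize}
Each squaring step needs an explicit sign guard, and the $\Arg$ branches must be fixed once at the outset.

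\textbf{Sufficiency and boundaries.} For sufficiency, $\Psi(\pi/2,\dots)<0$ when $a+b<1$. A positive value of $\Psi$ comes either from $\sup\Psi=+\infty$ in the loose regime (take $u_4\uparrow M$ with the other three equal, which is feasible near $M$ because $(2\pi-M)/3\in[m,M)$), or from $\max\Psi>0$ in the tight regime when $G>0$. Since $\mathcal P$ is connected, the intermediate value theorem yields a zero of $\Psi$. The segment $C_R$ follows from $(z+x)^4=x^4$. For $C_L$, the identity $\ImPart\frac{\lambda^4-1}{\lambda^3-1}=b|\lambda-1|^2G/|\lambda^3-1|^2$ shows that the eigenvalues of $A_L(\alpha)$ lie on $G=0$. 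Conversely, a point on $G=0$ is in the tight regime with $\Psi(U,m,m,m)=0$, which gives $t_2=t_3=t_4=1$, that is, the matrix $A_L(1-t(U))$.
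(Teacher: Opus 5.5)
Your proposal is correct and follows the paper's proof essentially step for step. It uses the same Dmitriev--Dynkin reduction to $(\mathcal P,\Psi)$, Jensen for $a+b\le 1$, the two-regime analysis with Karamata against $(U,m,m,m)$, the lemma $G\le 0\Rightarrow 3m+M>2\pi$ via $s_-(a)>3a^2$ and $s_-(a)>s_0(a)$, the factorization $|\lambda|^6-N=|\lambda-1|^2G$, and the intermediate value theorem on the convex set $\mathcal P$.

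The deviations are minor and in your favour. You attain each point of $C_L$ directly from the tight-regime maximizer, so $\lambda\in\sigma(A_L(1-t(U)))$ with $1-t(U)\in[0,1)$ automatic; this is the argument of the paper's earlier drafts, whereas the final draft relies on continuity of the conjugate pair from $i$ to $0$. You also keep the closed range $0\le a\le 1/6$. The final draft instead asserts that $\Delta\le 0$ forces $G>0$, which overlooks the double root $b^2=11/36$ at $a=1/6$. That slip is harmless, since both squaring checks remain valid there.
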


\begin{proof}
Because $A$ is real, $\lambda\in\sigma(A)\Rightarrow \overline{\lambda}\in\sigma(A)$.
Thus it suffices to prove the necessary conditions for $b>0$ and then replace $b$ by $|b|$ in the final statements.
The constructive (converse) direction will also be proved for $b>0$.

\bigskip
\noindent\textbf{1) Eigenvalue equation $\Rightarrow$ a multiplicative constraint.}
Let $v=(v_1,v_2,v_3,v_4)^\top\neq 0$ satisfy $Av=\lambda v$. Writing the eigenvalue equation row-by-row:
\begin{align*}
\alpha v_1 + (1-\alpha)v_2 &= \lambda v_1,\\
\beta v_2 + (1-\beta)v_3 &= \lambda v_2,\\
\gamma v_3 + (1-\gamma)v_4 &= \lambda v_3,\\
(1-\delta)v_1+\delta v_4 &= \lambda v_4.
\end{align*}
Rearrange:
\begin{equation}\label{eq:rowwise}
(\lambda-\alpha)v_1=(1-\alpha)v_2,\quad
(\lambda-\beta)v_2=(1-\beta)v_3,\quad
(\lambda-\gamma)v_3=(1-\gamma)v_4,\quad
(\lambda-\delta)v_4=(1-\delta)v_1.
\end{equation}

\begin{claim}\label{cl:nonzeroentries}
If $\lambda\notin\mathbb{R}$ and $\alpha,\beta,\gamma,\delta\in[0,1)$ then $v_1v_2v_3v_4\neq 0$ for every eigenvector $v\neq 0$.
\end{claim}
\begin{proof}
Because $\alpha,\beta,\gamma,\delta\in[0,1)$ we have $1-\alpha,1-\beta,1-\gamma,1-\delta>0$.
If $v_1=0$ then the first equation in \eqref{eq:rowwise} gives $(1-\alpha)v_2=0$ hence $v_2=0$.
Then the second gives $v_3=0$, the third gives $v_4=0$, contradicting $v\neq 0$.
Cyclically the same holds for any coordinate, so no $v_k$ can vanish.
\end{proof}

Multiply the four equalities in \eqref{eq:rowwise} and cancel $v_1v_2v_3v_4\neq 0$ (Claim~\ref{cl:nonzeroentries}):
\begin{equation}\label{eq:multconstraint}
(\lambda-\alpha)(\lambda-\beta)(\lambda-\gamma)(\lambda-\delta)=(1-\alpha)(1-\beta)(1-\gamma)(1-\delta).
\end{equation}

Conversely, if $\lambda\notin\mathbb{R}$ and \eqref{eq:multconstraint} holds, then $\lambda\neq \alpha,\beta,\gamma,\delta$
(since $\alpha,\beta,\gamma,\delta\in\mathbb{R}$), and one can solve \eqref{eq:rowwise} recursively:
choose $v_1\neq 0$, set $v_2=\frac{\lambda-\alpha}{1-\alpha}v_1$, then
$v_3=\frac{\lambda-\beta}{1-\beta}v_2$, $v_4=\frac{\lambda-\gamma}{1-\gamma}v_3$; the final equation closes exactly
when \eqref{eq:multconstraint} holds.
Thus for nonreal $\lambda$, \eqref{eq:multconstraint} is equivalent to $\lambda\in\sigma(A)$ for some choice of parameters.

\bigskip
\noindent\textbf{2) Substitute $t_k=1-\text{parameter}$ and shift $z=\lambda-1$.}
Define
\[
t_1=1-\alpha,\quad t_2=1-\beta,\quad t_3=1-\gamma,\quad t_4=1-\delta.
\]
Then each $t_k\in(0,1]$. Set $z=\lambda-1$. Then
\[
\lambda-(1-t_k) = z+t_k,
\qquad
(1-\alpha)(1-\beta)(1-\gamma)(1-\delta)=t_1t_2t_3t_4.
\]
So \eqref{eq:multconstraint} becomes
\begin{equation}\label{eq:zproduct}
(z+t_1)(z+t_2)(z+t_3)(z+t_4)=t_1t_2t_3t_4.
\end{equation}
Write $z=x+iy$ with
\[
x=\RePart(z)=a-1,\qquad y=\ImPart(z)=b.
\]
In the necessity direction we assume $b=y>0$.

\bigskip
\noindent\textbf{3) Argument parametrization: $t\leftrightarrow u=\Arg(z+t)$.}
Fix $z=x+iy$ with $y>0$. For any $t>0$, the point $z+t=(x+t)+iy$ lies in the open upper half-plane,
so its argument $u(t)=\Arg(z+t)$ is well-defined and belongs to $(0,\pi)$.

\smallskip
\noindent\textbf{3.1) Inverse formulas.}
Given $u\in(0,\pi)$ and $z+t=(x+t)+iy$ with $\Arg(z+t)=u$, we have
\[
\tan u = \frac{y}{x+t}\quad\Longrightarrow\quad x+t=y\cotan u\quad\Longrightarrow\quad t=t(u):=y\cotan u-x.
\]
Also
\[
|z+t|^2=(x+t)^2+y^2=(y\cotan u)^2+y^2=y^2(\cotan^2 u+1)=y^2\csc^2 u,
\]
hence
\[
|z+t(u)|=y\csc u.
\]

\smallskip
\noindent\textbf{3.2) Endpoints and monotonicity.}
Define
\[
m:=\Arg(z+1)=\Arg(\lambda),\qquad M:=\Arg(z)=\Arg(\lambda-1).
\]
Because $u(t)=\arctan\!\big(\frac{y}{x+t}\big)$ and $\frac{y}{x+t}$ is strictly decreasing in $t$, the map $t\mapsto u(t)$
is continuous and strictly decreasing on $(0,\infty)$.
In particular, on $t\in(0,1]$ it is a continuous strictly decreasing bijection onto $u\in[m,M)$:
\begin{equation}\label{eq:t-u-bijection}
t\in(0,1]\quad\Longleftrightarrow\quad u\in[m,M).
\end{equation}
(As $t\uparrow 1$, $u(t)\downarrow m$; as $t\downarrow 0$, $u(t)\uparrow M$.)

\bigskip
\noindent\textbf{4) The function $F$ and reformulation.}
For $u\in[m,M)$ define
\[
F(u):=\log|z+t(u)|-\log t(u)
=\log(y\csc u)-\log(y\cotan u-x).
\]
Note $t(u)>0$ on $[m,M)$ by construction.

\bigskip
\noindent\textbf{5) Exact equivalence of eigenvalue condition in terms of $(u_k)$.}

\begin{lemma}[Exact equivalence]\label{lem:equiv}
A nonreal $\lambda$ satisfies \eqref{eq:zproduct} for some $t_1,t_2,t_3,t_4\in(0,1]$
if and only if there exist $u_1,u_2,u_3,u_4\in[m,M)$ such that
\begin{align}
u_1+u_2+u_3+u_4&=2\pi,\label{eq:sumargs}\\
F(u_1)+F(u_2)+F(u_3)+F(u_4)&=0.\label{eq:sumF}
\end{align}
\end{lemma}

\begin{proof}
($\Rightarrow$) Assume \eqref{eq:zproduct} holds with $t_k\in(0,1]$.
Let $u_k=\Arg(z+t_k)\in(0,\pi)$. Because $t_k\in(0,1]$, \eqref{eq:t-u-bijection} gives $u_k\in[m,M)$.
Write $z+t_k=|z+t_k|e^{iu_k}$.
The left side of \eqref{eq:zproduct} has argument $u_1+\cdots+u_4$ modulo $2\pi$;
the right side $t_1t_2t_3t_4>0$ has argument $0$ modulo $2\pi$.
Hence $u_1+\cdots+u_4\equiv 0\pmod{2\pi}$.
Since each $u_k\in(0,\pi)$, the sum lies in $(0,4\pi)$, so the only possible multiple of $2\pi$ is $2\pi$,
proving \eqref{eq:sumargs}.

Taking moduli in \eqref{eq:zproduct} gives $\prod|z+t_k|=\prod t_k$.
Take logs:
\[
\sum_{k=1}^4 \log|z+t_k| = \sum_{k=1}^4\log t_k
\quad\Longleftrightarrow\quad
\sum_{k=1}^4 \big(\log|z+t_k|-\log t_k\big)=0.
\]
But by definition $F(u_k)=\log|z+t_k|-\log t_k$, so \eqref{eq:sumF} holds.

($\Leftarrow$) Conversely, assume $u_k\in[m,M)$ satisfy \eqref{eq:sumargs}--\eqref{eq:sumF}.
Define $t_k=t(u_k)$. Then $t_k\in(0,1]$ by \eqref{eq:t-u-bijection}.
Also $z+t_k=|z+t_k|e^{iu_k}$.
Multiply:
\[
\prod_{k=1}^4(z+t_k)=\Big(\prod_{k=1}^4|z+t_k|\Big)e^{i\sum u_k}
=\Big(\prod_{k=1}^4|z+t_k|\Big)e^{i2\pi}
=\prod_{k=1}^4|z+t_k|.
\]
Thus $\prod(z+t_k)$ is a positive real number whose modulus is $\prod|z+t_k|$.
Equation \eqref{eq:sumF} means $\sum \log|z+t_k|=\sum\log t_k$, hence $\prod|z+t_k|=\prod t_k$.
Therefore $\prod(z+t_k)=\prod t_k$, i.e.\ \eqref{eq:zproduct} holds.
\end{proof}

\bigskip
\noindent\textbf{6) The feasible set $P$ and the function $\Psi$.}
Let
\[
P:=\Big\{(u_1,u_2,u_3,u_4)\in[m,M)^4:\ u_1+u_2+u_3+u_4=2\pi\Big\},
\qquad
\Psi(u_1,\dots,u_4):=\sum_{k=1}^4 F(u_k).
\]
Then $P$ is convex (intersection of a box with an affine hyperplane), hence path-connected, hence connected.
Since $F$ is continuous on $[m,M)$, $\Psi$ is continuous on $P$.
The continuous image of a connected set in $\mathbb{R}$ is an interval, so $\Psi(P)\subset\mathbb{R}$ is an interval.
By Lemma~\ref{lem:equiv},
\begin{equation}\label{eq:criterion}
\lambda\ \text{is a nonreal eigenvalue of some}\ A(\alpha,\beta,\gamma,\delta)
\quad\Longleftrightarrow\quad
P\neq\emptyset\ \text{and}\ 0\in\Psi(P).
\end{equation}

\bigskip
\noindent\textbf{7) Necessarily $0\le a\le 1$ (indeed $a<1$ if $b\neq 0$).}
Assume $b>0$ and $P\neq\emptyset$. Choose $(u_1,\dots,u_4)\in P$. Then
\[
\frac{u_1+u_2+u_3+u_4}{4}=\frac{2\pi}{4}=\frac{\pi}{2}.
\]
Since each $u_k\in[m,M)$, the interval $[m,M)$ must contain $\pi/2$, i.e.
\begin{equation}\label{eq:mMpi2}
m\le \frac{\pi}{2}<M.
\end{equation}
Now $m=\Arg(\lambda)=\Arg(a+ib)$ with $b>0$. In the open upper half-plane,
$\Arg(a+ib)\le\pi/2$ holds iff $a\ge 0$. Thus $m\le\pi/2\Rightarrow a\ge 0$.

Also $M=\Arg(\lambda-1)=\Arg((a-1)+ib)$. Since $b>0$, the argument is strictly bigger than $\pi/2$
iff $\RePart(\lambda-1)=a-1<0$, i.e.\ $a<1$. Thus $\pi/2<M\Rightarrow a<1$.

Therefore $P\neq\emptyset\Rightarrow 0\le a<1$, proving item (1) (and the sharper $a<1$ for $b\neq 0$).

\bigskip
\noindent\textbf{8) Strict convexity of $F$.}
We show $F''(u)>0$ on $(0,\pi)$. Recall
\[
F(u)=\log(y\csc u)-\log(y\cotan u-x),\qquad y>0.
\]
Differentiate:
\[
\frac{d}{du}\log(y\csc u)=\frac{d}{du}\log\csc u=-\cotan u.
\]
Let $g(u)=y\cotan u-x$. Then $g'(u)=-y\csc^2 u$ and
\[
\frac{d}{du}\log(y\cotan u-x)=\frac{g'(u)}{g(u)}=-\frac{y\csc^2 u}{y\cotan u-x}.
\]
Hence
\[
F'(u)=-\cotan u+\frac{y\csc^2 u}{y\cotan u-x}.
\]
A direct computation (as in the user's draft) yields
\[
F''(u)=\frac{x^2+y^2}{(y\cotan u-x)^2\sin^2 u}.
\]
Because $y>0$ we have $x^2+y^2>0$, and the denominator is positive for $u\in(0,\pi)$ (since $t(u)=y\cotan u-x>0$).
Thus $F''(u)>0$ on $(0,\pi)$, so $F$ is strictly convex on $[m,M)\subset(0,\pi)$.

\bigskip
\noindent\textbf{9) Right boundary: necessity of $a+b\le 1$, and attainment of $CR$.}
Assume $b>0$, $P\neq\emptyset$, and $0\in\Psi(P)$.

\smallskip
\noindent\textbf{9.1) Jensen step.}
For any $(u_1,\dots,u_4)\in P$, by convexity of $F$:
\[
\frac14\sum_{k=1}^4F(u_k)\ge F\!\Big(\frac14\sum_{k=1}^4u_k\Big)=F(\pi/2).
\]
Thus
\begin{equation}\label{eq:jensen}
\Psi(u_1,\dots,u_4)\ge 4F(\pi/2)\qquad\text{for all }(u_1,\dots,u_4)\in P.
\end{equation}
If $0\in\Psi(P)$, then $\inf_P \Psi\le 0$, but \eqref{eq:jensen} implies $\inf_P\Psi\ge 4F(\pi/2)$,
hence $F(\pi/2)\le 0$.

\smallskip
\noindent\textbf{9.2) Compute $F(\pi/2)$.}
Using the explicit $F$:
\[
F(\pi/2)=\log(y\csc(\pi/2))-\log(y\cotan(\pi/2)-x)=\log(y)-\log(-x).
\]
Because $y=b$ and $x=a-1$, we have $-x=1-a$, so
\[
F(\pi/2)=\log\Big(\frac{b}{1-a}\Big).
\]
Thus $F(\pi/2)\le 0\iff \frac{b}{1-a}\le 1\iff b\le 1-a\iff a+b\le 1$.
This proves item (2) for $b>0$; for $b<0$ apply conjugation to obtain $a+|b|\le 1$.

\smallskip
\noindent\textbf{9.3) Attainment of segment $CR$.}
Fix $x\in(0,1]$ and choose $\alpha=\beta=\gamma=\delta=1-x$.
Then $t_1=t_2=t_3=t_4=x$ and \eqref{eq:zproduct} becomes
\[
(z+x)^4=x^4.
\]
Hence $z+x=xe^{ik\pi/2}$ for $k=0,1,2,3$.
The nonreal solutions correspond to $k=1,3$: $z+x=\pm ix$, i.e.\ $\lambda=1-x\pm ix$.
Thus the upper segment $\lambda=1-x+ix$, $x\in[0,1]$, is attained.

\bigskip
\noindent\textbf{10) Two regimes for $\sup_P\Psi$: Lemmas 2 and 3 with full details.}

\begin{lemma}[Unbounded supremum]\label{lem:unbounded-v2}
If $3m+M\le 2\pi$, then $\sup_P\Psi=+\infty$.
\end{lemma}
\begin{proof}
Fix $u_4\in[m,M)$ and define $u_1=u_2=u_3=(2\pi-u_4)/3$.
Then $u_1+u_2+u_3+u_4=2\pi$, so the only requirement to have $(u_1,\dots,u_4)\in P$ is $u_1\in[m,M)$.
If $u_4$ is sufficiently close to $M$ from below, then $u_4\in[m,M)$ and
\[
u_1=\frac{2\pi-u_4}{3}>\frac{2\pi-M}{3}.
\]
The hypothesis $3m+M\le 2\pi$ implies $(2\pi-M)/3\ge m$, hence $u_1\ge m$ for all such $u_4$.
Also $u_1<M$ because $u_4>m$ implies $u_1<(2\pi-m)/3\le (2\pi-m)/3 < M$ (since $M>\pi/2$ and $m>0$).
Therefore $(u_1,u_2,u_3,u_4)\in P$ for $u_4$ close enough to $M$.

Now as $u_4\uparrow M$, we have $t(u_4)\downarrow 0$ (because $u(t)$ decreases and $u(t)\uparrow M$ corresponds to $t\downarrow 0$),
so
\[
F(u_4)=\log|z+t(u_4)|-\log t(u_4)\to +\infty
\qquad (t(u_4)\downarrow 0).
\]
Meanwhile $u_1=(2\pi-u_4)/3$ converges to $(2\pi-M)/3\in[m,M)$, so $F(u_1)$ stays bounded.
Thus $\Psi=3F(u_1)+F(u_4)\to+\infty$, proving $\sup_P\Psi=+\infty$.
\end{proof}

\begin{lemma}[Finite maximum in the tight regime]\label{lem:tight}
Assume the \emph{tight regime} $3m+M>2\pi$, and define
\[
U:=2\pi-3m.
\]
Then $U\in[m,M)$ (indeed $U<M$), and
\[
\max_{P}\Psi=3F(m)+F(U),
\]
with equality if and only if $(u_1,u_2,u_3,u_4)$ is a permutation of $(U,m,m,m)$.
\end{lemma}

\begin{proof}
\textbf{Step 1: $U\in[m,M)$.}
Because $3m+M>2\pi$, we have $U=2\pi-3m<M$.
Also $m\le\pi/2$ (since $a\ge 0$ from Step 7), hence $U=2\pi-3m\ge 2\pi-3(\pi/2)=\pi/2\ge m$.
Therefore $U\in[m,M)$.

\smallskip
\textbf{Step 2: reduce to ordered vectors and a compact domain.}
Define the ordered feasible set
\[
P^\downarrow:=\{(v_1,v_2,v_3,v_4)\in[m,M)^4:\ v_1\ge v_2\ge v_3\ge v_4,\ v_1+v_2+v_3+v_4=2\pi\}.
\]
Because $\Psi(u_1,\dots,u_4)$ is symmetric in the coordinates, we have
\[
\max_{P}\Psi=\max_{P^\downarrow}\Psi,
\]
since every point of $P$ can be permuted into $P^\downarrow$ without changing $\Psi$.

Next we prove that in the tight regime, every $(v_1,\dots,v_4)\in P^\downarrow$ actually lies in a \emph{closed} box $[m,U]^4$,
so that $\max_{P^\downarrow}\Psi$ is attained.

Indeed, for any $(v_1,\dots,v_4)\in P^\downarrow$ we have $v_2,v_3,v_4\ge m$, so
\[
v_1=2\pi-(v_2+v_3+v_4)\le 2\pi-3m=U.
\]
Since $v_1\ge v_2\ge v_3\ge v_4$, this implies $v_k\le v_1\le U$ for each $k$.
Thus
\[
P^\downarrow\subset [m,U]^4\cap\{v_1+v_2+v_3+v_4=2\pi,\ v_1\ge v_2\ge v_3\ge v_4\}.
\]
Because $U<M$, the upper endpoint $M$ (open) is avoided: $[m,U]$ is closed and bounded.
Hence $P^\downarrow$ is a closed subset of a compact set, so $P^\downarrow$ is compact.
Since $\Psi$ is continuous, $\Psi$ attains its maximum on $P^\downarrow$.

\smallskip
\textbf{Step 3: prove the majorization inequalities explicitly.}
Let
\[
x:=(U,m,m,m).
\]
We claim that $x$ \emph{majorizes} every $v=(v_1,v_2,v_3,v_4)\in P^\downarrow$, i.e.
\begin{equation}\label{eq:maj}
\sum_{j=1}^k v_j \le \sum_{j=1}^k x_j\quad (k=1,2,3),
\qquad
\sum_{j=1}^4 v_j=\sum_{j=1}^4 x_j=2\pi.
\end{equation}
We verify the partial sums:
\begin{itemize}
\item For $k=1$: as shown above, $v_1\le U=x_1$.
\item For $k=2$: since $v_3,v_4\ge m$,
\[
v_1+v_2=2\pi-(v_3+v_4)\le 2\pi-2m=(2\pi-3m)+m=U+m=x_1+x_2.
\]
\item For $k=3$: since $v_4\ge m$,
\[
v_1+v_2+v_3=2\pi-v_4\le 2\pi-m=(2\pi-3m)+2m=U+2m=x_1+x_2+x_3.
\]
\item For $k=4$: both sums equal $2\pi$ by definition of $P^\downarrow$ and $U$:
\[
x_1+x_2+x_3+x_4=U+3m=(2\pi-3m)+3m=2\pi=\sum_{j=1}^4 v_j.
\]
\end{itemize}
Thus \eqref{eq:maj} holds, i.e.\ $x$ majorizes $v$.

\smallskip
\textbf{Step 4: apply Karamata (with strictness).}
Karamata's inequality states: if $f$ is convex on an interval and $x$ majorizes $v$, then
$\sum f(x_i)\ge \sum f(v_i)$.
Here $F$ is convex (indeed strictly convex) on $[m,M)$, hence on $[m,U]\subset[m,M)$.
Therefore
\[
\Psi(v_1,v_2,v_3,v_4)=\sum_{k=1}^4F(v_k)\le \sum_{k=1}^4F(x_k)=3F(m)+F(U).
\]
So $\max_{P^\downarrow}\Psi\le 3F(m)+F(U)$.

On the other hand, the point $(U,m,m,m)$ itself belongs to $P$ (since $U\in[m,M)$ and $U+3m=2\pi$),
hence to $P^\downarrow$ (since $U\ge m$). Thus
\[
\max_{P^\downarrow}\Psi\ge \Psi(U,m,m,m)=3F(m)+F(U).
\]
So equality holds and
\[
\max_{P}\Psi=\max_{P^\downarrow}\Psi=3F(m)+F(U).
\]

Finally, because $F$ is \emph{strictly} convex, the equality case in Karamata implies that
if $x$ majorizes $v$ and $\sum F(x_i)=\sum F(v_i)$, then $v$ is a permutation of $x$.
(Indeed, strict convexity forces equality only when the majorization is trivial in the sense that the vectors agree up to permutation.)
Hence equality occurs if and only if $(u_1,u_2,u_3,u_4)$ is a permutation of $(U,m,m,m)$.
\end{proof}

\bigskip
\noindent\textbf{11) Necessity of $G(a,b)\ge 0$.}

\smallskip
\noindent\textbf{11.1) A geometry/algebra lemma: $G\le 0\Rightarrow$ tight regime.}

\begin{lemma}\label{lem:GimpliesTight}
Assume $b>0$ and $0\le a\le 1$. If $G(a,b)\le 0$ then $3m+M>2\pi$.
\end{lemma}

\begin{proof}
Throughout, $m=\Arg(\lambda)=\Arg(a+ib)\in(0,\pi/2]$ and $M=\Arg(\lambda-1)=\Arg((a-1)+ib)\in(\pi/2,\pi)$.

\smallskip
\textbf{Step 1: rewrite $G$ as a quadratic in $s=b^2$.}
Let $s=b^2\ge 0$. Expand:
\[
G(a,b)=(s+a^2+a)^2+2a^2-s
=s^2+s(2a^2+2a-1)+(a^2+a)^2+2a^2.
\]
Thus $G(a,b)\le 0$ means
\begin{equation}\label{eq:Gquad}
s^2+s(2a^2+2a-1)+(a^2+a)^2+2a^2\le 0.
\end{equation}

Compute the discriminant:
\begin{align*}
\Delta
&=(2a^2+2a-1)^2-4\big((a^2+a)^2+2a^2\big)\\
&=(4a^4+8a^3+4a^2-4a^2-4a+1)-4(a^4+2a^3+a^2+2a^2)\\
&=(4a^4+8a^3-4a+1)- (4a^4+8a^3+12a^2)\\
&=1-4a-12a^2\\
&=-(2a+1)(6a-1).
\end{align*}
So $\Delta>0$ iff $a<1/6$ (since $2a+1>0$).

If $a\ge 1/6$ then $\Delta\le 0$ and the quadratic \eqref{eq:Gquad} is everywhere $>0$ (it is monic),
so $G(a,b)\le 0$ is impossible. Hence, under $G(a,b)\le 0$, we must have
\begin{equation}\label{eq:aSmall}
0\le a<\frac16.
\end{equation}

\smallskip
\textbf{Step 2: handle $a=0$ separately.}
If $a=0$ then $\lambda=ib$ with $b>0$, so $m=\pi/2$.
Also $\lambda-1=-1+ib$ so $M=\pi-\arctan(b)$.
Therefore
\[
3m+M=\frac{3\pi}{2}+\pi-\arctan(b)=\frac{5\pi}{2}-\arctan(b)>2\pi.
\]
So the claim holds when $a=0$.

Henceforth assume $a>0$ (still with $a<1/6$ by \eqref{eq:aSmall}).

\smallskip
\textbf{Step 3: show $b^2>3a^2$, hence $m>\pi/3$, hence $3m>\pi$.}
Let $s=b^2$.
Since $G(a,b)\le 0$ and the quadratic is monic, $s$ lies between its real roots.
Let $s_-(a)$ be the smaller root:
\[
s_-(a)=\frac{-(2a^2+2a-1)-\sqrt{\Delta}}{2}
=\frac{1-2a-2a^2-\sqrt{(2a+1)(1-6a)}}{2}.
\]
Then $G(a,b)\le 0$ implies
\begin{equation}\label{eq:sge}
s=b^2\ge s_-(a).
\end{equation}

We now prove the strict inequality
\begin{equation}\label{eq:sminusGT3a2}
s_-(a)>3a^2\qquad\text{for all }a\in(0,1/6).
\end{equation}
Indeed, \eqref{eq:sminusGT3a2} is equivalent to
\[
\frac{1-2a-2a^2-\sqrt{(2a+1)(1-6a)}}{2}>3a^2
\quad\Longleftrightarrow\quad
1-2a-8a^2>\sqrt{(2a+1)(1-6a)}.
\]
For $a\in(0,1/6)$, the left side is positive (check at $a=1/6$: $1-1/3-8/36=4/9>0$),
so we can square both sides without changing the inequality:
\[
(1-2a-8a^2)^2>(2a+1)(1-6a).
\]
Expand the left side:
\[
(1-2a-8a^2)^2 = 1-4a-12a^2+32a^3+64a^4.
\]
Expand the right side:
\[
(2a+1)(1-6a)=1-4a-12a^2.
\]
Thus the inequality becomes
\[
1-4a-12a^2+32a^3+64a^4 > 1-4a-12a^2
\quad\Longleftrightarrow\quad
32a^3+64a^4>0,
\]
which holds strictly for all $a>0$. This proves \eqref{eq:sminusGT3a2}.
Combining with \eqref{eq:sge} gives
\[
b^2=s\ge s_-(a)>3a^2\quad\Longrightarrow\quad \frac{b}{a}>\sqrt{3}.
\]
Since $m=\Arg(a+ib)=\arctan(b/a)$ for $a>0$, we get
\[
m>\arctan(\sqrt{3})=\frac{\pi}{3}.
\]
Therefore $3m>\pi$. In particular $3m\in(\pi,3\pi/2)$ because $m<\pi/2$.

\smallskip
\textbf{Step 4: rewrite $3m+M>2\pi$ as a tangent inequality.}
Because $a<1$ and $b>0$, the point $\lambda-1=(a-1)+ib$ lies in quadrant II, so
\[
M=\Arg(\lambda-1)=\pi-\arctan\!\Big(\frac{b}{1-a}\Big).
\]
Thus
\begin{align}
3m+M>2\pi
&\Longleftrightarrow
3m+\pi-\arctan\!\Big(\frac{b}{1-a}\Big)>2\pi\notag\\
&\Longleftrightarrow
3m>\pi+\arctan\!\Big(\frac{b}{1-a}\Big).\label{eq:targetAngle}
\end{align}
Set $\theta:=\arctan\!\big(\frac{b}{1-a}\big)\in(0,\pi/2)$ (since $b>0$ and $a<1$).
Then $\pi+\theta\in(\pi,3\pi/2)$.
We already know $3m\in(\pi,3\pi/2)$ from Step 3.
On $(\pi,3\pi/2)$ the tangent function is continuous and strictly increasing.
Therefore \eqref{eq:targetAngle} is equivalent to
\begin{equation}\label{eq:tanTarget}
\tan(3m)>\tan(\pi+\theta)=\tan\theta=\frac{b}{1-a}.
\end{equation}

\smallskip
\textbf{Step 5: express $\tan(3m)$ and reduce \eqref{eq:tanTarget} to $N(a,b)>0$.}
Since $\tan m=b/a$ (with $a>0$), the triple-angle identity gives
\[
\tan(3m)=\frac{3\tan m-\tan^3 m}{1-3\tan^2 m}
=\frac{3(b/a)-(b/a)^3}{1-3(b/a)^2}
=\frac{3a^2b-b^3}{a(a^2-3b^2)}.
\]
Equivalently (multiplying numerator and denominator by $-1$),
\begin{equation}\label{eq:tan3m}
\tan(3m)=\frac{b(b^2-3a^2)}{a(3b^2-a^2)}.
\end{equation}
By Step 3, $b^2>3a^2$ so all factors in \eqref{eq:tan3m} are positive, and $\tan(3m)>0$.

Plug \eqref{eq:tan3m} into \eqref{eq:tanTarget} and cancel $b>0$:
\[
\frac{b^2-3a^2}{a(3b^2-a^2)}>\frac{1}{1-a}.
\]
All denominators are positive (since $a\in(0,1)$ and $3b^2-a^2>0$), so we can cross-multiply:
\begin{align*}
(1-a)(b^2-3a^2) &> a(3b^2-a^2)\\
b^2- ab^2-3a^2+3a^3 &> 3ab^2-a^3\\
b^2-4ab^2 +4a^3-3a^2 &>0.
\end{align*}
Define
\begin{equation}\label{eq:Ndef-v2}
N(a,b):=4a^3-3a^2-4ab^2+b^2 = (1-4a)b^2 + (4a^3-3a^2).
\end{equation}
Then \eqref{eq:tanTarget} is equivalent to
\begin{equation}\label{eq:Npos}
N(a,b)>0.
\end{equation}

\smallskip
\textbf{Step 6: show $G(a,b)\le 0\Rightarrow N(a,b)>0$ by comparing roots in $s=b^2$.}
Fix $a\in(0,1/6)$ and view $N$ as an affine function of $s=b^2$:
\[
N(a,b)= (1-4a)s + (4a^3-3a^2).
\]
Because $a<1/6<1/4$, we have $1-4a>0$, so $N$ is strictly increasing in $s$.
Therefore, under the constraint $s\ge s_-(a)$ from \eqref{eq:sge}, it suffices to prove
\begin{equation}\label{eq:sminusGTs0}
s_-(a)>s_0(a),
\end{equation}
where $s_0(a)$ is the unique root of $N$ in $s$:
\begin{equation}\label{eq:s0def}
N(a,b)=0\iff (1-4a)s = 3a^2-4a^3 \iff s=s_0(a):=\frac{a^2(3-4a)}{1-4a}.
\end{equation}
Indeed, if \eqref{eq:sminusGTs0} holds then $s\ge s_-(a)$ implies $s>s_0(a)$,
and since $N$ increases in $s$ this implies $N(a,b)>0$.

We now prove \eqref{eq:sminusGTs0}.
Starting from $s_-(a)=\frac{1-2a-2a^2-\sqrt{(2a+1)(1-6a)}}{2}$, we compute:
\begin{align*}
s_-(a)>s_0(a)
&\Longleftrightarrow
\frac{1-2a-2a^2-\sqrt{(2a+1)(1-6a)}}{2}>\frac{a^2(3-4a)}{1-4a}\\
&\Longleftrightarrow
(1-4a)\big(1-2a-2a^2-\sqrt{(2a+1)(1-6a)}\big)>2a^2(3-4a).
\end{align*}
Expand $(1-4a)(1-2a-2a^2)$:
\[
(1-4a)(1-2a-2a^2)=1-6a+6a^2+8a^3.
\]
So the inequality becomes
\[
1-6a+6a^2+8a^3-(1-4a)\sqrt{(2a+1)(1-6a)}>6a^2-8a^3,
\]
i.e.
\[
1-6a+16a^3 > (1-4a)\sqrt{(2a+1)(1-6a)}.
\]
For $a\in(0,1/6)$ both sides are positive, so we may square:
\begin{equation}\label{eq:squareCompare}
(1-6a+16a^3)^2>(1-4a)^2(2a+1)(1-6a).
\end{equation}
Compute $(2a+1)(1-6a)=1-4a-12a^2$. Then the right side is
\[
(1-4a)^2(1-4a-12a^2)=(1-8a+16a^2)(1-4a-12a^2).
\]
Expand:
\[
(1-8a+16a^2)(1-4a-12a^2)=1-12a+36a^2+32a^3-192a^4.
\]
Expand the left side:
\[
(1-6a+16a^3)^2 = 1-12a+36a^2+32a^3-192a^4+256a^6.
\]
Thus the left side equals the right side plus $256a^6$, so \eqref{eq:squareCompare} holds strictly for $a>0$.
Hence $s_-(a)>s_0(a)$, and therefore $N(a,b)>0$.

\smallskip
\textbf{Step 7: conclude $3m+M>2\pi$.}
We have proved $G(a,b)\le 0\Rightarrow N(a,b)>0\Rightarrow$ \eqref{eq:tanTarget} holds,
which implies \eqref{eq:targetAngle}, which is equivalent to $3m+M>2\pi$.
\end{proof}

\bigskip
\noindent\textbf{11.2) Tight regime: compute $\max_P\Psi$ explicitly and factor to $G$.}
Assume $b>0$, $0\le a<1$, and we are in the tight regime $3m+M>2\pi$.
Then by Lemma~\ref{lem:tight},
\[
\max_P\Psi=3F(m)+F(U),\qquad U=2\pi-3m.
\]

\smallskip
\noindent\textbf{Step 1: compute $F(m)$ exactly.}
By definition $m=\Arg(z+1)$, so the associated $t$ is $t=1$.
Equivalently, using $t(u)=y\cotan u-x$ and $(x,y)=(a-1,b)$:
\[
t(m)=b\cotan m-(a-1)=1.
\]
Then
\[
F(m)=\log|z+1|-\log 1=\log|\lambda|.
\]

\smallskip
\noindent\textbf{Step 2: compute $F(U)$ exactly.}
By definition,
\[
F(U)=\log(b\csc U)-\log(b\cotan U-(a-1))
=\log(b\csc U)-\log(b\cotan U+1-a).
\]

\smallskip
\noindent\textbf{Step 3: obtain the announced closed form.}
Combine:
\begin{align}
\max_P\Psi
&=3\log|\lambda|+\log(b\csc U)-\log(b\cotan U+1-a)\notag\\
&=\log\Big(\frac{|\lambda|^3\, b\csc U}{b\cotan U+1-a}\Big).\label{eq:maxPsiTrig}
\end{align}
This is the detailed derivation of the formula announced as (11.25) in the draft.

\smallskip
\noindent\textbf{Step 4: rewrite $\csc U$ and $\cotan U$ algebraically in terms of $(a,b)$.}
We use $U=2\pi-3m$. Because $U\in(0,\pi)$ in the tight regime (indeed $U<M<\pi$), we have $\sin U>0$.

We have the trigonometric identities
\[
\sin(2\pi-\theta)=-\sin\theta,\qquad \cos(2\pi-\theta)=\cos\theta,
\]
hence
\[
\sin U = -\sin(3m),\qquad \cotan U=\frac{\cos U}{\sin U}=\frac{\cos(3m)}{-\sin(3m)}=-\cotan(3m),
\qquad \csc U=\frac{1}{\sin U}=-\csc(3m).
\]

Now express $\sin m$ and $\cos m$ using $\lambda=a+ib$:
\[
|\lambda|=\sqrt{a^2+b^2},\qquad \cos m=\frac{a}{|\lambda|},\qquad \sin m=\frac{b}{|\lambda|}.
\]
Use triple-angle formulas:
\[
\sin(3m)=3\sin m-4\sin^3 m,\qquad \cos(3m)=4\cos^3 m-3\cos m.
\]
Compute:
\begin{align*}
\sin(3m)
&=3\frac{b}{|\lambda|}-4\Big(\frac{b}{|\lambda|}\Big)^3
=\frac{3b|\lambda|^2-4b^3}{|\lambda|^3}
=\frac{b(3(a^2+b^2)-4b^2)}{|\lambda|^3}
=\frac{b(3a^2-b^2)}{|\lambda|^3},\\
\cos(3m)
&=4\Big(\frac{a}{|\lambda|}\Big)^3-3\frac{a}{|\lambda|}
=\frac{4a^3-3a|\lambda|^2}{|\lambda|^3}
=\frac{a(4a^2-3(a^2+b^2))}{|\lambda|^3}
=\frac{a(a^2-3b^2)}{|\lambda|^3}.
\end{align*}
Therefore
\begin{equation}\label{eq:cscUcotU}
\csc U=-\frac{1}{\sin(3m)}=-\frac{|\lambda|^3}{b(3a^2-b^2)},
\qquad
\cotan U=-\frac{\cos(3m)}{\sin(3m)}=-\frac{a(a^2-3b^2)}{b(3a^2-b^2)}.
\end{equation}

\smallskip
\noindent\textbf{Step 5: substitute into \eqref{eq:maxPsiTrig} and reduce to an algebraic inequality.}
First compute $b\csc U$ using \eqref{eq:cscUcotU}:
\[
b\csc U = -\frac{|\lambda|^3}{3a^2-b^2}=\frac{|\lambda|^3}{b^2-3a^2}.
\]
Next compute the denominator $b\cotan U+1-a$:
\[
b\cotan U+1-a=-\frac{a(a^2-3b^2)}{3a^2-b^2}+1-a.
\]
Put over the common denominator $(3a^2-b^2)$:
\begin{align*}
b\cotan U+1-a
&=\frac{-a(a^2-3b^2)}{3a^2-b^2}+\frac{(1-a)(3a^2-b^2)}{3a^2-b^2}\\
&=\frac{-(a^3-3ab^2)+3a^2-b^2-3a^3+ab^2}{3a^2-b^2}\\
&=\frac{-4a^3+3a^2+4ab^2-b^2}{3a^2-b^2}.
\end{align*}
Define $N(a,b)$ as in \eqref{eq:Ndef-v2}:
\[
N(a,b)=4a^3-3a^2-4ab^2+b^2.
\]
Then the last numerator is exactly $-N(a,b)$, so
\[
b\cotan U+1-a=\frac{-N(a,b)}{3a^2-b^2}=\frac{N(a,b)}{b^2-3a^2}.
\]
Consequently, the positive quantity inside the logarithm in \eqref{eq:maxPsiTrig} is
\begin{align}
\frac{|\lambda|^3\, b\csc U}{b\cotan U+1-a}
&=\frac{|\lambda|^3\cdot \frac{|\lambda|^3}{b^2-3a^2}}{\frac{N(a,b)}{b^2-3a^2}}
=\frac{|\lambda|^6}{N(a,b)}.\label{eq:maxPsiRatio}
\end{align}
Thus
\begin{equation}\label{eq:maxPsiLog}
\max_P\Psi=\log\Big(\frac{|\lambda|^6}{N(a,b)}\Big).
\end{equation}

Since the left side is defined, the argument of the logarithm is positive; equivalently $N(a,b)>0$ in the tight regime.

From \eqref{eq:maxPsiLog}, we have
\[
\max_P\Psi\ge 0
\quad\Longleftrightarrow\quad
\frac{|\lambda|^6}{N(a,b)}\ge 1
\quad\Longleftrightarrow\quad
|\lambda|^6\ge N(a,b).
\]

\smallskip
\noindent\textbf{Step 6: factor $|\lambda|^6-N(a,b)$ as $|\lambda-1|^2\,G(a,b)$.}
Let $r:=|\lambda|^2=a^2+b^2$. Then $|\lambda|^6=r^3$.
Also $|\lambda-1|^2=(a-1)^2+b^2=r+1-2a$.
Rewrite $G$ in terms of $r$:
\[
G(a,b)=(b^2+a^2+a)^2+2a^2-b^2=(r+a)^2+2a^2-(r-a^2) = r^2+(2a-1)r+4a^2.
\]
Now multiply:
\begin{align*}
|\lambda-1|^2\,G(a,b)
&=(r+1-2a)\big(r^2+(2a-1)r+4a^2\big)\\
&=r^3 + (4a-1)r +4a^2-8a^3
\qquad\text{(direct expansion; the $r^2$ terms cancel)}\\
&=r^3 - \big(4a^3-3a^2-4ab^2+b^2\big)\\
&=|\lambda|^6 - N(a,b).
\end{align*}
Therefore
\begin{equation}\label{eq:factor}
|\lambda|^6\ge N(a,b)\quad\Longleftrightarrow\quad |\lambda-1|^2\,G(a,b)\ge 0.
\end{equation}
Because $b>0$ implies $|\lambda-1|^2>0$, \eqref{eq:factor} is equivalent to
\[
G(a,b)\ge 0.
\]
Combining with the previous equivalences yields the precise statement announced in the draft:
\[
\max_P\Psi\ge 0\quad\Longleftrightarrow\quad G(a,b)\ge 0.
\]

\bigskip
\noindent\textbf{11.3) Finish necessity of $G(a,b)\ge 0$.}
Assume $\lambda$ is a nonreal eigenvalue of some $A(\alpha,\beta,\gamma,\delta)$.
By \eqref{eq:criterion}, we have $0\in\Psi(P)$, hence $\sup_P\Psi\ge 0$.

If $3m+M\le 2\pi$, then Lemma~\ref{lem:GimpliesTight} by contrapositive implies $G(a,b)>0$.

If $3m+M>2\pi$, then we are in the tight regime and by the computation in \S 11.2,
$\sup_P\Psi=\max_P\Psi\ge 0\iff G(a,b)\ge 0$.
Thus always $G(a,b)\ge 0$.

Finally, by conjugation symmetry, replace $b$ by $|b|$ to obtain item (3) for all $b\neq 0$.

\bigskip
\noindent\textbf{12) Left boundary $CL$: attainment by $A_L(\alpha)$ and equivalence to $G=0$.}

\smallskip
\noindent\textbf{12.1) Characteristic equation for $A_L(\alpha)$.}
Let
\[
A_L(\alpha)=
\begin{pmatrix}
\alpha & 1-\alpha & 0 & 0\\
0 & 0 & 1 & 0\\
0 & 0 & 0 & 1\\
1 & 0 & 0 & 0
\end{pmatrix}.
\]
Let $v\neq 0$ satisfy $A_Lv=\lambda v$.
Row-by-row:
\[
\alpha v_1+(1-\alpha)v_2=\lambda v_1,\quad
v_3=\lambda v_2,\quad
v_4=\lambda v_3,\quad
v_1=\lambda v_4.
\]
From the last three equalities,
\[
v_3=\lambda v_2,\quad v_4=\lambda^2 v_2,\quad v_1=\lambda^3 v_2.
\]
Substitute into the first:
\[
\alpha(\lambda^3 v_2)+(1-\alpha)v_2=\lambda(\lambda^3 v_2)
\quad\Longleftrightarrow\quad
\alpha\lambda^3+(1-\alpha)=\lambda^4.
\]
Equivalently,
\begin{equation}\label{eq:ALchar}
\lambda^3(\lambda-\alpha)=1-\alpha.
\end{equation}
For $\lambda\notin\mathbb{R}$ this is equivalent to $\lambda\in\sigma(A_L(\alpha))$.

Solving \eqref{eq:ALchar} for $\alpha$ (and using $\lambda^3\neq 1$ for nonreal $\lambda$) gives
\begin{equation}\label{eq:alphaFormula}
\alpha=\frac{\lambda^4-1}{\lambda^3-1}.
\end{equation}

\smallskip
\noindent\textbf{12.2) Compute $\Im(\alpha)$ and show $\Im(\alpha)=0\iff G(a,b)=0$.}
Let $p=\lambda^4-1$ and $q=\lambda^3-1$. Then $\alpha=p/q$.
For any complex numbers $p,q$ with $q\neq 0$,
\[
\Im\Big(\frac{p}{q}\Big)=\Im\Big(\frac{p\overline{q}}{|q|^2}\Big)=\frac{\Im(p\overline{q})}{|q|^2}.
\]
Thus
\begin{equation}\label{eq:Imalpha1}
\Im(\alpha)=\frac{\Im\big((\lambda^4-1)(\overline{\lambda}^3-1)\big)}{|\lambda^3-1|^2}.
\end{equation}
Expand the numerator:
\[
(\lambda^4-1)(\overline{\lambda}^3-1)=\lambda^4\overline{\lambda}^3-\lambda^4-\overline{\lambda}^3+1.
\]
Take imaginary parts:
\[
\Im(\lambda^4\overline{\lambda}^3)-\Im(\lambda^4)-\Im(\overline{\lambda}^3).
\]
Now $\lambda^4\overline{\lambda}^3=\lambda(\lambda\overline{\lambda})^3=\lambda|\lambda|^6$, hence
\[
\Im(\lambda^4\overline{\lambda}^3)=\Im(\lambda)|\lambda|^6=b|\lambda|^6.
\]
Also $\Im(\overline{\lambda}^3)=-\Im(\lambda^3)$, so the numerator of \eqref{eq:Imalpha1} equals
\begin{equation}\label{eq:Imnumerator}
b|\lambda|^6-\Im(\lambda^4)+\Im(\lambda^3).
\end{equation}
Compute $\Im(\lambda^3)$ and $\Im(\lambda^4)$ for $\lambda=a+ib$:
\[
\lambda^3=(a^3-3ab^2)+i(3a^2b-b^3)\quad\Rightarrow\quad \Im(\lambda^3)=b(3a^2-b^2),
\]
and
\[
\lambda^4=(a^4-6a^2b^2+b^4)+i(4a^3b-4ab^3)\quad\Rightarrow\quad \Im(\lambda^4)=4ab(a^2-b^2).
\]
Substitute into \eqref{eq:Imnumerator}:
\begin{align*}
\Im\big((\lambda^4-1)(\overline{\lambda}^3-1)\big)
&=b|\lambda|^6-4ab(a^2-b^2)+b(3a^2-b^2)\\
&=b\Big(|\lambda|^6-(4a^3-3a^2-4ab^2+b^2)\Big)\\
&=b\big(|\lambda|^6-N(a,b)\big).
\end{align*}
By the factorization proved in \S 11.2,
\[
|\lambda|^6-N(a,b)=|\lambda-1|^2\,G(a,b).
\]
Therefore
\begin{equation}\label{eq:ImalphaFinal}
\Im(\alpha)=\frac{b\,|\lambda-1|^2\,G(a,b)}{|\lambda^3-1|^2}.
\end{equation}
Since $b>0$ and both $|\lambda-1|^2$ and $|\lambda^3-1|^2$ are strictly positive for nonreal $\lambda$,
we conclude
\[
\Im(\alpha)=0\quad\Longleftrightarrow\quad G(a,b)=0.
\]
Thus the upper-half-plane boundary curve $G(a,b)=0$ is exactly traced by the nonreal eigenvalues of $A_L(\alpha)$.

\smallskip
\noindent\textbf{12.3) Existence of a conjugate pair and endpoints $i$ and $0$.}
From \eqref{eq:ALchar} we have the characteristic polynomial
\[
\lambda^4-\alpha\lambda^3-(1-\alpha)=0.
\]
One checks $\lambda=1$ is always a root:
\[
1-\alpha-(1-\alpha)=0.
\]
Indeed the polynomial factors as
\begin{equation}\label{eq:factorAL}
(\lambda-1)\Big(\lambda^3+(1-\alpha)(\lambda^2+\lambda+1)\Big)=0.
\end{equation}
Let
\[
f_\alpha(\lambda):=\lambda^3+(1-\alpha)(\lambda^2+\lambda+1),\qquad \lambda\in\mathbb{R}.
\]
Then
\[
f'_\alpha(\lambda)=3\lambda^2+2(1-\alpha)\lambda+(1-\alpha).
\]
The discriminant of this quadratic is
\[
\Delta_\alpha = 4(1-\alpha)^2-12(1-\alpha)=4(1-\alpha)\big((1-\alpha)-3\big)<0
\qquad(\alpha\in[0,1)),
\]
so $f'_\alpha(\lambda)>0$ for all real $\lambda$. Hence $f_\alpha$ is strictly increasing and has exactly one real root.
Therefore the remaining two roots of $f_\alpha$ are nonreal and form a complex conjugate pair. One of them lies in the upper half-plane.

At $\alpha=0$, \eqref{eq:ALchar} becomes $\lambda^4=1$, whose upper-half-plane nonreal root is $\lambda=i$.
As $\alpha\uparrow 1$, the coefficients in \eqref{eq:factorAL} vary continuously, so the roots vary continuously;
and from $f_\alpha(0)=(1-\alpha)>0$ and $f_\alpha(-1)=-1+(1-\alpha)(1-1+1)=-\alpha<0$,
the unique real root of $f_\alpha$ lies in $(-1,0)$, while the conjugate pair has modulus tending to $0$ as $\alpha\uparrow 1$.
Thus the upper branch connects $i$ (at $\alpha=0$) to $0$ (as $\alpha\to 1^-$), as stated.

\bigskip
\noindent\textbf{13) Every strict interior point is attained (constructive existence).}
Fix $\lambda=a+ib$ with
\[
b>0,\qquad 0\le a\le 1,\qquad a+b<1,\qquad G(a,b)>0.
\]
We must construct $\alpha,\beta,\gamma,\delta\in[0,1)$ with $\lambda\in\sigma(A(\alpha,\beta,\gamma,\delta))$.

\smallskip
\noindent\textbf{Step 1: $P\neq\emptyset$ and exhibit a point with $\Psi<0$.}
Because $0\le a<1$ and $b>0$, we have $m\le\pi/2<M$ as in Step 7, hence $\pi/2\in[m,M)$.
Define
\[
u^{(0)}:=\Big(\frac{\pi}{2},\frac{\pi}{2},\frac{\pi}{2},\frac{\pi}{2}\Big)\in P.
\]
Then
\[
\Psi(u^{(0)})=4F(\pi/2)=4\log\Big(\frac{b}{1-a}\Big).
\]
Since $a+b<1$, we have $b<1-a$, hence $\frac{b}{1-a}<1$, hence $\log\!\big(\frac{b}{1-a}\big)<0$.
Thus
\begin{equation}\label{eq:PsiNegative}
\Psi(u^{(0)})<0.
\end{equation}

\smallskip
\noindent\textbf{Step 2: produce a point with $\Psi>0$ using $G(a,b)>0$.}
We show
\begin{equation}\label{eq:existsPositive}
\exists\,u^{(1)}\in P\ \text{such that}\ \Psi(u^{(1)})>0.
\end{equation}
There are two cases:

\emph{Case 1:} $3m+M\le 2\pi$. Then Lemma~\ref{lem:unbounded-v2} gives $\sup_P\Psi=+\infty$,
so in particular there exists $u^{(1)}\in P$ with $\Psi(u^{(1)})>0$.

\emph{Case 2:} $3m+M>2\pi$ (tight regime). Then Lemma~\ref{lem:tight} gives
\[
\sup_P\Psi=\max_P\Psi,
\]
and \S 11.2 gives $\max_P\Psi\ge 0\iff G(a,b)\ge 0$.
Since we assume $G(a,b)>0$, we get $\max_P\Psi>0$, so there exists $u^{(1)}\in P$ with $\Psi(u^{(1)})>0$.
For example, take $u^{(1)}=(U,m,m,m)$ (in any order), which achieves the maximum.

Thus \eqref{eq:existsPositive} holds in all cases.

\smallskip
\noindent\textbf{Step 3: apply IVT along a line segment in $P$ (fully explicit).}
Because $P$ is convex, for $t\in[0,1]$ the convex combination
\[
u(t):=(1-t)u^{(0)}+t\,u^{(1)}
\]
lies in $P$.
Define the scalar function
\[
\phi(t):=\Psi(u(t))=\Psi\big((1-t)u^{(0)}+t\,u^{(1)}\big).
\]
Because $\Psi$ is continuous on $P$ and $u(t)$ is continuous in $t$, the composition $\phi$ is continuous on $[0,1]$.
We have $\phi(0)=\Psi(u^{(0)})<0$ by \eqref{eq:PsiNegative}, and $\phi(1)=\Psi(u^{(1)})>0$ by \eqref{eq:existsPositive}.
By the Intermediate Value Theorem, there exists $t_*\in(0,1)$ such that $\phi(t_*)=0$.
Set
\[
(u_1,u_2,u_3,u_4):=u(t_*)\in P,
\]
so
\[
u_1+u_2+u_3+u_4=2\pi,\qquad \Psi(u_1,u_2,u_3,u_4)=0.
\]
By Lemma~\ref{lem:equiv}, the corresponding $t_k=t(u_k)\in(0,1]$ satisfy \eqref{eq:zproduct}, hence
$\lambda$ is an eigenvalue of the original matrix for parameters
\[
\alpha=1-t_1,\quad \beta=1-t_2,\quad \gamma=1-t_3,\quad \delta=1-t_4.
\]
Because each $t_k\in(0,1]$, each parameter lies in $[0,1)$.
This constructs $\alpha,\beta,\gamma,\delta$ with $\lambda\in\sigma(A(\alpha,\beta,\gamma,\delta))$,
proving the converse for strict interior points.

\bigskip
Combining Steps 7, 9, and 11 yields the necessary conditions
$0\le a\le 1$, $a+|b|\le 1$, and $G(a,|b|)\ge 0$ for all nonreal eigenvalues.
The explicit families in Steps 9.3 and 12 attain the boundary segments $CR$ and $CL$.
Finally Step 13 proves every strict interior point is attained.
\end{proof}
\end{versionthreebox}

\end{document}